\def\eqref#1{equation~\ref{#1}}
\def\1{\bm{1}}
\def\ra{{\textnormal{a}}}
\DeclareMathAlphabet{\mathsfit}{\encodingdefault}{\sfdefault}{m}{sl}
\SetMathAlphabet{\mathsfit}{bold}{\encodingdefault}{\sfdefault}{bx}{n}
\newcommand{\jk}[1]{\textcolor{red}{#1}}
\newcommand{\mv}[1]{\textcolor{blue}{#1}}
\newcommand{\reals}{\mathbb{R}}
\newcommand{\I}{\mathcal{I}}
\newcommand{\J}{\mathcal{J}}
\newcommand{\T}{\mathcal{T}}
\newcommand{\Q}{\mathcal{Q}}
\newcommand{\cl}{\upsilon}
\newcommand{\p}{\beta}
\newcommand{\we}{w}
\newcommand{\la}{\nu}
\newcommand{\red}[1]{\textcolor{black}{#1}}
\newtheorem{theorem}{Theorem}[section]
\newtheorem{lemma}[theorem]{Lemma}
\newtheorem{remark}[theorem]{Remark}
\newtheorem{assumption}[theorem]{Assumption}
\newtheorem{corollary}[theorem]{Corollary}
\newtheorem{definition}[theorem]{Definition}
\begin{document}

\title{Learning to Schedule in Parallel-Server Queues\\ with Stochastic Bilinear Rewards}

\author{Jung-hun Kim and Milan Vojnovi{\' c}        
\thanks{Jung-hun Kim is with CREST, ENSAE Paris, France (email: junghun.kim@ensae.fr) }
\thanks{Milan Vojnovi{\' c} is with London School of Economics, United Kingdom (email: m.vojnovic@lse.ac.uk)}}





\maketitle

\begin{abstract}
We consider the problem of scheduling in multi-class, parallel-server queuing systems with uncertain rewards from job-server assignments. In this scenario, jobs incur holding costs while awaiting completion, and job-server assignments yield observable stochastic rewards with unknown mean values. The mean rewards for job-server assignments are assumed to follow a bilinear model with respect to features that characterize jobs and servers. Our objective is to minimize regret by maximizing the cumulative reward of job-server assignments over a time horizon, while keeping the total job holding cost bounded to ensure the stability of the queueing system. This problem is motivated by applications requiring resource allocation in network systems.

A central challenge is to control the tradeoff between reward maximization and fair allocation for the stability of the underlying queuing system (i.e., maximizing network throughput). To address this challenge, we propose a scheduling algorithm based on a weighted proportional fair criteria augmented with marginal costs for reward maximization, incorporating a bandit algorithm tailored for bilinear rewards. {
Our algorithm admits a regret--queue length tradeoff. For any fixed control parameter \(V>0\), it ensures a uniform expected queue length and time-average holding-cost bounds. For a target horizon \(T\), choosing \(V_T=\Theta(\sqrt{IT})\) at initialization yields \(\widetilde O((\sqrt I+d^2)\sqrt T+1/\delta)\) regret. Under this regret-optimized tuning, the corresponding expected queue length and time-average holding-cost bounds remain uniform over the execution time and scales as \(O(\sqrt{IT}+1/\delta)\) and \(O(\sqrt{IT}/\delta)\), respectively.}

\end{abstract}

\begin{IEEEkeywords}
Resource Allocation, Scheduling Jobs, Queuing,  Reward Maximization, Stability, Online Learning.
\end{IEEEkeywords}

\section{Introduction}

In this work, we address the problem of scheduling jobs in multi-class, parallel-server queuing systems—such as those found in data centers, edge computing infrastructures, and communication networks. In such systems, both flow types (jobs) and processing units (servers) can have heterogeneous characteristics, necessitating differentiated services. Assigning a job to a server or network function yields an observable stochastic reward—e.g., processing rate, or an application-specific quality of the job output dependent on the server assignment—with an unknown mean value that depends on the compatibility between job and server characteristics. Note that considering rewards of assignments accommodates assignment costs, treating them as negative rewards.  

Specifically, we consider the case of noisy rewards, where the rewards for job-server assignments follow a bilinear model based on the features characterizing jobs and servers. This reward model can capture complex interactions between job and server characteristics and can be leveraged to make effective scheduling decisions in uncertain environments, as demonstrated in our work. The scheduler's objective is to maximize the expected cumulative reward over
a prescribed horizon while controlling the expected job holding cost.



This problem arises in a wide range of networking systems where resource allocation decisions must be made under uncertainty. For example, in data centers and distributed cloud computing systems \cite{delay,quincy}, computational jobs composed of multiple tasks must be assigned to servers with heterogeneous processing capabilities and varying data locality preferences. A common goal is to maximize system throughput when processing dynamic workloads--a challenging task further exacerbated by uncertainty or lack of knowledge about certain system parameters. These parameters need to be inferred from noisy observations while simultaneously making effective scheduling decisions. Similar network resource allocation challenges occur in edge computing networks \cite{luo2021resource}, where tasks offloaded from user devices must be matched to nearby edge nodes under uncertain wireless conditions and variable server loads. 

Recently, such system optimization problems have gained renewed interest in the context of scheduling for Large Language Models (LLMs) \cite{LLM-LTR}. In these systems, multiple types of LLMs and diverse user query types (prompts) coexist, and assigning each query to an appropriate model is critical to maximize rewards—such as the quality of responses to user queries or query processing times—under unknown reward distributions. In these scenarios, it is essential to schedule queries in a way that balances reward maximization with system stability.

Learning to schedule in queuing systems has been studied due to its wide range of applications and the need to address unknown system parameters. Much of the existing research focuses on queuing system stability (i.e., bounded queue lengths) or other queuing-related performance objectives \cite{krishnasamy2016regret, KRJS21, stahlbuhk2021learning, freund2022efficient, sentenac2021decentralized, yang2023learning, huang2024lyapunov}. Recently, \cite{hsu2021integrated} considered the joint optimization problem in multi-class, parallel-server queuing systems, where the objective is to maximize rewards realized through job-server assignments while maintaining queuing system stability. They proposed an algorithm and showed that it achieves regret over a given time horizon that scales linearly with the time horizon. Notably, their work considers arbitrary mean rewards, not allowing the scheduler to leverage the observable information about the job and server features. 

In contrast, our work addresses the case where the unknown mean rewards follow a bilinear function of the job and server features, with unknown coefficient weights (parameters) that capture pairwise interactions between them. The key motivation behind our work is to propose a practical algorithm that leverages this feature information to achieve better regret scaling with respect to both the time horizon and the number of servers. Achieving this goal requires an algorithm that combines scheduling decisions with the inference of unknown reward parameters. Furthermore, we aim to bound the holding cost, which generalizes traditional queueing bounds studied in \cite{hsu2021integrated} by incorporating a weighted proportional fairness criterion.

\red{{The bilinear reward model provides a parsimonious representation of
compatibility between two types of entities, here job classes and server
classes, through their feature interactions. Such models have been used in
bilinear bandits \cite{jun2019bilinear} and related pairwise prediction
problems such as recommendation systems, inductive matrix completion, and
bioinformatics applications
\cite{jain2013provable,huang2019fibinet,natarajan2014inductive}. In our
setting, the entries of the matrix quantify how each job-side
feature interacts with each server-side feature. This captures effects such
as the compatibility between CPU demand and CPU capacity, memory demand and
memory capacity, data-locality features and server placement, or prompt/query
features and model-serving features. By sharing the same interaction matrix
across all job--server pairs, the bilinear structure allows observations from
one assignment to inform the reward estimates of other related assignments,
and is therefore more sample-efficient than an unstructured model that treats
all job--server rewards as unrelated parameters.}
}
In what follows, we provide a summary of our contributions.
\subsection{Summary of Contributions}

\begin{itemize}
    \item To address the tradeoff between reward maximization and queuing system stability, we propose a scheduling algorithm that dynamically allocates jobs to servers based on fair allocation with marginal costs. This is achieved by solving a system optimization problem that combines weighted proportional fair allocation criteria and the reward of job-server assignments. Importantly, the algorithm employs a bandit strategy for learning the unknown reward distribution.

   \item \red{We establish a regret--stability tradeoff for the proposed algorithm. For any fixed control parameter \(V>0\), the algorithm guarantees uniform expected total-queue stability: \[ \sup_{t\ge1} \mathbb E\!\left[\sum_{i\in\mathcal I}Q_i(t)\right] <\infty . \] It also guarantees uniform time-averaged expected holding-cost stability: \[ \sup_{t\ge1}\overline H_t(c)<\infty. \]  For a prescribed target horizon \(T\), choosing \(V_T=\Theta(\sqrt{IT})\) for $V$ before the algorithm starts and keeping it fixed throughout the run yields \[ R(T) = \widetilde O\!\left( (\sqrt I+d^2)\sqrt T+\frac{1}{\delta} \right), \] while the corresponding expected queue length and time-average holding-cost bounds remain uniform over the execution time and scales as \(O(\sqrt{IT}+1/\delta)\) and \(O(\sqrt{IT}/\delta)\), respectively.}


    \item Lastly, we present the results of numerical experiments to demonstrate the performance gains achieved by our algorithm over the best previously known algorithm. Additionally, we demonstrate that better mean holding costs can be achieved by using weighted proportional fair criteria. 
\end{itemize}






\subsection{Related Work}

Our work falls within the line of research on resource allocation optimization under uncertainty, with a particular focus on combining learning with optimization of resource allocation (e.g., \cite{cmu,capacity,NS19,LMS19,shah2020adaptive,KRJS21,JKK21,stahlbuhk2021learning,yang2023learning,zhao-fluct,Fu,hsu2021integrated,huang2024lyapunov,sentenac2021decentralized,freund2022efficient,yang2023learning,huang2024lyapunov}).
The reward maximization aspect of our scheduling problem formulation has connections with minimum-cost scheduling used in cluster computing systems, as discussed in \cite{quincy,firmament}, where cost-based scheduling is employed to prioritize the allocation of tasks near the data that needs processing, with fixed and known task-server allocation costs. In our work, we consider that allocation costs (or rewards) are assumed to be unknown a priori, and only noisy values are observed as tasks are assigned to servers. More importantly, the allocation policies we examine aim to maximize reward while minimizing job holding cost, ensuring queuing system stability with fair allocation.

Stability has been extensively studied in the context of network resource allocation, including research on Maxweight (Backpressure) policies \cite{tassiulas,mckeown,bramson}, proportional fair allocation \cite{kelly97,kelly,mas}, and other notions of fair allocations such as $\alpha$-fair allocations \cite{mo} and $(\alpha,g)$-switch policies \cite{walton2014concave}. 
Our work differs in that we consider allocation policies that require the estimation of unknown mean rewards and the reward maximization objective, in addition to ensuring queuing system stability. 

Some works have examined the queuing system control problem, concerned with achieving stability or minimizing total queue length under uncertain job service times, as explored in \cite{krishnasamy2016regret,KRJS21,sentenac2021decentralized,freund2022efficient,yang2023learning,huang2024lyapunov}, which is different from the reward maximization objective in our problem setting. Furthermore, we address job holding costs by considering weighted proportional fair allocations. There is work on learning proportional fair allocations \cite{propfair}. However, our work differs in several aspects: firstly, our objective combines fairness of allocation and the reward of assignments; secondly, we consider uncertainty in the reward of assignments; and thirdly, we address dynamic arrivals and departures of jobs. 
 
 The work most closely related to ours is on the queuing system control problem, as studied by \cite{hsu2021integrated}, which considered the case of unstructured rewards, making no use of job or server features. In contrast, we consider systems where the scheduler can leverage access to job or server features, under the assumption of structured rewards of job-server assignments, following a bilinear model. The structure of rewards enables us to design algorithms that extend learning to encompass job classes collectively, contrasting with the approach in \cite{hsu2021integrated} where learning is carried out for each job separately. 
As a result, we achieve a sublinear regret bound with respect to the time horizon. Moreover, our work accommodates more general, weighted proportional fairness criteria, and we derive bounds on the holding costs, allowing for discrimination between job classes.



\subsection{Organization of the Paper}

In Section~\ref{sec:pf}, we present the problem formulation and define the notation used throughout the paper. In Section~\ref{sec:alg}, we propose algorithms and analyze their regret and mean holding cost. In Section~\ref{sec:dist}, we introduce distributed iterative algorithms for computing allocations and their convergence analysis. In Section~\ref{sec:num}, we show the results of our numerical experiments. Concluding remarks are presented in Section~\ref{sec:conc}. Proofs of the theorems and additional results are included in the appendix.


\section{Problem Formulation}
\label{sec:pf}

This section introduces the resource allocation problem we address and outlines the criteria employed for performance evaluation.

\subsection{System Assumptions} We consider a multi-class, parallel-server queuing system, with $\mathcal{I} = \{1,\ldots, I\}$ and $\mathcal{J} = \{1,\ldots, J\}$ denoting the sets of job and server classes, respectively. Each server class $j\in \mathcal{J}$ has $n_j$ servers. Let $n$ denote the total number of servers, i.e., $n=\sum_{j\in \mathcal{J}}n_j$. Jobs of class $i\in \mathcal{I}$ arrive into the system at rate $\lambda_i$, have a mean service time of $1/\mu_i$, and are queued until served. Time is assumed to be discrete. Let $Q_i(t)$ denote the number of class-$i$ jobs in the system at time $t$ with $Q_i(0):=0$.

Each job in class $i\in \I$ has a feature vector $u_i\in \reals^{d_1}$ and each server in class $j\in \J$ has a feature vector $v_j\in \reals^{d_2}$, both of which are known to the scheduler. For simplicity, we assume $d_1=d_2=d$. Assigning a job of class $i$ to a server of class $j$ yields a stochastic reward observed by the scheduler. The mean reward $r_{i,j}$ for assigning a job of class $i$ to a server of class $j$ is assumed to be of the bilinear form: $r_{i,j} = u_i^\top\Theta v_j$, where $\Theta\in \mathbb{R}^{d\times d}$ is an unknown parameter matrix. At each time step, the scheduler assigns available jobs to the servers, with at most one job assigned per server. An illustration of the main components of the system is provided in Figure~\ref{fig:bilinear1}.

\begin{figure}[t]
\centering
\includegraphics[width=0.9\linewidth]{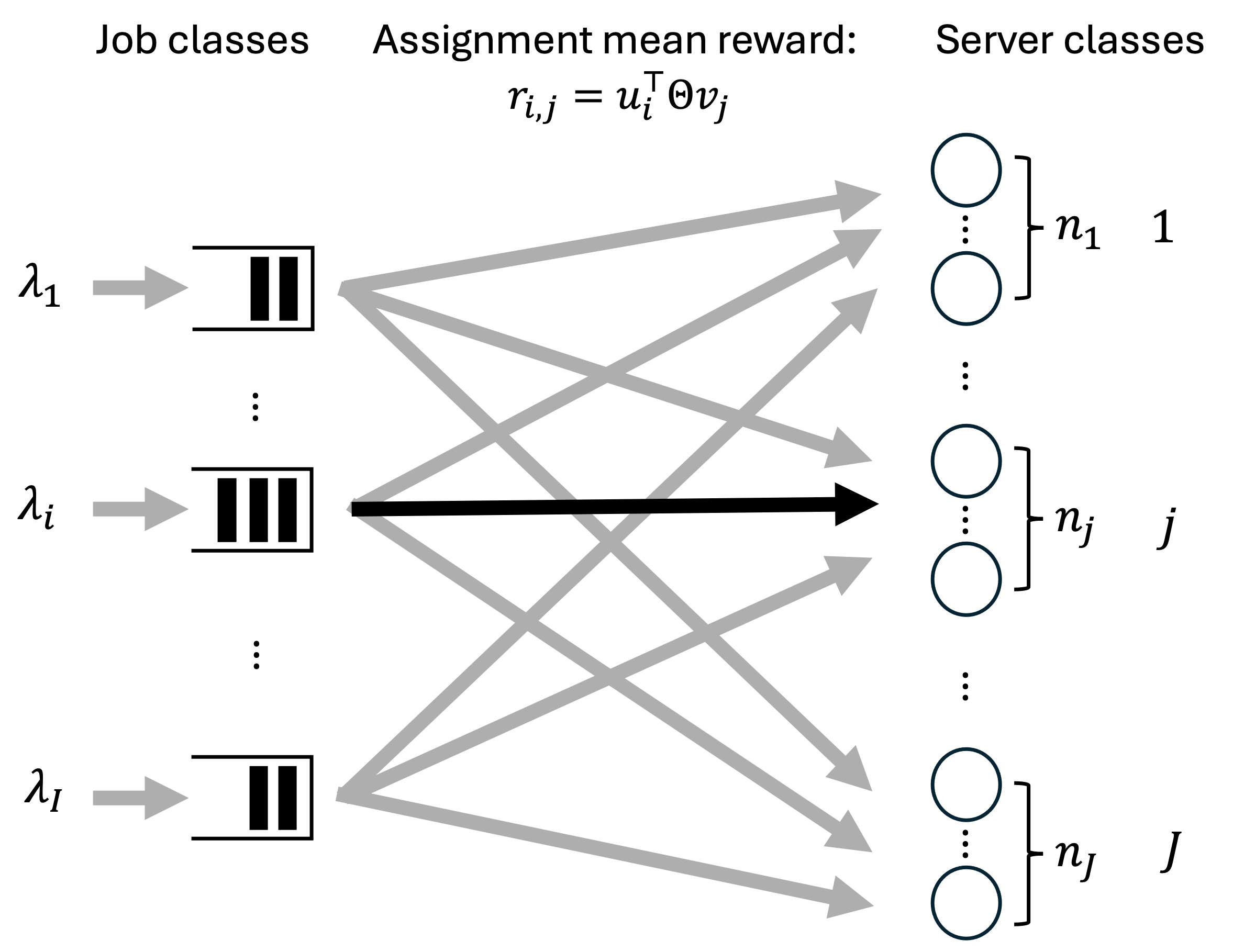}
\caption{Allocating a job of class $i$ to a server of class $j$ yields a stochastic reward according to a bilinear model, with mean value $r_{i,j} = u_i^\top \Theta v_j$, where $u_i$ and $v_j$ are known feature vectors representing job and server classes and $\Theta$ is an unknown parameter. 
}
\label{fig:bilinear1}
\end{figure}

The goal of the scheduler is to maximize the expected cumulative reward of the job-server assignments over a given time horizon while maintaining a bounded expected job holding cost at each time step. The scheduler's objectives are defined more formally below.

\subsection{Regret and Holding Costs} For evaluating the performance of a scheduling algorithm, we consider two criteria: the expected total reward of assignments over a time horizon of $T$ time steps and the holding cost of jobs at any given time step. Regarding the reward objective, we define regret as the difference between the cumulative reward of an oracle policy and the cumulative reward of the algorithm over the time horizon. 

The oracle is assumed to have knowledge of the mean rewards $r_{i,j}$ and the traffic intensity parameters $\rho_i:=\lambda_i/\mu_i$, representing the load induced by arriving jobs of class $i$. Let the oracle policy $p^*=(p^*_{i,j}:  i\in\I, j\in\J)$ be defined as an optimal fractional allocation of jobs to servers according to the following oracle optimization problem:
$$
\begin{array}{rl}
\mbox{maximize} & \sum_{i\in \I}\sum_{j\in \J}r_{i,j} \rho_i p_{i,j}\\
\mbox{subject to} & \sum_{i\in \I}\rho_i p_{i,j}\le n_j \mbox{ for all } j\in \J\\
& \sum_{j \in \J}p_{i,j}= 1 \mbox{ for all } i\in \I\\
\mbox{over} & p_{i,j}\geq 0, \hbox{ for all } i\in \I, j\in \J.
\end{array}
$$
 The term $r_{i,j} \rho_i p_{i,j}$ represents the expected reward per unit time obtained by routing the load of arriving class-$i$ jobs to server class $j$. Let  $r^* = \sum_{i\in \I}\sum_{j \in \J}r_{i,j}\rho_i p_{i,j}^*$ be the optimal value of the oracle optimization problem.

The \emph{regret} of an algorithm with expected allocation $y(t)=(y_{i,j}(t); i\in \I(t), j\in \J)$ for all $t\in[T]$ is defined as
 \begin{align}
 R(T)\!=\! r^* T-\mathbb{E}\left[\sum_{t=1}^T\sum_{i\in \I(t)}\sum_{j \in \J}
 r_{i,j}  y_{i,j}(t)\right]
\label{equ:regdef}
 \end{align}
 where $\I(t)$ denotes the set of job classes waiting to be served at time $t$, i.e., $\I(t)=\{i\in \I: Q_i(t) > 0\}$.

The \emph{holding cost} at time step $t$ is defined as $\sum_{i\in \I} c_i Q_i(t)$, where $c = (c_1, \ldots, c_I)$ are given marginal holding cost parameters. We denote the mean holding cost as:
\begin{equation}
H(t; c) = \sum_{i\in\I}c_i\mathbb{E}[Q_i(t)].
\label{equ:hcostdef}
\end{equation} 
Specifically, when $c_i=1$ for all $i\in \I$, the holding cost corresponds to the total queue length.

The goal of the scheduler is to minimize finite-horizon regret while ensuring a time-uniform bound on the expected holding cost.


\subsection{Additional Assumptions and Notation}

In order to establish theoretical guarantees for a scheduling algorithm, we need to make assumptions about job arrivals, service times, and the uncertainty of rewards.

\paragraph{Arrivals and service times} The arrivals of jobs to the system are assumed to be according to a Bernoulli process; that is, in each time step there is either a single job arrival with probability $\lambda \in (0,1]$ or no job arrivals. The classes of jobs are assumed to be independent and identically distributed across job arrivals, with a job belonging to a class $i$ with probability $\lambda_i / \lambda$. The service times of the jobs are assumed to be independent between jobs and follow the geometric distribution with mean $1/\mu_i$ for jobs in class $i$. Following the standard terminology, we refer to $1/\mu_i$ as the \emph{mean service time} and $\mu_i\in(0,1]$ as the \emph{service rate}. The assumptions about the job arrivals and service times are standard in the analysis of queuing systems.

Following standard queuing system terminology, we refer to $\rho_i=\lambda_i/\mu_i$ as \emph{traffic intensity} of job class $i$, and let $\rho=\sum_{i\in \I}\rho_i$ denote the total traffic intensity.

For service times, we focus on the case of homogeneous service times, which is a special case such that $\mu_i = \mu$ for every $i\in \I$. This simplifies our analysis and the presentation of the main results. It is worth noting that this assumption was also made in \cite{hsu2021integrated}. However, we also discuss extensions of our results to general mean service times. For clarity, we treat $\mu$ and $\lambda_i$ as constants in the statements of the main results, while their full dependence is made explicit in the proofs.

\paragraph{Stability conditions} The queuing system stability region is the set of job arrival rates $(\lambda_1,\ldots,\lambda_I)$ for which the condition $\rho/n < 1$ holds \cite{tassiulas,mandelbaum2004scheduling,freund2022efficient,sentenac2021decentralized,gaitonde2021virtues}.  We assume that the queuing system satisfies the following stability condition:
\begin{assumption}
$\rho/n < 1$.    
\end{assumption}
 We define the \emph{traffic intensity slackness} $\delta=n-\rho$, which ensures that $\delta>0$ under the above assumption. Intuitively, a smaller value of $\delta$ indicates that the system is closer to instability (i.e., the total queue length grows to infinity). We define the following stability notions of queueing system:

\red{ 
\begin{definition}[Uniform expected queue-length stability] \label{def:queue_length_stability} A scheduling policy is said to be uniformly expected queue-length stable if there exists a finite constant \(B_Q<\infty\), independent of time \(t\), such that \[ \sup_{t\ge1} \mathbb E\left[\sum_{i\in\mathcal I}Q_i(t)\right] \le B_Q . \] \end{definition} 
\begin{definition}[Uniform time-averaged expected holding-cost stability] \label{def:holding_cost_stability} Fix a cost vector \(c\in\mathbb R_+^I\). For \(t\ge1\), define the time-averaged expected holding cost by \[ \overline H_t(c) := \frac1t\sum_{s=0}^{t-1}H(s;c) = \frac1t\sum_{s=0}^{t-1} \sum_{i\in\mathcal I}c_i\mathbb E[Q_i(s)] . \] A scheduling policy is said to be uniformly time-averaged expected holding-cost stable if there exists a finite constant \(B_c<\infty\), independent of the averaging horizon \(t\), such that \[ \sup_{t\ge1}\overline H_t(c) \le B_c . \] \end{definition}
}

\paragraph{Rewards of job-server assignments} The rewards of the job-server assignments are the sum of a mean value and a zero mean random variable according to a sub-Gaussian distribution. A random variable $\varepsilon$ is said to be sub-Gaussian with variance proxy $\sigma^2$ ($\sigma^2$-sub-Gaussian) if $\mathbb{E}[\varepsilon]=0$ and its moment generating function satisfies $\mathbb{E}[e^{s \varepsilon}]\leq e^{\sigma^2s^2/2}$ for all $s \in \mathbb{R}$. This assumption is standard in the learning literature. It includes a wide range of distributions, such as Gaussian and bounded random variables. 

We note that the bilinear model corresponds to a linear model by using the change of variables\footnote{For any given matrix $A$, $\mathrm{vec}(A)$ denotes the vector formed by stacking the rows of $A$.} $\theta=\mathrm{vec}(\Theta)\in\mathbb R^{d^2}$ and $z_{i,j}=\mathrm{vec}(u_iv_j^\top)\in\mathbb R^{d^2}$. Then, we can express the mean rewards as $r_{i,j} = u_i^\top\Theta v_j=z_{i,j}^\top\theta$, for $i\in \mathcal{I}$ and $j\in \mathcal{J}$. We assume that $\|\theta\|_2\le 1$ and $\|z_{i,j}\|_2\le 1$ for all $i\in \mathcal{I}$ and $j \in \mathcal{J}$, which ensures that $r_{i,j} \in [-1,1]$ for all $i\in \mathcal{I}$ and $j\in \mathcal{J}$.

\paragraph{Additional notation} Let $\Q_i(t)$ denote the set of jobs of class $i$ that are in the system at time $t$; let $Q_i(t)=|\Q_i(t)|$. Let $\Q(t) = \cup_{i\in \I}\Q_i(t)$. We denote by $A_i(t)$ and $D_i(t)$ the number of arrivals and departures of jobs of class $i$ at time $t$, respectively, and define $A(t)=\sum_{i\in\I}A_i(t)$ and $D(t)=\sum_{i\in\I}D_i(t)$.  Note that $Q_i(t+1)=\max\{Q_i(t)+A_i(t+1)-D_i(t),0\}$. Let $\cl:\cup_{t\geq 0} \Q(t)\rightarrow \I$ denote the mapping of jobs to the corresponding job classes. 

At each time step $t$, the scheduler assigns jobs in $\mathcal{Q}(t)$ to the servers. Let $\tilde{\mathcal{Q}}(t)$ denote the set of successfully assigned jobs. The number of successfully assigned jobs is less than or equal to the number of servers $n$. At the end of each time step $t$, for each assignment of a job $k\in\tilde{\mathcal{Q}}_i(t)$ to a server in class $j$, the scheduler observes a stochastic reward of value $\xi_{k,t} = r_{i,j} + \eta_{k,t}$, where $\eta_{k,t}$ follows a 1-sub-Gaussian distribution. 

Furthermore, we use the following additional notation: the weighted norm of a vector $x\in\reals^{d}$ with respect to a weight matrix $A\in \reals^{d\times d}$ is defined as $\|x\|_A=\sqrt{x^\top A x}$. We use the big-O notation $\tilde{O}(\cdot)$ to ignore poly-logarithmic factors.


\section{Algorithm and Theoretical Guarantees}
\label{sec:alg}

In this section, we present our main results, which comprise a scheduling algorithm and theoretical guarantees on its performance with respect to regret and mean holding cost.

\subsection{Algorithm}

\begin{algorithm}[t]
\LinesNotNumbered
\caption{Scheduling Algorithm for Bilinear Rewards}  
\textbf{Input:} target horizon \(T\), control parameter \(V>0\), weights
\(w=(w_i:i\in\mathcal I)\), and \(\gamma>1\)\\
\textbf{Initialize: }
$\Lambda^{-1}\leftarrow (1/n)I_{d^2\times d^2}$, $b\leftarrow 0_{d^2\times 1}$

\For{$t=1,2, \ldots$}{

\tcp{Optimize allocation}

$\hat{\theta}\leftarrow \Lambda^{-1}b$

$\tilde{r}_{i,j}(t)\leftarrow \Pi_{[-1,1]}(z_{i,j}^\top \hat{\theta}+\sqrt{z_{i,j}^\top \Lambda^{-1} z_{i,j} }\beta(t))$ for $i\in \I$ and $j\in \J$ 

Set $(y_{i,j}(t): i\in \I, j\in \J)$ to the solution of (\ref{eq:weighted})

\tcp{Assign jobs to servers}

\For{$j=1, \ldots, J$}{
\For{$l=1, \ldots, n_j$}{

Choose a job $k_{t,j,l}\in \Q(t)$ randomly with probabilities $y_{\cl(k),j}/(n_jQ_{\cl(k)}(t))$ for $k\in \Q(t)$, 

 or, choose no job $k_{t,j,l}=k_0$ with probability $1-\sum_{k\in\Q(t)}y_{\cl(k),j}/(n_jQ_{\cl(k)}(t))$

\If{job $k_{t,j,l}\neq k_0$ }{

Assign job $k_{t,j,l}$ to server $l$ of class $j$  to process one service unit of this job

Observe reward $\xi_{t,j,l}$ of assigned job $k_{t,j,l}$

}}}

\tcp{Update}

\For{$j=1, \ldots, J$}{

\For{$l=1, \ldots, n_j$}{

\If{$k_{t,j,l}\neq k_0$}{
$i\leftarrow \cl(k_{t,j,l})$

$\Lambda^{-1} \leftarrow \Lambda^{-1}-\frac{\Lambda^{-1} z_{i,j}z_{i,j}^\top \Lambda^{-1}}{1+z_{i,j}^\top \Lambda^{-1} z_{i,j}}$

$b\leftarrow b + z_{i,j}\xi_{t,j,l}$
}}}}
\label{alg:Alg1}
\end{algorithm}

We introduce a scheduling algorithm that uses upper confidence bound (UCB) indices for job-server assignment rewards and weighted proportional fair allocation to account for job priorities and ensure queuing system stability, inspired by \cite{hsu2021integrated,abbasi2011improved, jun2019bilinear}. The algorithm is described in pseudocode as Algorithm~\ref{alg:Alg1}. The different steps performed by the algorithm are detailed below.

\subsubsection{Expected Allocation} 

The algorithm uses the expected allocation $y(t)=(y_{i,j}(t): i\in\I, j\in \J)$ at each time step $t$, which is the solution to the following convex optimization problem, with $\tilde{r}$ and $Q$ set to $\tilde{r}(t)$ and $Q(t)$, respectively:  
\begin{align}
    \begin{array}{rl}
\mbox{maximize} & P(y; \tilde{r},\gamma) + \frac{1}{V} F(y; \we, Q)\\
\mbox{subject to} & \sum_{i\in \I}y_{i,j}\leq n_j, \hbox{ for all } j\in \J\\
\mbox{over} & y_{i,j}\ge 0, \hbox{ for all } i\in \I ,j\in \J 
\end{array}\label{eq:weighted}
\end{align}
where 
\begin{align*}
P(y; \tilde{r}, \gamma) :=& \sum_{i\in \I}\sum_{j\in \J} (\tilde{r}_{i,j} - \gamma)y_{i,j}\\
F(y; \we, Q) :=& \sum_{i\in \I} Q_i \we_i  \log\left(\sum_{j\in \J}y_{i,j}\right),
\end{align*}
and $w\in \mathbb{R}^{I}_+$, $V> 0$, and $\gamma >1$ are tunable parameters. 

In the objective function in (\ref{eq:weighted}), $P(y; \tilde{r}, \gamma)$ accounts for the objective of maximizing the rewards of the job-server assignments, while $F(y; w, Q)$ accounts for the objective of weighted proportional fair allocation, ensuring fairness of allocation and stability of the queuing system. 

For each $i\in \I$ and $j\in \J$, we can interpret $\gamma - \tilde{r}_{i,j}(t) > 0$ as a marginal assignment cost. The parameter $\gamma$ allows us to control server utilization. Note that conditions $\gamma > 1$ and $\tilde{r}_{i,j}(t)\leq 1$ for all $i\in \I$, $j\in \J$ and $t\in[T]$ ensure the non-negativity of marginal assignment costs. 

The parameters $w$ allow for a weighted version of proportional fair allocation, for each class of jobs $i\in \I$, giving higher priority to classes with larger $w_i$. Additionally, the parameter $V$ allows us to control the trade-off between the weighted proportional fair allocation and the marginal cost in the objective. Increasing $V$ places greater emphasis on cost minimization (or reward maximization), thus reducing the influence of considerations of fairness and stability.

\subsubsection{Randomized Assignment} At each time step $t$, after computing the expected allocation $y(t)$, the algorithm utilizes a randomized allocation to assign jobs to the servers, ensuring that the expected allocation is consistent with $y(t)$. This is achieved through the following procedure. Let $\cl(k)$ denote the class of job $k$. Then, for each server class $j\in \mathcal{J}$, a job $k\in \mathcal{Q}(t)$ is selected with probability $y_{\cl(k),j}(t)/(n_jQ_{\cl(k)}(t))$. When a server of class $j\in \J$ and index $l\in [n_j]$ selects a job $k_{t,j,l}\in\mathcal{Q}(t)$, the algorithm assigns this job to the server and observes the value of the corresponding stochastic reward.

It should be noted that randomized assignment makes the scheduling policy non-work-conserving. In the case where $\sum_{i\in \mathcal{I}(t)} y_{i,j}(t) < n_j$ for some server class $j\in \mathcal{J}$ and some time step $t\geq 1$, a server of class $j\in \mathcal{J}$ may not be assigned a job at time step $t$ with probability $1-\sum_{k\in \mathcal{Q}(t)}y_{\cl(k),j}(t)/(n_jQ_{\cl(k)}(t))$. This allows the scheduler to defer assignment and avoid low-reward assignments for better decision making in future rounds. 
 
\subsubsection{Bilinear Bandit Strategy} The algorithm employs a UCB strategy for bilinear bandits, which estimates the unknown parameter $\theta$ of the bilinear stochastic reward model from the observed partial feedback and balances the trade-off between exploration and exploitation. In Algorithm~\ref{alg:Alg1}, for an assigned job $k_{t,j,l}$ in the $l$-th selection by server class $j$ at time $t$, the algorithm observes the stochastic reward value $\xi_{l,j}(t)$. In the following, we simplify the notation by denoting the feature information for $k_{t,j,l}$ as $z_{l,j}(t)=z_{\cl(k_{t,j,l}), j}$, where $\cl(\cdot)$ maps a job to its class. Also, for clarity in analysis, we use $\hat{\theta}(t), \Lambda(t), b(t)$ for $\hat{\theta},\Lambda, b$ in time step $t$ of Algorithm~\ref{alg:Alg1}. At each time step $t$, the algorithm utilizes an estimator of $\theta$ defined as:
\[
\hat{\theta}(t)=\Lambda(t)^{-1}b(t)
\]
where
\[
\Lambda(t)= nI_{d^2\times d^2}+\sum_{s=1}^{t-1}\sum_{j\in \J}\sum_{l\in[n_j]}z_{l,j}(s)z_{l,j}(s)^\top
\]
and
\[
b(t)=\sum_{s=1}^{t-1}\sum_{j\in \J}\sum_{l=1}^{n_j} z_{l,j}(s)\xi_{l,j}(s).
\]
The UCB indices $\tilde{r}(t)$ are defined as, for $i\in \I$ and $j\in \J$:
\[
\tilde{r}_{i,j}(t)=\Pi_{[-1,1]}\left(\max_{\theta'\in {\mathcal C}(t)}\{z_{i,j}^\top \theta'\}\right),
\]
where $\Pi_{[-1,1]}(x):=\max\{-1,\min\{x,1\}\}$ and ${\mathcal C}(t)$ is the confidence set defined as:
\[
{\mathcal C}(t)=\left\{\theta'\in \reals^{d^2}:\|\hat{\theta}(t)-\theta'\|_{\Lambda(t)}\le{\beta(t)}\right\}
\]
with ${\beta(t)}= \sqrt{d^2\log(tT)}+\sqrt{n}$. 
It can be easily shown that
$$
\tilde{r}_{i,j}(t) =\Pi_{[-1,1]}\left( z_{i,j}^\top \hat{\theta}(t) + \sqrt{z_{i,j}^\top \Lambda(t)^{-1}z_{i,j}} {\beta(t)}\right).
$$
The algorithm iteratively computes the inverse matrix $\Lambda(t)^{-1}$ using the Sherman-Morrison formula \cite{H89}, exploiting the fact that $\Lambda(t)$ is a weighted sum of an identity matrix and rank-1 matrices. This has a computational cost of $O(d^4+IJd^4)$ per round for computing the mean reward estimators, where the first term accounts for computing the inverse matrix $\Lambda(t)^{-1}$ and the second term accounts for computing the mean reward estimators.


\subsection{Regret, Queue Length, and Holding Cost Bounds}
\label{sec:Alg1}

In this section, we provide regret and holding cost bounds for Algorithm~\ref{alg:Alg1}. 

\subsubsection{Regret Bound} We present a regret bound for Algorithm~\ref{alg:Alg1} in the following theorem. The proof is provided in Appendix~\ref{sec:regb1}. 

\begin{theorem} Given a target horizon $T\ge 1$,
for any $V > 0$ and constant $\gamma > 1$, the regret of Algorithm~\ref{alg:Alg1} is bounded as

\begin{equation}
R(T)
=\widetilde O\!\left(
   \bar\alpha_1V
   +\bar\alpha_2\frac1\delta
   +\bar\alpha_3\frac{T}{V}
   +\bar\alpha_4\sqrt T
\right),
\label{eq:regret_final_scalar_queue}
\end{equation}
where 
\[
\bar\alpha_1
  =\frac{n}{\mu w_{\min}},
\qquad
\bar\alpha_2
  =\frac{1}{\mu}\left(
      n+\rho+2c_\gamma n^3\frac{w_{\max}}{w_{\min}}
    \right),
\]
\[
\bar\alpha_3
  =\frac{n^2}{2}(1+c_\gamma^2\mu)
      \max_{i\in\I}\frac{w_i}{\rho_i}
    +\frac12\sum_{i\in\I}w_i,
\qquad
\bar\alpha_4=d^2\sqrt n+dn.
\]

Here, $w_{\min} = \min_{i \in \mathcal{I}} w_i$ and $w_{\max} = \max_{i \in \mathcal{I}} w_i$.
\label{thm:regbd1}
\end{theorem}


To highlight the key dependencies in the regret bound, focusing on $T$, $V$, $I$, $d$, and $\delta$, while treating other terms as constants, we can simplify it as 
\begin{equation}
R(T)=  \tilde{O}\left(V+\frac{1}{\delta}+ \frac{1}{V}IT+d^2\sqrt{T}\right).
\label{equ:regt}
\end{equation}
The terms in the regret bound in (\ref{equ:regt}) originate from three sources. The first two terms, $V$ and $1/\delta$, are derived from the bounding of the expected queue length at the time step $T$. The third term of $\frac{1}{V}IT$ arises from the stochasticity of job departures due to the use of randomized job-server assignments. The last term, $d^2\sqrt{T}$, comes from the bandit algorithm used to learn the mean rewards of the assignments. 
{
\begin{corollary}[Regret-optimized tuning]
\label{cor:regret-optimized-v}
Under the assumptions of Theorem~\ref{thm:regbd1}, choosing
\[
  V_T=\Theta(\sqrt{IT})
\]
\red{before the start of the algorithm} gives
\[
  R(T)
  =
  \widetilde O\!\left(
    (\sqrt I+d^2)\sqrt T+\frac{1}{\delta}
  \right),
\]
up to constants depending on \(n,\gamma,w_{\min},w_{\max}\).
\end{corollary}}

\red{
\begin{remark}[On the horizon-dependent choice of \(V\)]
Theorem~\ref{thm:regbd1} is proved for an arbitrary fixed
deterministic value of \(V>0\). Once the target horizon \(T\) is fixed, we choose
\(V_T\) before the algorithm starts and keep it constant throughout the run. Therefore, applying Theorem~\ref{thm:regbd1} with
\(V=V_T\) is legitimate and does not introduce any additional
terms.
\end{remark}}

\red{\paragraph{Worst-case regret guarantee} The regret bound in Corollary~\ref{cor:regret-optimized-v} should be interpreted as a problem-instance-independent, or gap-free, guarantee. After vectorizing the bilinear model, the reward estimation problem becomes a structured linear-bandit problem in dimension \(d^2\), and the confidence-set analysis yields the standard gap-free contribution \(\widetilde O(d^2\sqrt T)\), as in worst-case guarantees for linear bandits \cite{abbasi2011improved}, logistic bandits \cite{faury2020improved}, and bilinear bandits \cite{jun2019bilinear}. Moreover, the regret in our scheduling problem is not only a reward-estimation error. It also contains queue-control and randomized-service terms, \(O(V_T)\) and \(O(IT/V_T)\), which arise from the drift-plus-penalty analysis and from stochastic departures induced by randomized assignments. Balancing these terms gives \(V_T=\Theta(\sqrt{IT})\), leading to the finite-horizon regret--stability tradeoff in Corollary~\ref{cor:regret-optimized-v}.}

\red{\paragraph{Comparison with the unstructured-reward setting of \cite{hsu2021integrated}}
The queue-control part of our algorithm is inspired by the utility-guided
dynamic assignment framework of \cite{hsu2021integrated}. The main difference
is in how rewards are learned. In \cite{hsu2021integrated}, the mean rewards
are unstructured and must be learned separately for different job--server
combinations. The regret bound of their algorithm (\texttt{UGDA-OL}) is stated for a time-averaged performance
loss. Converting it to cumulative regret over horizon \(T\) and using the
optimized balancing choice \(V_T=\Theta(\sqrt{IT})\) gives
\[
  R_{\mathrm{UGDA\text{-}OL}}(T)
  =
  \widetilde O\!\left(
    \sqrt{IT}+JT+\frac{1}{\delta}
  \right).
\]
The linear term \(JT\) is the learning cost of treating the rewards as
unstructured. In contrast, our bilinear model writes
\[
  r_{i,j}=u_i^\top\Theta v_j
  =
  z_{i,j}^\top\theta,
\]
so every observed reward updates a shared \(d^2\)-dimensional parameter
\(\theta\). This parameter sharing reduces the learning contribution to
\[
  \widetilde O(d^2\sqrt T),
\]
which is sublinear in \(T\) whenever \(d\) is fixed. The queue-length
guarantees are qualitatively comparable for fixed \(V_T\). The detailed conversion of
the regret bound is provided in Appendix~\ref{app:com}.}

\red{Beyond the regret improvement, our formulation also explicitly controls
general weighted holding costs. The weighted proportional-fair term in our
algorithm allows the weights \(w_i\) to be chosen according to the marginal
holding costs \(c_i\). In particular, choosing \(w_i=c_i\) yields a
cost-aware holding-cost bound, while the usual total queue-length guarantee is
recovered as the special case \(c_i=1\). Thus, our weighted algorithm provides
a more general cost-sensitive queue-control guarantee, allowing high-cost job
classes to be prioritized.}

\subsubsection{Queue Length and Holding Cost Bounds} We next consider the mean queue and holding cost of Algorithm~\ref{alg:Alg1}. Let $c_{\max}=\max_{i\in\I}c_i$ and recall that $w_{\max}=\max_{i\in\I}w_i$. In the following theorem, we provide a bound on the mean  queue length. The proof is provided in Appendix~\ref{app:q_len_bound}.

\red{
\begin{theorem}[Uniform expected queue length bound] \label{thm:Q_len_bound}  Algorithm~\ref{alg:Alg1} guarantees that, for any \(V>0\) 
 \[ \sup_{t\ge1} \mathbb E\left[\sum_{i\in\mathcal I}Q_i(t)\right] = O\left(V+\frac1\delta\right), \]
  when \(n,\gamma,w_{\min},w_{\max}\) are treated as fixed system parameters.
 \end{theorem}}

 \red{
Theorem~\ref{thm:Q_len_bound} shows that the algorithm guarantees a time-uniform bound on the expected total queue length, thereby establishing stability in the sense of Definition~\ref{def:queue_length_stability}. The higher the value of $V$, the less weight is placed on the fairness term in the objective function of the optimization problem in (\ref{eq:weighted}). The queue length bound has an inverse dependency on $\delta$, which is typical for mean queue length bounds \cite{freund2022efficient,yang2023learning,huang2024lyapunov}.}

 \red{In particular, with the regret-optimized choice \(V_T=\Theta(\sqrt{IT})\), chosen before the algorithm starts and kept fixed during execution,  Theorem~\ref{thm:Q_len_bound} gives \[ \sup_{t\ge1} \mathbb E\left[\sum_{i\in\mathcal I}Q_i(t)\right] = O\left( \sqrt{IT} + \frac1\delta \right). \] 
Thus, for each prescribed target horizon \(T\), the resulting policy admits a uniform expected total-queue bound over all execution times \(t\ge1\).}

\red{Now we provide a theorem for the holding cost bound for $H(t;c)$ of Algorithm~\ref{alg:Alg1}, defined in (\ref{equ:hcostdef}) for marginal holding costs $c_1, \ldots, c_I$. The proof is provided in Appendix~\ref{app:holding_bound}.}

\red{\begin{theorem}[Uniform time-averaged epxected holding-cost stability] \label{thm:Q_len_bound_cost}  For any averaging horizon \(t\ge1\), define \[ \overline H_t(c) := \frac1t\sum_{s=0}^{t-1}H(s;c). \] Algorithm~\ref{alg:Alg1} is uniformly time-averaged expected holding-cost stable for any fixed \(V>0\) satisfying \[ \sup_{t\ge 1}\overline H_t(c) = O\left( \max_{i\in \I}\frac{c_i}{w_i}\frac{1}{\delta}(V+w_{\max}) \right). \] In particular, if \(w_i=c_i\) for all \(i\in\mathcal I\), then \[ \sup_{t\ge 1}\overline H_t(c) = O\left( \frac{V+c_{\max}}{\delta} \right). \] \end{theorem}}



\red{Theorem~\ref{thm:Q_len_bound_cost} shows that the algorithm ensures a uniform bound on the time-averaged expected holding cost, thereby establishing stability in the sense of Definition~\ref{def:holding_cost_stability}. Moreover,
 the weights \(w_i\) control the dependence of the holding cost on the cost vector \(c\) through the mismatch factor \(\max_i c_i/w_i\). Hence, choosing \(w_i=c_i\) removes this multiplicative mismatch factor from the leading control term. With the regret-optimized choice \(V_T=\Theta(\sqrt{IT})\), this gives \[ \sup_{t\ge 1} \overline H_t(c) = O\left( \frac{\sqrt{IT}+c_{\max}}{\delta} \right), \] up to fixed system-dependent constants. Thus, for each prescribed target horizon \(T\), the resulting policy admits a uniform time-average expected holding cost bound over all execution times \(t\ge1\).}

\red{The \(1/\delta\) factor in the time-average holding-cost bound comes from
the weighted Lyapunov drift argument: the negative drift coefficient is
proportional to the traffic slackness \(\delta=n-\rho\). This bound is
therefore slightly weaker than the scalar total-queue bound, which is proved
by a sharper one-dimensional coupling argument. Nevertheless, the
cost-aware choice \(w_i=c_i\) still eliminates the multiplicative
cost-mismatch factor \(\max_i c_i/w_i\) from the leading term.}

\red{
\begin{remark}[Regret--stability tradeoff]
The parameter \(V\) controls the regret--stability tradeoff. If \(V\) is
fixed independently of the target horizon \(T\), then the holding-cost bound
is uniform in time and independent of \(T\), but the regret bound contains the
term \(IT/V\) and is linear in \(T\). Balancing the terms \(V\) and \(IT/V\)
in the regret upper bound gives the horizon-dependent tuning
\(V_T=\Theta(\sqrt{IT})\). For each prescribed target horizon \(T\), this
value is chosen before the algorithm starts and is kept fixed during
execution. Hence the resulting policy achieves sublinear regret over
\([1,T]\), while the
holding-cost bound remains uniform over the execution time \(t\ge 1\).
\end{remark}
}

\red{
\begin{remark}[Target horizon and execution horizon]
The target horizon \(T\) is used only to tune the parameter \(V_T\) and
the confidence radius. The policy can be run for all
\(t\ge 1\). The regret guarantee is evaluated over the first \(T\) time slots,
whereas the stability guarantee is evaluated over the infinite time axis of
the same fixed policy.
\end{remark}
}

\section{Distributed Allocation Algorithms}
\label{sec:dist}

\begin{figure}[t]
\centering
\includegraphics[width=\linewidth]{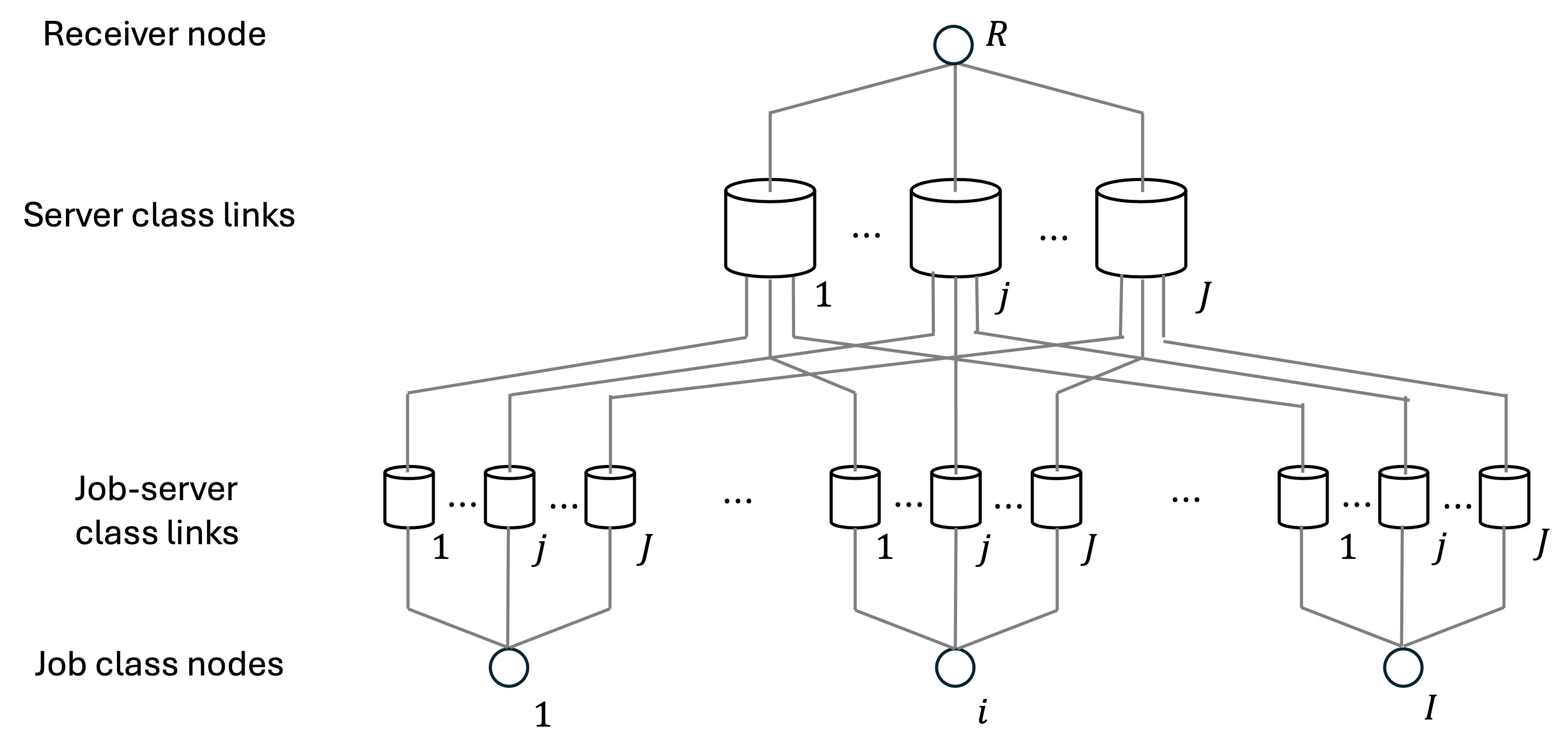}
\caption{Distributed computation of the expected allocation corresponds to a joint routing and rate control problem with a job class node as a source and a virtual receiver node. A routing path $(i,j)$ originates at job node $i$, traverses a link used exclusively by this path with a marginal price $\gamma - \tilde{r}_{i,j}$ and then traverses link $j$ corresponding to a job server node, and finally terminates at the receiver node.} 
\label{fig:bilinear2}
\end{figure}

The scheduling algorithm defined in Algorithm~\ref{alg:Alg1} can be run by a dedicated compute node in a centralized computation implementation. For large-scale systems with many jobs and servers, it is of interest to consider distributed scheduling algorithms, where computations are performed by compute nodes representing jobs or servers. The part of the algorithm concerned with the computation of mean reward estimators can be easily distributed. However, the part concerned with the computation of expected allocations of jobs to servers requires solving the convex optimization problem (\ref{eq:weighted}) in each time step.

In this section, we discuss distributed iterative algorithms for approximately computing these expected allocations, where iterative updates are performed by compute nodes representing jobs or servers. These iterative updates follow certain projected gradient descent-type algorithms with feedback delays due to distributed computation. We provide sufficient conditions for the exponential-rate convergence of these iterative algorithms. The definition of the algorithms and their convergence analysis exploits a relation with the joint routing and rate control problem addressed in \cite{kv05}.

In what follows, we consider an arbitrary time step $t$ and omit reference to $t$ in our notation. For simplicity, we consider $w_i=1$ for $i\in\I$. We write $\Q$ in lieu of $\Q(t)$ and $\I_+$ in lieu of $\I(t)$.  With a slight abuse of notation, we let $y_{i,j}(r)$ denote the value of allocation for a job-server class combination $(i,j)$ at iteration $r$, for $i\in \I_+$ and $j\in \J$, and let $y(r) = (y_{i,j}(r): i\in \I_+, j\in \J)$. We consider iterative updates of allocations under the assumption that the set of jobs and mean reward estimators are fixed. This is a standard assumption when studying such iterative allocation updates in network resource allocation problems. It is a limit case of an operational regime where such iterative updates are run at a faster timescale than the timescale at which the set of jobs and mean reward estimates change. 

We refer to the node maintaining state for a job class $i\in \I$ as \emph{job-node} $i$, and the node maintaining state for a server class $j\in \J$ as \emph{server-node} $j$. Let $\tau_{(i,j)}$ denote the round-trip delay for the $(i,j)$ job-server class, defined as the sum of the delay for communicating information from job-node $i$ to server-node $j$, denoted by $\tau_{i,j}$, and the delay for communicating information in the reverse direction, denoted by $\tau_{j,i}$.

\subsection{Allocation Computed by Job Nodes} We consider distributed computation where each job-node $i\in \I_+$ computes $y_i(t) :=  (y_{i,j}(r): j\in \J)$ for $r\geq 0$ by using the following iterative updates:
\begin{equation}
y_{i,j}(r+1) = y_{i,j}(r) + \alpha_{i,j}\left(1-\frac{\lambda_{i,j}(r)}{u'_i(y^\dag_i(r))}\right)_{y_{i,j}(r)}^+ 
\label{equ:xdiff0}
\end{equation}
where 
\begin{align*}
\lambda_{i,j}(r):= & p_j\left(y^\S_j(r-\tau_{j,i})\right) + \gamma - \tilde{r}_{i,j},\\
y^\dag_i(r):= & \sum_{j\in \J} y_{i,j}(r-\tau_{(i,j)}),\\ 
y^\S_j(r):= & \sum_{i'\in \I_+}y_{i',j}(r-\tau_{i',j}),\\
u_i(y) :=& \frac{1}{V}|\Q_i|\log(y)
\end{align*}
with $\alpha_{i,j} > 0$ being a step size parameter, $p_j$ being a non-negative continuously differentiable function with strictly positive derivative, and $(b)_a^+ = b$ if $a> 0$ and $(b)_a^+ = \max\{b,0\}$ if $a = 0$. Note that $y_i(r)$ can be interpreted as allocation to job-node $i$ acknowledged via feedback from server-nodes $j\in \J$.

We study the convergence properties of the above iterative method in continuous time by considering the system of delay differential equations, for $i\in \I_+$ and $j\in \J$,
\begin{equation}
\frac{d}{dr}y_{i,j}(r) = \alpha_{i,j}\left(1-\frac{\lambda_{i,j}(r)}{u_i'(y^\dag_i(r))}\right)_{y_{i,j}(r)}^+.
\label{equ:xdiff}
\end{equation}
Let $C_j(z) = \int_0^z p_j(u)du$ for $j\in \J$. 
An allocation $y= (y_{i,j} : i\in \I_+, j\in \J)$ is said to be an \emph{equilibrium point} of (\ref{equ:xdiff}) if it is a solution of the convex optimization problem:
\begin{equation}
\hspace{-2mm}\begin{array}{rl}
\!\!\mathrm{maximize}\!\! & \!\! \!\!\sum_{i\in \I_+}\!\! u_i(y^\dag_i) \!- \!\sum_{j\in \J} (C_j(y^\S_j)\!+\!\sum_{i\in \I_+}\!\!(\gamma \!- \! \tilde{r}_{i,j}) y_{i,j})\\
\mathrm{subject\: to}\!\! &\!\! y^\dag_i = \sum_{j\in \J} y_{i,j}, \hbox{ for all } i\in \I_+\\
&\!\!  y^\S_j = \sum_{i\in \I_+} y_{i,j}, \hbox{ for all } j\in \J\\
\mathrm{over} \!\! &\!\!  y_{i,j}\geq 0, \hbox{ for all } i\in \I_+, j\in \J.
\end{array}
\label{equ:opt2}
\end{equation}

The objective function of (\ref{equ:opt2}) is maximized at $y$ if, and only if, for all $i\in \I_+$ and $j\in \J$,
\begin{eqnarray}
&& y_{i,j}\geq 0,\label{equ:oc1}\\
&& u_i'(y^\dag_i)-p_j(y^\S_j) - (\gamma - \tilde{r}_{i,j}) \geq 0, \hbox{ and }\label{equ:oc2}\\ 
&& y_{i,j}\left(u_i'(y^\dag_i)-p_j(y^\S_j) - (\gamma - \tilde{r}_{i,j})\right) = 0.\label{equ:oc3}
\end{eqnarray}

\begin{figure*}[ht]
\centering
\includegraphics[width=0.4\linewidth]{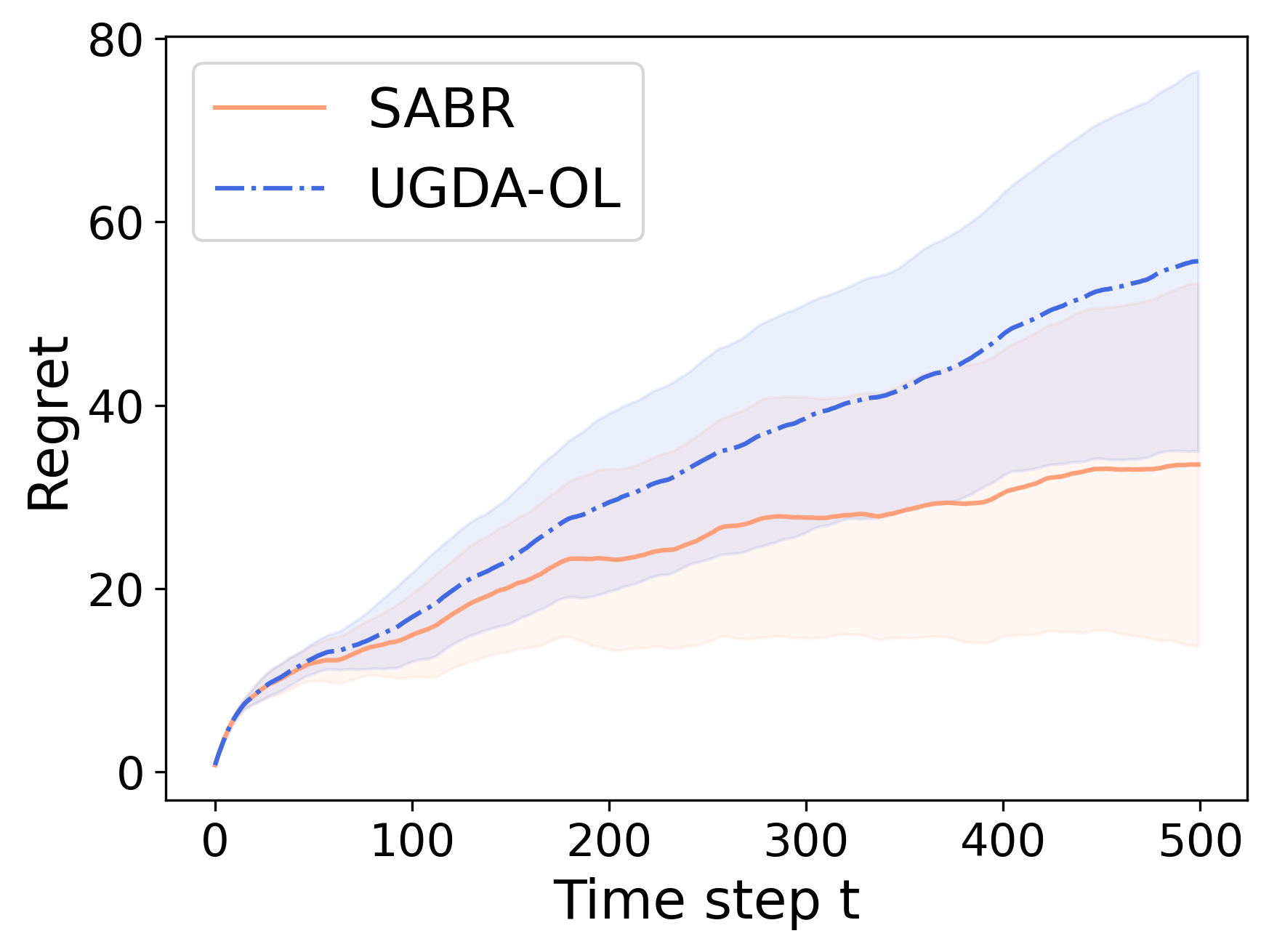}\hspace*{1.5cm}
\includegraphics[width=0.41\linewidth]{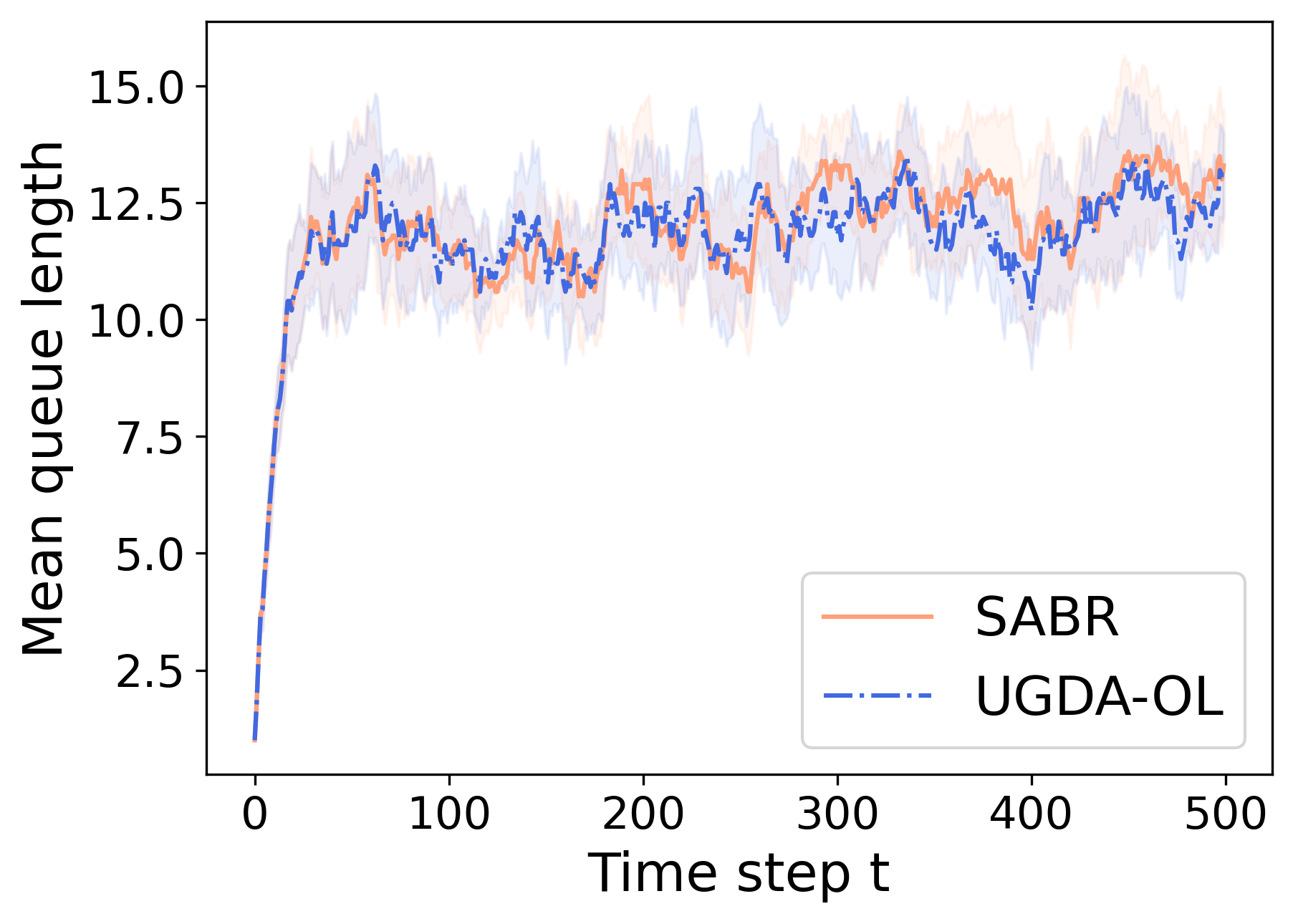}
\caption{Performance of \texttt{SABR} and \texttt{UGDA-OL} over time steps: (left) regret and (right) mean queue length.
}
\label{fig:com}
\end{figure*}
\begin{figure*}[ht]
\centering

\begin{minipage}[ht]{0.43\linewidth}
    \centering
    \includegraphics[width=\linewidth]{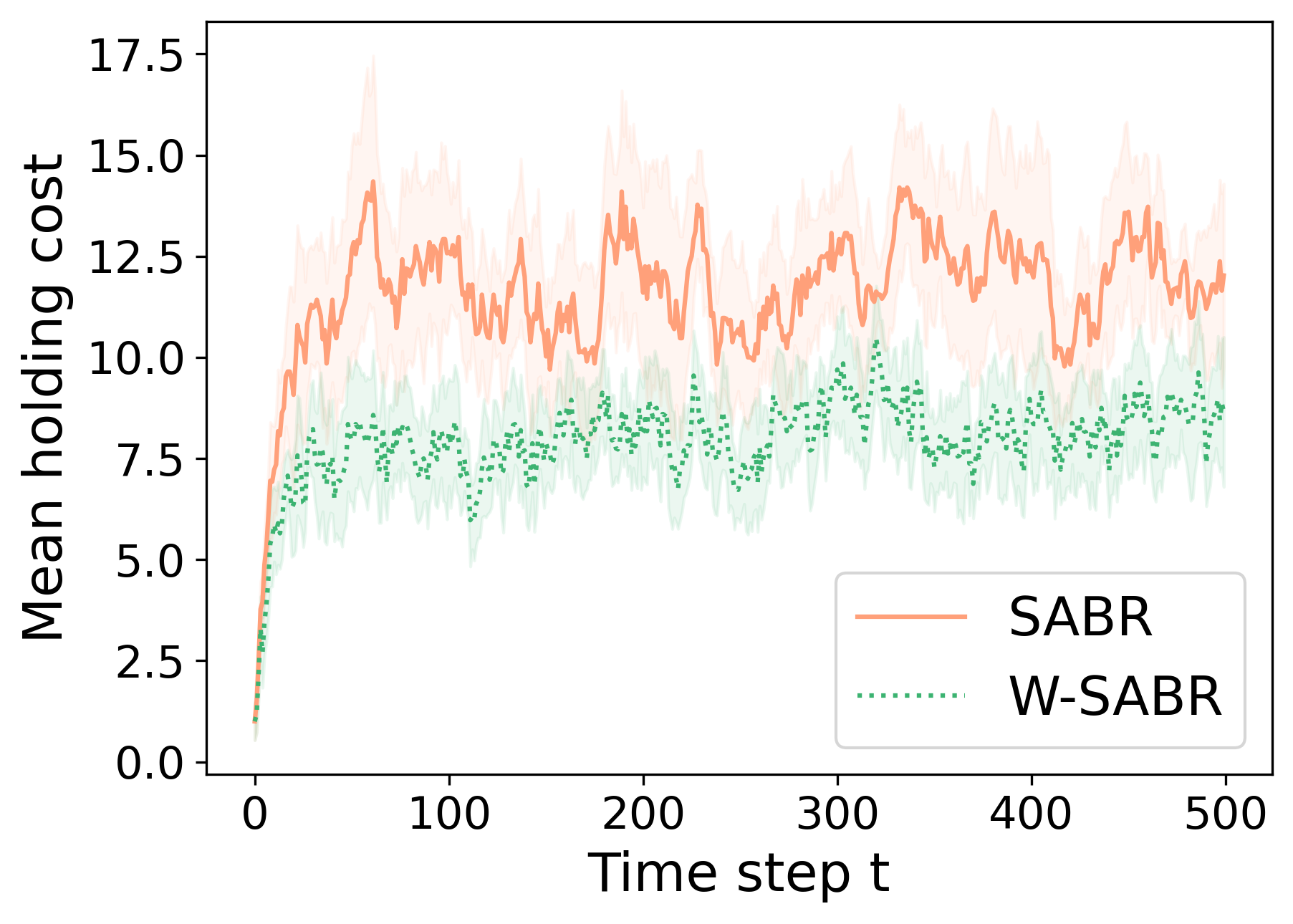}
     \vspace{1mm}
    (a)
\end{minipage}\hspace*{1.3cm}
\begin{minipage}[ht]{0.42\linewidth}
    \centering
    \includegraphics[width=\linewidth]{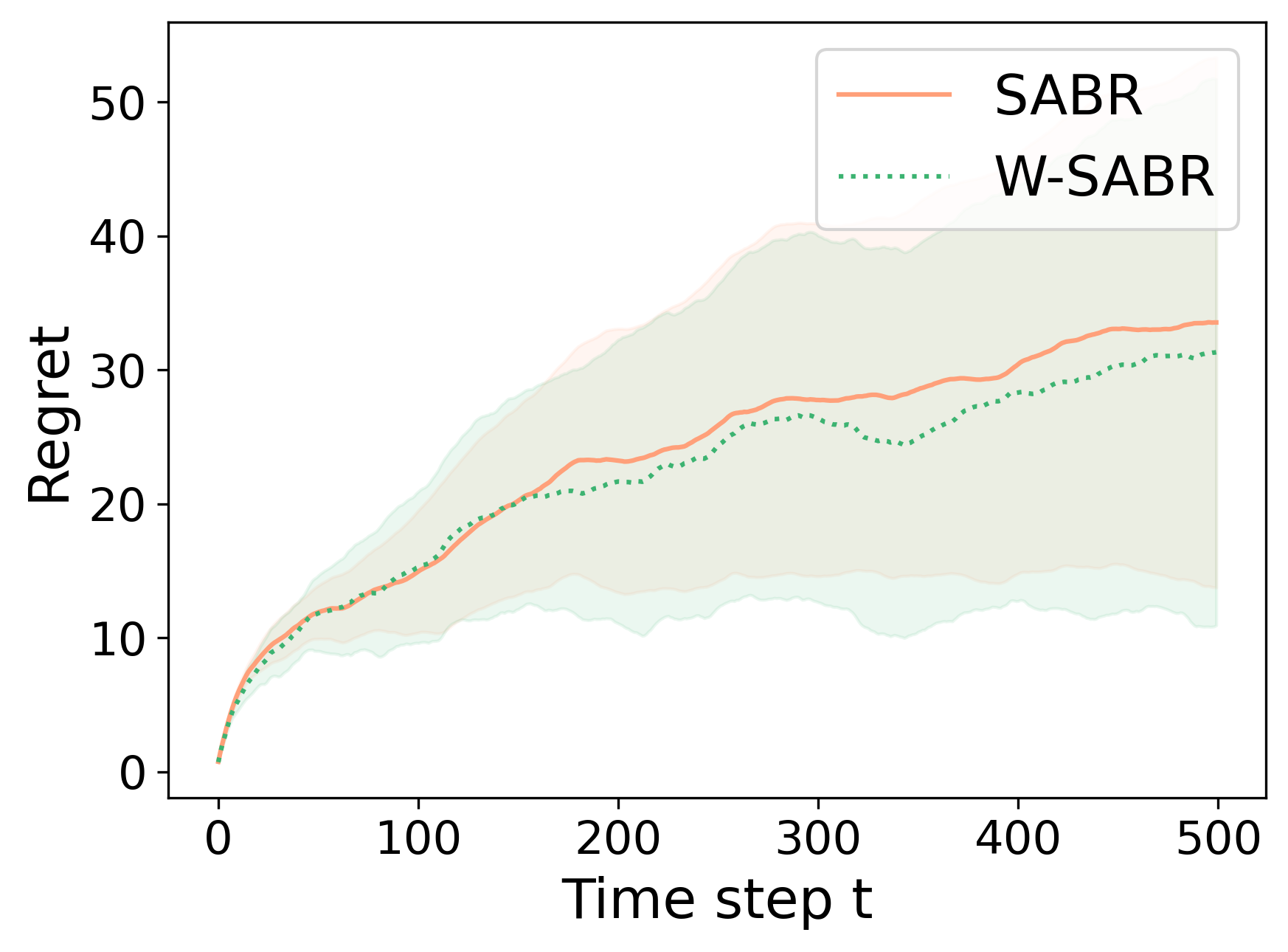}
     \vspace{1mm}
    (b)
\end{minipage}

\vspace{0.4cm}

\begin{minipage}[ht]{0.4\linewidth}
    \centering
    \includegraphics[width=\linewidth]{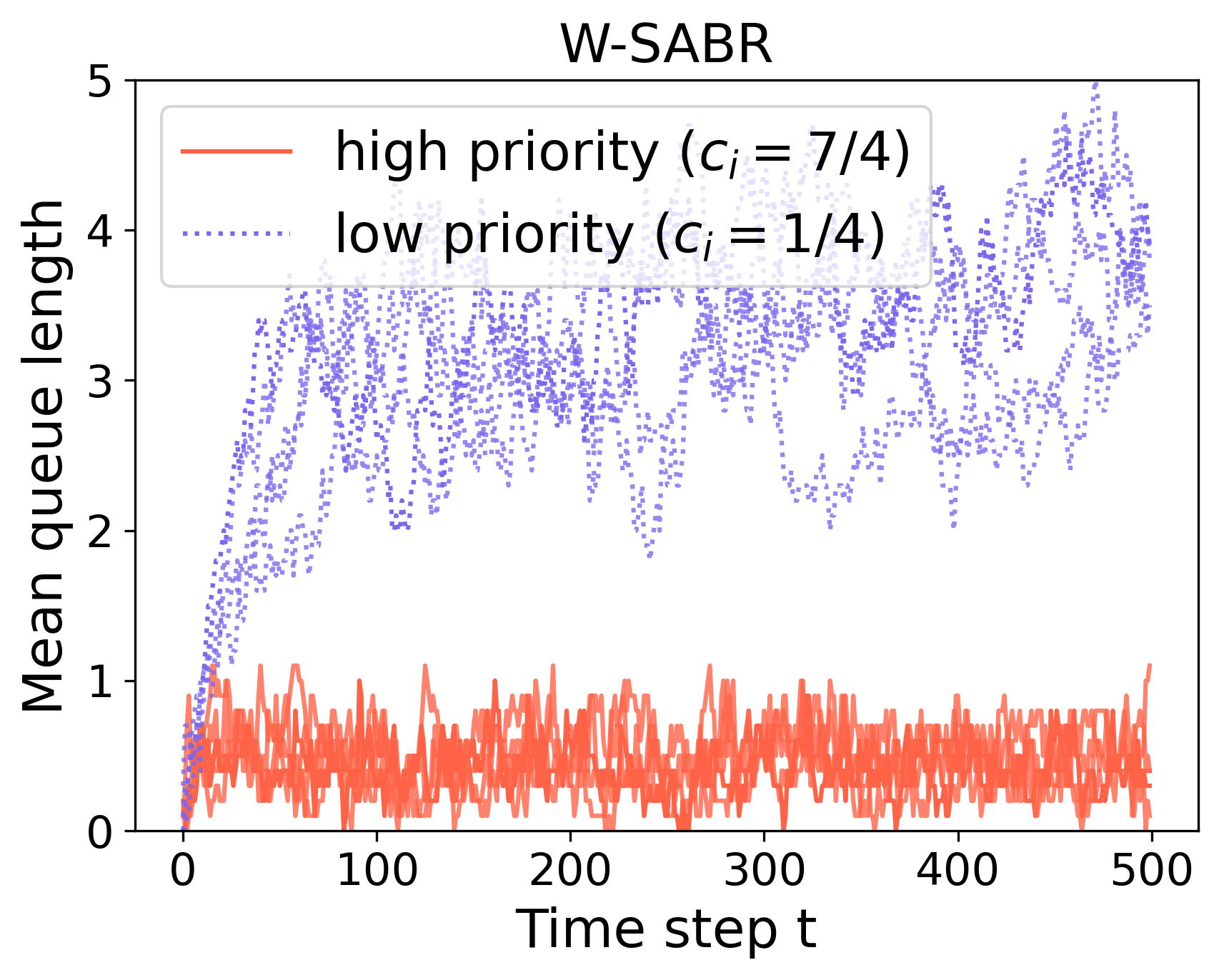}
     \vspace{1mm}
    (c)
\end{minipage}\hspace*{1.5cm}
\begin{minipage}[ht]{0.4\linewidth}
    \centering
    \includegraphics[width=\linewidth]{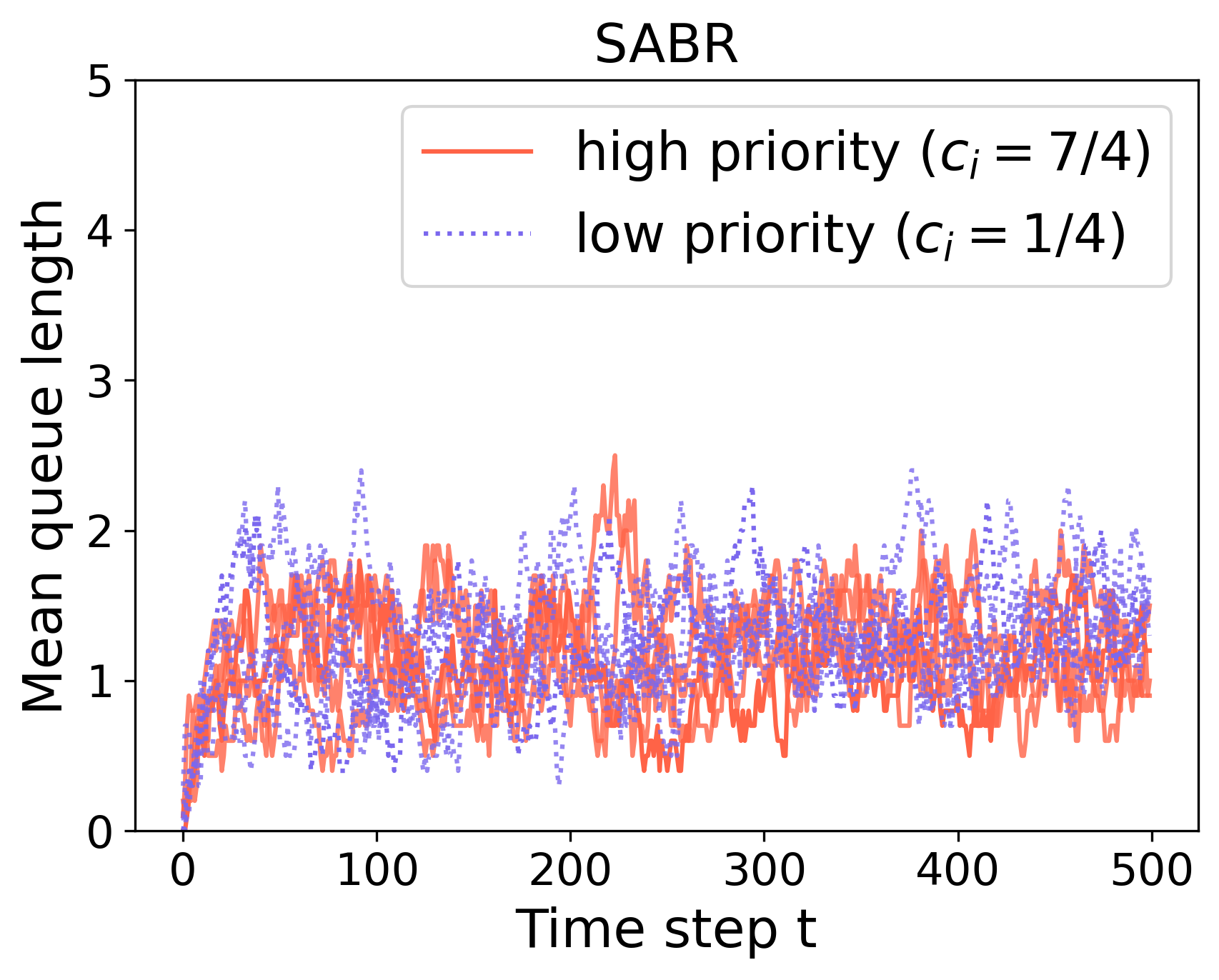}
      \vspace{1mm}
    (d)
\end{minipage}

\caption{Performance of \texttt{SABR} and/or \texttt{W-SABR} over time steps: (a) mean holding cost, (b) regret, (c) mean queue length for each job class under \texttt{W-SABR}, and (d) mean queue length for each job class under \texttt{SABR}.}
\label{fig:hold_cost}
\end{figure*}
A point $y$ satisfying (\ref{equ:oc1})-(\ref{equ:oc3}) is said to be an \emph{interior point} if either (\ref{equ:oc1}) or (\ref{equ:oc2}) holds with strict inequalities. 

The iterative updates (\ref{equ:xdiff0}) are of gradient descent type as, when all feedback communication delays are equal to zero, we have $1-\lambda_{i,j}(r)/u_i'(y^\dag_i(r)) = (1/u_i'(y^\dag_i(r)))\partial f(y(r))/\partial y_{i,j}(r)$ where $f$ is the objective of the optimization problem (\ref{equ:opt2}).  

The system of delay differential equations (\ref{equ:xdiff}) and the optimization problem (\ref{equ:opt2}) correspond to a special instance of a joint routing and rate control problem formulation studied in \cite{kv05}, where $(i,j)$ is the index of a route, $i$ is the index of a source, $j$ is the index of a link, and each route $(i,j)$ passes through link $j$ with a cost function $C_j$. Additionally, there is a link with a fixed price per unit flow $\gamma - \tilde{r}_{i,j}\geq 0$ that is used exclusively by route $(i,j)$. See Figure~\ref{fig:bilinear2} for a graphical illustration.

\subsection{Allocation Computed by Server Nodes} Another distributed algorithm for computing a maximizer $y$ of the optimization problem (\ref{equ:opt2}) is defined by letting each server-node $j\in \J$ compute values $(y_{i,j}(r): i\in \I_+, r\geq 0)$ using iterative updates with the associated system of delay differential equations, for $i\in \I_+$ and $j\in \J$,
\begin{equation}
\frac{d}{dr}y_{i,j}(r) = \alpha_{i,j}\left(1-\frac{\lambda_{i,j}(r)}{u'_i(y^\dag_i(r-\tau_{i,j}))}\right)^+_{y_{i,j}(r)}, 
\label{equ:xdiff2}
\end{equation}
where $
\lambda_{i,j}(r)= p_j\left(y^{\S}_j(r)\right) + \gamma - \tilde{r}_{i,j}$,
$y^\dag_i(r) = \sum_{j'\in \J} y_{i,j'}(r-\tau_{j',i}),$ and $y^\S_j(r) = \sum_{i'\in \I_+}y_{i',j}(r-\tau_{(i',j)}).$ Here, $y^\S_j(r)$ represents the total allocation given by server class $j$, acknowledged to be received by job-nodes $i\in \I_+$ via feedback sent to server-node $j$.

\subsection{Stability Condition} 
We provide a condition that ensures convergence of $(y^\dag(r), y^\S(r))$ to a unique point corresponding to the solution of optimization problem (\ref{equ:opt2}) as $r$ approaches infinity. Here, $(y^\dag(r), y^\S(r))$ evolves according to either the system of delay differential equations (\ref{equ:xdiff}) or (\ref{equ:xdiff2}). We define $\tau_{\max}$ as an upper bound on the round-trip delay for each job-server class combination, i.e., $\tau_{(i,j)}\leq \tau_{\max}$ for all $i\in\I_+$ and $j\in \J$.

\begin{theorem} Assume that $y^*$ is an interior equilibrium point, ${y^*}^\dag :=( \sum_{j\in \J}y_{i,j}^*: i\in \I_+)$, ${y^*}^\S := (\sum_{i\in \I_+}y_{i,j}^*: j\in \J)$, and that the following condition holds: for all $i\in \I_+$ and $j\in \J$,
\begin{equation}
\alpha_{i,j}\tau_{(i,j)}\left(1+\frac{p_j'({y_j^*}^\S){y_j^*}^\S}{p_j({y_j^*}^\S) + \gamma - \tilde{r}_{i,j}}\right)<\frac{\pi}{2}.
\label{equ:cond}
\end{equation}
Then, there exists a neighborhood $\mathcal{N}$ of $y^*$ such that for any initial trajectory $y(-\tau_{\max}),\ldots, y(0)$ lying within $\mathcal{N}$, $(y_i^\dag (r): i\in \I_+)$ and $(y_j^\S(r): j\in\J)$, evolving according to (\ref{equ:xdiff}) (resp. according to (\ref{equ:xdiff2})), converge exponentially fast, as $r$ goes to infinity, to the unique points ${y^*}^\dag$ and ${y^*}^\S$, respectively. 
\label{prop:cond}
\end{theorem}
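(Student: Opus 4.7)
The plan is to interpret the iterative updates \eqref{equ:xdiff} and \eqref{equ:xdiff2} as instances of the joint routing and rate control framework of \cite{kv05}, and then invoke the Nyquist-type local stability criterion developed there. First I would translate our setting into that framework: each job-server class combination $(i,j)$ plays the role of a \emph{route}, job-node $i$ is a \emph{source} endowed with a logarithmic utility $u_i$, and each $j\in\J$ is a \emph{shared link} with price function $p_j$. The offset $\gamma-\hat r_{i,j}\ge 0$ is modeled as a second link used exclusively by route $(i,j)$ carrying a constant price per unit flow. With this identification, the objective in \eqref{equ:opt2} is a standard network utility maximization objective which is strictly concave in the aggregated flows $(y^\dag,y^\S)$, so the pair $({y^*}^\dag,{y^*}^\S)$ is uniquely determined even though individual $y^*_{i,j}$ need not be.

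Next I would linearize around the interior equilibrium $y^*$. Since $y^*$ is interior, the projection operator $(\cdot)^+_{y_{i,j}(r)}$ appearing in both \eqref{equ:xdiff} and \eqref{equ:xdiff2} is inactive in a neighbourhood of $y^*$, so the right-hand sides are smooth there. Writing $\delta y_{i,j}(r)=y_{i,j}(r)-y^*_{i,j}$ and using the equilibrium identity $u'_i({y^*}^\dag_i)=p_j({y^*}^\S_j)+\gamma-\hat r_{i,j}$, the linearization becomes a linear delay differential system in which the source-side gain multiplying $\delta y^\dag_i$ is $-u''_i({y^*}^\dag_i)/u'_i({y^*}^\dag_i)$ and the link-side gain multiplying $\delta y^\S_j$ is $p'_j({y^*}^\S_j)/(p_j({y^*}^\S_j)+\gamma-\hat r_{i,j})$. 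Because $u_i$ is logarithmic, $-u''_i(x)x/u'_i(x)=1$ at the equilibrium, and the route-wise dimensionless loop gain at round-trip delay $\tau_{(i,j)}$ reduces to exactly $\alpha_{i,j}\tau_{(i,j)}\bigl(1+p'_j({y^*}^\S_j){y^*}^\S_j/(p_j({y^*}^\S_j)+\gamma-\hat r_{i,j})\bigr)$, which is the left-hand side of \eqref{equ:cond}.

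I would then apply the generalized Nyquist criterion to the characteristic equation of this linearization, following Section~4 of \cite{kv05}. The key calculation bounds the spectral radius of the loop-gain matrix on the imaginary axis: because each route traverses just one shared link plus one exclusive link, the loop factors conveniently along routes, and a per-route $\pi/2$ bound is known to be sufficient to prevent any eigenvalue locus from encircling $-1$. Combined with the standard principle that local exponential stability of the linearized delay differential equation implies local exponential stability of the nonlinear one, this yields exponential convergence of $(y^\dag(r),y^\S(r))$ to $({y^*}^\dag,{y^*}^\S)$ from any initial trajectory in a sufficiently small $\mathcal{N}$.

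The main obstacle will be verifying carefully that the Nyquist computation goes through \emph{identically} for both update schemes \eqref{equ:xdiff} and \eqref{equ:xdiff2}, since the feedback delays are split differently between the $y^\dag$ and $y^\S$ channels in the two cases. My plan is to show that only the round-trip delay $\tau_{(i,j)}=\tau_{i,j}+\tau_{j,i}$ enters the route-wise characteristic determinant---the intermediate split is absorbed into phase factors $e^{-j\omega\sigma}$ that do not affect encirclement counts---so the sufficient condition depends on $\tau_{(i,j)}$ alone and coincides with \eqref{equ:cond} in both schemes. Uniqueness of $({y^*}^\dag,{y^*}^\S)$ as the optimizer of the strictly concave reduction of \eqref{equ:opt2} then turns local exponential stability into the asserted exponential convergence of the aggregate flows, completing the argument.
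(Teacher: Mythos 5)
Your proposal is correct and follows essentially the same route as the paper: both reduce the dynamics to the joint routing and rate control framework of \cite{kv05} (routes $(i,j)$ with one shared link $j$ and one exclusive link of fixed price $\gamma-\hat{r}_{i,j}$), linearize about the interior equilibrium, form the return ratio, and apply the generalized Nyquist criterion, with the row-sum/spectral-radius bound reducing exactly to the per-route quantity $\alpha_{i,j}\tau_{(i,j)}\bigl(1+p_j'({y_j^*}^\S){y_j^*}^\S/(p_j({y_j^*}^\S)+\gamma-\hat{r}_{i,j})\bigr)<\pi/2$. Your observation that only the round-trip delay $\tau_{(i,j)}$ enters the encirclement count, so that \eqref{equ:xdiff} and \eqref{equ:xdiff2} yield the same condition, is precisely how the paper disposes of the second scheme.
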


For the system of delay differential equations (\ref{equ:xdiff}), the result in Theorem~\ref{prop:cond} follows from Theorem~2 in \cite{kv05}, as it corresponds to a special instance of the joint routing and rate control problem considered therein. The proof relies on the application of a generalized Nyquist stability criterion to a linearized system of delay differential equations. Similarly, for the system of delay differential equations (\ref{equ:xdiff2}), the proof follows the same approach, as detailed in the Appendix. 

Theorem~\ref{prop:cond} provides a sufficient condition for the exponentially fast convergence of $(y^\dag(r), y^\S(r))$ as $r$ approaches infinity. Intuitively, this condition requires the step size $\alpha_{i,j}$ to be sufficiently small relative to the reciprocal of the round-trip delay $\tau_{(i,j)}$, and involves terms related to the function $p_j$ and its derivative, as well as the marginal price $\gamma - \tilde{r}_{i,j}$, for each job-server class pair $(i,j)$. Since $\gamma - \tilde{r}_{i,j} \geq 0$, we can strengthen the sufficient condition by replacing $\gamma - \tilde{r}_{i,j}$ in (\ref{equ:cond}) with $0$. 

For concreteness, we discuss the sufficient condition from Theorem~\ref{prop:cond} for specific classes of penalty functions $C_j$.

First, consider the penalty functions $C_j$ such that $C_j'(z) = p_j(z) = (z/n_j)^{\beta_j}$, where $\beta_j >0$ for all $j\in \J$. Intuitively, the larger the value of parameter $\beta_j$, the closer the penalty function $C_j$ to the hard capacity constraint. From Theorem~\ref{prop:cond}, we derive the following sufficient stability condition:
$$
\alpha_{i,j}\tau_{(i,j)} < \frac{\pi}{2}\frac{1}{1+\beta_j}, \quad \text{for all } i\in \I_+, j\in \J.
$$
This condition implies that the step size $\alpha_{i,j}$ must be smaller than a constant factor of $1/\tau_{(i,j)}$, with the magnitude of this factor decreasing as $\beta_j$ increases.

Second, we can accommodate a broader set of penalty functions $C_j$ such that $p_j'(z)z \leq \beta_j p_j(z)$ and $p_j(z)\leq \gamma_j$ for all $z\geq 0$, where $\beta_j > 0$, $\gamma_j > 0$, and $\gamma - \tilde{r}_{i,j} \geq c$ for all $i\in \I_+$, $j\in \J$, where $c\geq 0$. Then, we have the sufficient stability condition: for all $i\in \I_+$ and $j\in \J$,
$$
\alpha_{i,j}\tau_{(i,j)} < \frac{\pi}{2}\frac{\gamma_j + c}{\gamma_j + \gamma_j \beta_j + c}. 
$$

Finally, consider the penalty function $C_j$ with $p_j(z) = (z/n_j)/(1-z/n_j)$, for $z\geq 0$, for all $j\in \J$. This function has a vertical asymptote at $z=n_j$. 
Assume $y^*$ satisfies ${y_j^*}^\S/n_j \leq 1-\epsilon$ for all $j\in \J$, where $\epsilon \in (0,1)$. Then, the following sufficient stability condition holds: for all $i\in \I_+$ and $j\in \J$,
$$
\alpha_{i,j}\tau_{(i,j)} < \frac{\pi}{4}\epsilon.
$$

\section{Numerical Results}
\label{sec:num}

In this section, we present the results of our numerical experiments. The aim of these experiments is to demonstrate the performance of our proposed algorithm and compare it with that achieved by the algorithm proposed by \cite{hsu2021integrated}, referred to as \texttt{UGDA-OL} (Utility-Guided Dynamic Assignment with Online Learning). We refer to Algorithm~\ref{alg:Alg1} as \texttt{SABR} (Scheduling Algorithm for Bilinear Rewards) when the weight parameters of the weighted proportional fair allocation criteria are identical and set to the value $1$, and as \texttt{W-SABR} (Weighted Scheduling Algorithm for Bilinear Rewards) when the weight parameters are specified to take some other values. As we will see, the experimental results validate our theoretical findings. 

\subsection{Setup of Experiments} 


We consider randomly generated problem instances, enabling us to vary certain parameters to evaluate their effects on the regret and the mean queue length achieved by a scheduling algorithm. Each experiment was conducted with 10 independent repetitions. Additionally, we conducted experiments using problem instances generated from a real-world data trace obtained from a large-scale cluster computing system; these results are presented in the Appendix~\ref{app:real}.


Our basic setup of synthetic experiments is as follows. We consider identical traffic intensities over job classes, $\rho_i=\rho/I$ for all $i\in\I$, and identical number of servers over server classes, $n_j = n/J$ for all $j\in \J$, with the total number of servers $n$. Specifically, we assume $\rho = 1$ and $n=4$, resulting in the system load of $\rho/n = 0.25$. 
We set $T=500$, $1/\mu=1$, $I=10$, $J=2$, and $d=2$. The mean rewards follow the bilinear model with feature vectors ${u_i,\ i\in \I}$ and ${v_j,\ j\in \J}$, with the values of features set to independent samples from uniform distribution on $[0,1]$, and then normalized such that $\|u_i\|_2=1$ for all $i\in \I$ and $\|v_j\|_2=1$ for all $j\in \J$. The elements of the unknown parameter $\theta$ are set to independent samples from uniform distribution on $[0,1]$, and then normalized such that $\|\theta\|_2=1$. Stochastic rewards have independent additive Gaussian noises with mean zero and variance $0.01$. We set the value of parameter $\gamma$ to $1.2$. The value of parameter $V$ is chosen to minimize the regret bound for a given time horizon $T$.


\subsection{Results}

\paragraph{Comparison with UGDA-OL.}

Figure~\ref{fig:com} compares \texttt{SABR} (Algorithm~\ref{alg:Alg1}) with \texttt{UGDA-OL}, an algorithm based on \cite{hsu2021integrated} that achieves a regret bound of $\tilde{O}(\sqrt{IT} + JT + 1/\delta)$.  
Our results show that \texttt{SABR} consistently achieves lower regret. This improvement aligns with our theoretical analysis, where the regret of \texttt{SABR} scales as $\tilde{O}(\sqrt{IT}+d^2\sqrt{T}+1/\delta)$, which is sublinear in $T$, in contrast to the linear-in-$T$ term $\tilde{O}(JT)$ in \texttt{UGDA-OL}.

\paragraph{Impact of Weighted Scheduling.}
Figure~\ref{fig:hold_cost} compares the performance of \texttt{SABR} and \texttt{W-SABR} under heterogeneous holding costs. The costs are set to $7/4$ for half of the job classes (high-priority) and $1/4$ for the remaining classes (low-priority). In \texttt{W-SABR}, the weight parameters are matched to the cost parameters, i.e., $w_i = c_i$.

Figure~\ref{fig:hold_cost}(a) shows that \texttt{W-SABR} achieves lower overall holding costs compared to \texttt{SABR}, consistent with the bound in Theorem~\ref{thm:Q_len_bound_cost}, which suggests that properly tuning the weights can mitigate the cost dependence in the long-term holding cost. Meanwhile, Figure~\ref{fig:hold_cost}(b) demonstrates that both algorithms have similar regret performance, indicating that incorporating cost-aware weighting does not adversely affect learning efficiency.

Figures~\ref{fig:hold_cost}(c) and (d) depict the mean queue lengths for each job class under \texttt{W-SABR} and \texttt{SABR}, respectively. As expected, \texttt{W-SABR} prioritizes high-cost jobs by maintaining shorter queue lengths for those classes, thereby reducing the overall holding cost.  In contrast, \texttt{SABR}, which uses uniform weights, balances the queue lengths
more evenly across job classes without accounting for holding-cost
differences.

\section{Conclusion}
\label{sec:conc}
{
We studied scheduling in parallel-server queueing systems with stochastic
bilinear rewards. The proposed algorithm combines weighted proportional-fair
allocation with bilinear bandit learning. We established a regret--stability
tradeoff for the proposed algorithm. For any fixed control parameter \(V>0\),
the algorithm admits a uniform expected queue length and uniform time-averaging holding-cost bounds. When a finite target horizon \(T\) is specified in advance, choosing
\(V_T=\Theta(\sqrt{IT})\) yields sublinear regret, while maintaining uniform expected queue length and uniform time-averaged expected holding-cost bounds. Thus, the bilinear reward structure improves the
learning component of regret while preserving queueing and holding cost stability.
}



\bibliography{mybib}
\bibliographystyle{ieeetr}

 




\newpage
\onecolumn
\appendix
\section{Appendix}
\subsection{Proof of Theorem~\ref{thm:regbd1}}
\label{sec:regb1}

We first provide an outline of the proof to highlight the main steps, followed by the proof of the theorem. For simplicity, we use the notation $c_\gamma=(\gamma+1)/(\gamma-1)$. \red{Recall that $\I(t)=\{i\in \I: Q_i(t) > 0\}$.}

\subsubsection{Proof outline}

The proof is based on decomposing the regret into different components, resulting in the following regret bound:
\begin{align}
R(T)
&\le \gamma \frac{1}{\mu}\mathbb{E}[Q(T)]+\frac{1}{V} \sum_{t=1}^T\mathbb{E}\left[G(t)\right]+\frac{1}{V}\left(\sum_{t=1}^T \mathbb{E}\left[H(t)\right]+\frac{1}{2}\left(\sum_{i\in \I}\we_i\right)(T+1)\right),
\label{eq:R_bound2}
\end{align}
where $$
G(t)=\sum_{i\in \I(t)}\sum_{j\in \J}\left(\frac{\we_i}{\rho_i}Q_i(t)+V(r_{i,j}-\gamma)\right)\left(\rho p_{i,j}^*-y_{i,j}(t)\right)
$$ 
and 
$$
H(t)=\frac{n^2}{2}\left(1+c_{\gamma}^2\mu\right)\left(\max_{i\in \I(t)}\frac{\we_i}{\rho_i}\right).
$$

To prove \eqref{eq:R_bound2}, we utilize the drift-plus-penalty method with the Lyapunov function defined as:
$$
L({q})=\frac{1}{2}\sum_{i\in \I}\frac{\we_i}{\rho_i} q_i^2.
$$
Let
\begin{align}
  \Delta(t)&=\sum_{i\in \I}\rho_i\sum_{j\in \J} p_{i,j}^*(r_{i,j}-\gamma)-\sum_{i\in \I(t)}\sum_{j\in \J}y_{i,j}(t)(r_{i,j}-\gamma).
\end{align}
By analyzing the drift-plus-penalty function, $L({Q}(t+1))-L({Q}(t))+V\mu\Delta(t)$, we obtain
\begin{align}
\mathbb{E}\left[L({Q}(t+1))-L({Q}(t))+V\mu\Delta(t)\right]\le \mu\mathbb{E}\left[G(t)\right]+\mu\mathbb{E}[H(t)]+\frac{\mu }{2}\sum_{i\in\I}\we_i,
\label{eq:LD2b}
\end{align} 
from which \eqref{eq:R_bound2} easily follows. 

The regret bound in \eqref{eq:R_bound2} comprises three components: the first component is proportional to the mean queue length, the second arises from the bandit learning algorithm, and the third stems from the stochastic nature of job arrivals and departures.
 
The term $G(t)$ is pivotal in bounding the effect of the bandit learning algorithm on regret. Let $(y^*_{i,j}(t): i\in\I(t), j\in \J)$ denote the solution of the optimization problem \eqref{eq:weighted} with parameters $\hat{r}_{i,j}$ replaced with the true mean values $r_{i,j}$. Then, we have
$$
G(t)\le G_1(t)+G_2(t),
$$
 where 
 $
 G_1(t)=V\sum_{i\in\I(t)}\sum_{j\in \J}(\tilde{r}_{i,j}(t)-r_{i,j})y_{i,j}(t)
 $
 and 
 $
 G_2(t)=V\sum_{i\in\I(t)}\sum_{ j\in \J}(r_{i,j}-\tilde{r}_{i,j}(t))y^*_{i,j}(t).
 $ 
It is noteworthy that $G_1(t)$ and $G_2(t)$ represent weighted sums of mean reward estimation errors. 


To bound the weighted sums of mean reward estimation errors, we evaluate the error of the estimator of $\theta$ using a weighted norm. Let $x_{\cl(k),j}(t)=y_{\cl(k),j}(t)/Q_{\cl(k)}(t)$ and $\tilde{x}_{\cl(k),j}(t)$ represent the actual number of servers of class $j$ assigned to job $k$ at time step $t$, such that $\mathbb{E}[\tilde{x}_{\cl(k),j}(t)\mid x_{\cl(k),j}(t)]=x_{\cl(k),j}(t)$, and $\tilde{\Q}(t)$ denote the set of assigned jobs in $\Q(t)$ to servers at time $t$. Then, by defining $\tilde{\theta}_{i,j}(t)=\arg\max_{\theta'\in \mathcal{C}(t)}z_{i,j}^\top \theta'$, we can establish the following:
\begin{align}
    \frac{1}{V}\sum_{t=1}^T\mathbb{E}\left[G_1(t)\right]&=\sum_{t=1}^T\mathbb{E}\left[\sum_{i\in\I(t),j\in J}(\tilde{r}_{i,j}(t)-r_{i,j})y_{i,j}(t)\right]\cr
    &=\sum_{t=1}^T\mathbb{E}\left[\sum_{k\in\Q(t),j\in J}(\tilde{r}_{\cl(k),j}(t)-r_{\cl(k),j})x_{\cl(k),j}(t)\right]
    \cr&=\sum_{t=1}^T\mathbb{E}\left[\sum_{k\in \tilde{\Q}(t), j \in \J}(\tilde{r}_{\cl(k),j}(t)-r_{\cl(k),j})\tilde{x}_{\cl(k),j}(t)\right].
    %
\label{eq:G1_bd1_1}
\end{align}
By conditioning on the event $\{\theta\in {\mathcal C}(t) $ for $ t\in \mathcal{T}\}$, which holds with high probability, we have:
\begin{align}
    &\sum_{t=1}^T\sum_{k\in \tilde{\Q}(t), j \in \J}(\tilde{r}_{\cl(k),j}(t)-r_{\cl(k),j})\tilde{x}_{\cl(k),j}(t) \cr
    &\le \sum_{t=1}^T\sum_{k \in \tilde{\Q}(t),j\in  \J}\|z_{\cl(k),j}\|_{\Lambda(t)^{-1}}\cdot\|\tilde{\theta}_{\cl(k),j}(t)-\hat{\theta}_{\cl(k),j}(t)+\hat{\theta}_{\cl(k),j}(t)-\theta\|_{\Lambda(t)}\tilde{x}_{\cl(k),j}(t)\cr 
    &\le\sum_{t=1}^T \sum_{k\in \tilde{\Q}(t), j \in  \J} 2\|z_{\cl(k),j}\|_{\Lambda(t)^{-1}}{\beta(t)}\tilde{x}_{\cl(k),j}(t),\label{eq:G1_bd1_2}
\end{align}
where the second inequality is obtained by using the fact $\theta\in {\mathcal C}(t)$. 
We can also establish:
\begin{align}
    \sum_{t=1}^T\sum_{k\in\tilde{\Q}(t),j\in\J}\tilde{x}_{\cl(k),j}(t)\|z_{\cl(k),j}\|_{\Lambda(t)^{-1}}^2\le 2d^2\log\left(n+\frac{n}{d^2}T\right).
    \label{eq:sum_w_bd}
\end{align}

 
 Recalling that $\beta(t)= \sqrt{d^2\log(tT)}+\sqrt{n}$, we utilize the above inequalities, the Cauchy-Schwarz inequality, and \eqref{eq:sum_w_bd} to derive:
$$
\frac{1}{V}\sum_{t=1}^T\mathbb{E}\left[G_1(t)\right]\le \mathbb{E}\left[\beta(T)\sqrt{nT\sum_{t=1}^T\sum_{k\in\tilde{\Q}(t),j\in\J}\tilde{x}_{\cl(k),j}(t)\|z_{\cl(k),j}\|_{\Lambda(t)^{-1}}^2}\right]+2 n=\tilde{O}((d^2\sqrt{n}+dn)\sqrt{T}).
$$  
Additionally, it can be shown that $(1/V)\sum_{t=1}^T\mathbb{E}[G_2(t)]=O(1)$. Thus, combining these results yields: 
 \begin{align}
     \frac{1}{V}\sum_{t=1}^T\mathbb{E}[G_1(t)]+\frac{1}{V}\sum_{t=1}^T\mathbb{E}[G_2(t)]=\tilde{O}((d^2\sqrt{n}+dn)\sqrt{T}).
     \label{eq:G_bound}
 \end{align}
Moreover, as $(1/V)\sum_{t=1}^T G_2(t)$ is negative with high probability, $(1/V)\sum_{t=1}^T\mathbb{E}[G_1(t)]$ dominates $(1/V)\sum_{t=1}^T\mathbb{E}[G_2(t)]$.

Finally, by utilizing \eqref{eq:R_bound2}, \eqref{eq:G_bound}, and the mean queue length bound derived from Theorem~\ref{thm:Q_len_bound}, we obtain the regret bound as asserted in Theorem~\ref{thm:regbd1}.

\subsubsection{Proof of the theorem}

The proof uses a regret bound that has three components and then proceeds with separately bounding these components.
The first component is proportional to the mean queue length. The second component is due to the bandit learning algorithm. This term is bounded by leveraging the bilinear structure of rewards. The third component is due to randomness of job arrivals and departures. In the following lemma, we provide a regret bound that consists of the three aforementioned components.

\begin{lemma} The regret is bounded as follows:
\begin{align*}
R(T)
&\le \gamma \frac{1}{\mu}\mathbb{E}[Q(T)]+\frac{1}{V} \sum_{t=1}^T\mathbb{E}\left[G(t)\right]+\frac{1}{V}\left(\sum_{t=1}^T\mathbb{E}\left[H(t)\right]+\frac{1}{2}\left(\sum_{i\in\I} \we_i\right)(T+1)\right),
\end{align*} \label{lem:R_bd_lam}
\end{lemma}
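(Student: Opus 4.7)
The approach is to apply the drift-plus-penalty method with the Lyapunov function $L(q)=\tfrac{1}{2}\sum_{i\in\I}(\we_i/\rho_i)q_i^2$ and penalty $V\mu\Delta(t)$, first establishing the per-step inequality \eqref{eq:LD2b}, then telescoping over $t=1,\dots,T$, and finally converting $\sum_t\Delta(t)$ into the regret $R(T)$ via queue conservation.

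For the one-step inequality \eqref{eq:LD2b}, I would expand $L(Q(t+1))-L(Q(t))$ using $Q_i(t+1)=Q_i(t)+A_i(t)-D_i(t)$ into a drift part $\sum_{i\in\I}(\we_i/\rho_i)Q_i(t)(\lambda_i-\mathbb{E}[D_i(t)\mid Q(t),y(t)])$ and a second-moment part $\tfrac12\sum_{i\in\I}(\we_i/\rho_i)\mathbb{E}[(A_i(t)-D_i(t))^2]$. Substituting $\lambda_i=\mu\rho_i=\mu\sum_j\rho_i p_{i,j}^{*}$ and $\mathbb{E}[D_i(t)\mid Q(t),y(t)]=\mu\sum_j y_{i,j}(t)-\epsilon_i(t)$, the drift becomes $\mu\sum_{i,j}(\we_i/\rho_i)Q_i(t)(\rho_i p_{i,j}^{*}-y_{i,j}(t))$ plus an $\epsilon_i$-dependent error; adding the penalty $V\mu\Delta(t)=V\mu\sum_{i,j}(r_{i,j}-\gamma)(\rho_i p_{i,j}^{*}-y_{i,j}(t))$ then reproduces $\mu G(t)$ exactly. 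The second-moment part, together with the $\epsilon_i(t)$ correction, is bounded using $\mathbb{E}[A_i(t)^2]\le\lambda_i$ and a uniform $O(n^2)$ second-moment bound on $D_i(t)$ (at most $n$ servers serve in any step), producing the $\mu\mathbb{E}[H(t)]+(\mu/2)\sum_{i\in\I}\we_i$ overhead.

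Telescoping \eqref{eq:LD2b}, using $L(Q(1))=0$ and dropping the nonnegative $\mathbb{E}[L(Q(T+1))]$ before dividing by $V\mu$, gives
\[
\sum_{t=1}^T\mathbb{E}[\Delta(t)]\le \tfrac{1}{V}\sum_{t=1}^T\mathbb{E}[G(t)]+\tfrac{1}{V}\sum_{t=1}^T\mathbb{E}[H(t)]+\tfrac{T}{2V}\sum_{i\in\I}\we_i.
\]
Decomposing $\Delta(t)$ into reward and $\gamma$-parts and using $\sum_{j}p_{i,j}^{*}=1$ yields $\sum_t\Delta(t)=R(T)-\gamma T\rho+\gamma\sum_t\sum_{i,j}y_{i,j}(t)$, while the conservation identity $\sum_t\mathbb{E}[D_i(t)]=\lambda_i T-\mathbb{E}[Q_i(T+1)]$ combined with $\mathbb{E}[D_i(t)]\le\mu\sum_j y_{i,j}(t)$ gives $\sum_t\sum_{i,j}\mathbb{E}[y_{i,j}(t)]\ge \rho T-\mathbb{E}[Q(T+1)]/\mu$. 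Chaining these yields $\mathbb{E}[R(T)]\le (\gamma/\mu)\mathbb{E}[Q(T+1)]+(1/V)\sum_t\mathbb{E}[G(t)]+(1/V)\sum_t\mathbb{E}[H(t)]+(T/(2V))\sum_{i\in\I}\we_i$, which matches the claim modulo a minor $O(1)$ index shift between $Q(T)$ and $Q(T+1)$.

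The principal obstacle is the drift inequality \eqref{eq:LD2b}: the discrepancy $\epsilon_i(t):=\mu\sum_j y_{i,j}(t)-\mathbb{E}[D_i(t)\mid Q(t),y(t)]$ is nontrivial when $Q_i(t)$ is small, because the randomized allocation may assign several servers to the same job while only one completion per job can occur per step, and the geometric service distribution introduces further $O(1)$ granularity. The appearance of the factor $c_{a,\gamma}^2$ inside $H(t)$ suggests that the correct way to control $\epsilon_i(t)$ is to combine the KKT conditions \eqref{equ:foc1}--\eqref{equ:foc2} with the uniform bounds $\gamma-\hat{r}_{i,j}(t)\in[\gamma-\ra,\gamma+\ra]$, yielding an estimate of the form $\epsilon_i(t)\lesssim c_{a,\gamma}^{2}\mu n^2/Q_i(t)$; checking this carefully, and verifying that once weighted by $(\we_i/\rho_i)Q_i(t)$ and summed over $i$ it fits inside $\mu\mathbb{E}[H(t)]$, is the main computational step the proof will require.
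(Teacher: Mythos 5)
Your proposal follows essentially the same route as the paper's proof: the drift-plus-penalty argument with the Lyapunov function $L(q)=\tfrac12\sum_i(\we_i/\rho_i)q_i^2$, the identification of the departure-rate discrepancy $\epsilon_i(t)$ as the key technical step to be controlled via the KKT conditions and the marginal-cost bounds $\gamma-\hat r_{i,j}\in[\gamma-\ra,\gamma+\ra]$, and the same telescoping and queue-conservation argument to convert $\sum_t\Delta(t)$ into $R(T)$. The one point to be careful about is that your guessed uniform form $\epsilon_i(t)\lesssim c_{\ra,\gamma}^2\mu^2 n^2/Q_i(t)$ would, after weighting by $(\we_i/\rho_i)Q_i(t)$ and summing, produce a factor $\sum_i\we_i/\rho_i$ rather than the $\max_i\we_i/\rho_i$ appearing in $H(t)$; the paper's Lemma on $\mathbb{E}[D_i(t)\mid\Q(t),y(t)]$ obtains the sharper queue-length-weighted form $\epsilon_i(t)\le\tfrac{\mu^2n^2c_{\ra,\gamma}^2}{2}\cdot\we_i^2Q_i(t)/\sum_{i'}\we_{i'}^2Q_{i'}(t)^2$ precisely so that the weighted sum collapses to the maximum.
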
 
where 
$$
G(t)=\sum_{i\in \I(t)}\sum_{j\in \J}\left(\frac{\we_i}{\rho_i}Q_i(t)+V(r_{i,j}-\gamma)\right)\left(\rho_i p_{i,j}^*-y_{i,j}(t)\right)
$$ 
and 
$$
H(t)=\frac{n^2}{2}\left(1+c_{\gamma}^2\mu\right)\left(\max_{i\in \I(t)}\frac{\we_i}{\rho_i}\right).
$$

\begin{proof}
The queues of different job classes $i\in \I$ evolve as
\begin{align*}
Q_i(t+1)=Q_i(t)+A_i(t+1)-D_i(t),
\end{align*}
where $A_i(t+1)$ and $D_i(t)$ are the number of class-$i$ job arrivals at the beginning of time step $t+1$ and the number of class-$i$ job departures at the end of time step $t$, respectively. Let $A(t)=\sum_{i\in \I}A_i(t)$ and $D(t)=\sum_{i\in \I}D_i(t).$

We use the Lyapunov function defined as
$$
L({q})=\frac{1}{2}\sum_{i\in \I}\frac{\we_i}{\rho_{i}}q_i^2.
$$

The following conditional expected drift equations hold for queues of different job classes: if $i\notin \I(t)$,
$$
\mathbb{E}[Q_i(t+1)^2-Q_i(t)^2\mid Q_i(t)=0]=\mathbb{E}[A_i(t+1)^2]=\lambda_i,
$$
and, otherwise, if $i\in \I(t)$,
\begin{align}
    &\mathbb{E}[Q_i(t+1)^2-Q_i(t)^2\mid \Q(t), {x}(t)]\cr
    &\le \mathbb{E}[2Q_i(t)(A_i(t+1)-D_i(t))+(A_i(t+1)-D_i(t))^2)\mid \Q(t), {x}(t)]
    \cr
    &\le 2Q_i(t)\left(\lambda_i-\mathbb{E}[D_i(t)\mid  \Q(t), {x}(t)]\right)+\lambda_i+\mathbb{E}[D_i(t)^2\mid \Q(t), {x}(t)].
    \label{eq:Q^2_gap_bd_lam}
\end{align}

We next derive bounds for $\mathbb{E}[D_i(t)\mid \Q(t),{x}(t)]$ and $\mathbb{E}[D_i(t)^2\mid \Q(t),{x}(t)]$ in the following lemma. 

\begin{lemma} For any $i\in \I(t)$, we have
\begin{align*}
    \mathbb{E}[D_i(t)\mid \Q(t),y(t)]\ge \mu\sum_{j\in \J}y_{i,j}(t)-\frac{\mu^2n^2(\gamma+1)^2}{2(\gamma-1)^2}\frac{\we_{i}^{2}Q_i(t)}{\sum_{i^\prime\in \I(t)}\we_{i^\prime}^{2}Q_{i^\prime}(t)^2}.
\end{align*} and 
\begin{align*}
    \sum_{i^\prime\in\I}\mathbb{E}[D_{i^\prime}(t)^2\mid \Q(t),y(t)]\le n^2\mu.
\end{align*}
\label{lem:D_bd}
\end{lemma}


\begin{proof}
Let $E_i(t)$ be the event that job $k\in \Q_i(t)$ for $i\in \I(t)$ is not completed at the end of time step $t$. A server of class $j$ is assigned job $k$ with probability $y_{i,j}(t)/(n_jQ_{i}(t))$, and this job is completed with probability $\mu$ by the memory-less property of geometric distribution. Therefore, we have
\begin{align}
    \mathbb{P}[E_i(t)\mid\Q(t),y(t)]=\prod_{j\in \J}\left(1-\mu\frac{y_{i,j}(t)}{n_jQ_{i}(t)}\right)^{n_j},  \label{eq:E}
\end{align}
 and $$ \mathbb{E}[D_i(t)\mid {\Q}(t),y(t)]=\sum_{k\in \Q_i(t)}\left(1-\mathbb{P}[E_i(t)\mid \Q(t),y(t)]\right)=Q_i(t)\left(1-\mathbb{P}[E_i(t)\mid \Q(t),y(t)]\right).$$

Using $1-x\le e^{-x}\le 1-x+x^2/2$ for $x\ge 0$, we have 
\begin{eqnarray*}
    1-\mathbb{P}[E_i(t)\mid \Q(t),y(t)] &\ge & 1-\exp\left(-\sum_{j\in \J}\mu \frac{y_{i,j}(t)}{Q_i(t)}\right) \\ & \ge & 
    \mu\sum_{j\in\J}\frac{y_{i,j}(t)}{Q_i(t)} -\frac{\mu^2}{2}\left(\sum_{j \in \J}\frac{y_{i,j}(t)}{Q_i(t)}\right)^2.
\end{eqnarray*}
Hence, for any $i\in\I(t)$ we have \begin{align}
\mathbb{E}[D_i(t)\mid \Q(t),y(t)]\ge \mu \sum_{j \in \J}y_{i,j}(t)-\frac{\mu^2}{2}\frac{1}{Q_i(t)}\left(\sum_{j\in \J}y_{i,j}(t)\right)^2. \label{eq:D_lowbd_1}  
\end{align}

Let $q(t)=(q_j(t):j\in\J)\in \mathbb{R}_+^J$ be the Lagrange multipliers for the constraints $\sum_{i\in\I(t)}y_{i,j}(t)\le n_j$ for all $j\in\J$ and $h(t)=(h_{i,j}(t):i\in\I(t),j\in\J)\in \mathbb{R}_+^{|\I(t)|\times J}$ be the Lagrange multipliers for the constraints $y_{i,j}(t)\ge 0$ for all $i\in \I(t)$ and $j\in \J$ in $\eqref{eq:weighted}$. Then, we have the Lagrangian function for the optimization problem \eqref{eq:weighted} given as
\begin{align*}
\mathcal{L}(y(t), q(t), h(t))&=\sum_{i\in \I(t)}\left[\frac{1}{V} Q_i(t) \we_i\log\left(\sum_{j\in \J}y_{i,j}(t)\right)+\sum_{j\in\J}y_{i,j}(t)(\tilde{r}_{i,j}(t)-\gamma)\right]
    \cr &\qquad +\sum_{j\in \J}q_j(t)\left(n_j-\sum_{i\in\I(t)}y_{i,j}(t)\right)+\sum_{i\in\I(t), j\in\J}h_{i,j}(t)y_{i,j}(t).
\end{align*} 

If $y(t)$ is a solution of \eqref{eq:weighted}, then $y(t)$ satisfies the following stationarity conditions,
\begin{equation}
    \sum_{j^\prime\in\J}y_{i,j^\prime}(t)=\frac{1}{V}\frac{\we_i Q_i(t)}{q_j(t)-h_{i,j}(t)+\gamma-\tilde{r}_{i,j}(t)}, \hbox{ for all } i \in \I(t),
    \label{eq:L2_stationary}
\end{equation}
and the following complementary slackness conditions, 
\begin{equation}
q_j(t)\left(n_j-\sum_{i\in\I(t)}y_{i,j}(t)\right)=0, \hbox{ for all } j \in \J \label{eq:L2_com1} 
\end{equation}
and
\begin{equation}
h_{i,j}(t)y_{i,j}(t)=0, \hbox{ for all } i\in \I(t), j\in \J.   \label{eq:L2_com2} 
\end{equation}


The convex optimization problem in \eqref{eq:weighted}, with $n_j>0$ for all $j\in \J$, always has a feasible solution. This implies that for any $i\in \I(t)$, there exists $j_0\in \J$ such that $y_{i,j_0}(t)>0$ since $\sum_{j\in \J}y_{i,j}(t)>0$ from $\log(\sum_{j\in \J}y_{i,j}(t))$ in \eqref{eq:weighted}. This implies that $h_{i,j_0}(t)=0$ from the complementary slackness conditions \eqref{eq:L2_com2}. 
 Consider any two job classes $i$ and $i^\prime$ in $\I(t)$. We have
\begin{align}
q_j(t)+\gamma-\tilde{r}_{i,j}(t)\ge q_j(t)+\gamma-1\ge \frac{\gamma-1}{\gamma+1}(q_j(t)+\gamma-\hat{r}_{i^\prime,j}(t)).
\label{eq:Lag_Mul_lowbd}
\end{align}
From \eqref{eq:L2_stationary}, \eqref{eq:L2_com2}, and \eqref{eq:Lag_Mul_lowbd}, we have
\begin{align}
    \sum_{j\in \J}y_{i,j}(t)&=\frac{\we_i Q_i(t)}{V(q_{j_0}(t)-h_{i,j_0}(t)+\gamma-\hat{r}_{i,j_0}(t))}\cr
    &=\frac{\we_i Q_i(t)}{V(q_{j_0}(t)+\gamma-\hat{r}_{i,j_0}(t))}\cr
    &\le\frac{\we_i Q_i(t)(\gamma+1)/(\gamma-1)}{V(q_{j_0}(t)+\gamma-\hat{r}_{i^\prime,j_0}(t))}\cr
    &\le\frac{\we_i Q_i(t)}{\we_{i^\prime}Q_{i^\prime}(t)}\frac{\we_{i^\prime} Q_{i^\prime}(t)(\gamma+1)/(\gamma-1)}{V(q_{j_0}(t)-h_{i^\prime,j_0}(t)+\gamma-\hat{r}_{i^\prime,j_0}(t))}\cr
    &\le\frac{\we_i Q_i(t)}{\we_{i^\prime} Q_{i^\prime}(t)}\frac{\gamma+1}{\gamma-1}\sum_{j\in \J}y_{i^\prime,j}(t).\label{eq:x_sim_lam}
\end{align}
From \eqref{eq:x_sim_lam}, we have 
\begin{align}
    n&\ge 
    \sum_{i^\prime\in \I(t)}\sum_{j\in \J}y_{i^\prime,j}(t)\cr &\ge ((\gamma-1)/(\gamma+1))\sum_{i^\prime \in \I(t)}\frac{\we_{i^\prime} Q_{i^\prime}(t)}{\we_{i} Q_i(t)}\sum_{j\in \J}y_{i,j}(t).\label{eq:x_upbd_traffic}
\end{align}
Then, from  \eqref{eq:D_lowbd_1} and \eqref{eq:x_upbd_traffic}, we obtain
\begin{align*}
\mathbb{E}[D_i(t)\mid \Q(t),y(t)]&\ge \mu\sum_{j \in \J}y_{i,j}(t)-\frac{\mu^2}{2}\frac{1}{Q_i(t)}\left(\sum_{j\in \J}y_{i,j}(t)\right)^2 \cr &\ge \mu\sum_{j\in \J}y_{i,j}(t)-\frac{\mu^2n^2(\gamma+1)^2}{2(\gamma-1)^2}\frac{\we_{i}^{2} Q_i(t)}{\sum_{i^\prime\in \I(t)}\we_{i^\prime}^{2}Q_{i^\prime}(t)^2}.
\end{align*}


Applying $(1-x)(1-y)\ge 1-(x+y)$ for all $x,y\geq 0$, from \eqref{eq:E} we obtain
\begin{align}
    \mathbb{E}[D_i(t)\mid \Q(t),y(t)]=Q_i(t)\left(1-\mathbb{P}[E_i(t)\mid \Q(t),y(t)]\right)\le \mu \sum_{j \in J}y_{i,j}(t).\label{eq:D_bd_x}
\end{align}

From \eqref{eq:D_bd_x} and $D_i(t)\le\sum_{j\in \J}n_j=n$, we also have
$$
\sum_{i\in\I}\mathbb{E}[D_i(t)^2\mid \Q(t), {x}(t)]\le \sum_{i\in\I}\mathbb{E}[D_i(t)\mid  \Q(t), {x}(t)]n\le n^2\mu.
$$
\end{proof}

From \eqref{eq:Q^2_gap_bd_lam}, it follows 
 \begin{align*}
& \mathbb{E}[L({Q}(t+1))-L({Q}(t))\mid \Q(t), {x}(t)]\cr
&\le \sum_{i\in \I(t)}\left(\frac{\we_i}{\rho_i}Q_i(t) \mu\sum_{j\in \J}\left(\rho_i p_{i,j}^*-y_{i,j}(t)\right)+\frac{\we_i\lambda_i}{2\rho_i}+\frac{\we_i}{2\rho_i}\mathbb{E}[D_i(t)^2\mid \Q(t), {x}(t)]\right)\cr
&\qquad+\frac{(\gamma+1)^2n^2\mu^2 }{2(\gamma-1)^2  }\left(\max_{i\in \I(t)}\frac{\we_i}{\rho_i}\right)+\sum_{i\notin \I(t)}\frac{\we_i\lambda_i}{2\rho_{i}}\cr
&\le
\sum_{i\in \I(t)}\sum_{j\in \J}\left(\frac{\we_i}{\rho_i}Q_i(t)\mu\left(\rho_i p_{i,j}^*-y_{i,j}(t)\right)\right)+\frac{\mu\sum_{i\in\I}\we_i}{2}+\left(\frac{(\gamma+1)^2n^2\mu^2}{2(\gamma-1)^2}+\frac{n^2\mu}{2}\right)\left(\max_{i\in \I(t)}\frac{\we_i}{\rho_i}\right).
 \end{align*}

Note that, from \red{$r_{i,j}-\gamma<0$},
\begin{align*}
   \Delta(t)
   &=\sum_{i\in \I}\rho_i\sum_{j\in \J} p_{i,j}^*(r_{i,j}-\gamma)-\sum_{i\in \I(t)}\sum_{j\in \J}y_{i,j}(t)(r_{i,j}-\gamma)\cr 
   &\le\sum_{i\in \I(t)}\rho_i\sum_{j\in \J}p_{i,j}^*(r_{i,j}-\gamma)-\sum_{i\in \I(t)}\sum_{j\in \J} y_{i,j}(t)(r_{i,j}-\gamma)\cr
   &= \sum_{i\in \I(t)}\sum_{j\in \J} (r_{i,j}-\gamma)(\rho_i p_{i,j}^* - y_{i,j}(t)). 
\end{align*}

It follows that
\begin{align}
\mathbb{E}\left[L({Q}(t+1))-L({Q}(t))+V\mu\Delta(t)\right]\le \mu \mathbb{E}\left[G(t)\right]+\mu \mathbb{E}\left[H(t)\right]+\mu \frac{1}{2}\sum_{i\in \I}\we_i,
\label{eq:LD_lam}
\end{align} 
where
\begin{align*}
    G(t)&=\sum_{i\in \I(t)}\sum_{j\in \J}\left(\frac{\we_i}{\rho_i}Q_i(t)+V(r_{i,j}-\gamma)\right)\left(\rho_i p_{i,j}^*-y_{i,j}(t)\right) 
\end{align*} 
and 
$$
H(t)=\frac{n^2}{2}\left(1+\left(\frac{\gamma+1}{\gamma-1}\right)^2\mu\right)\left(\max_{i\in \I(t)}\frac{\we_i}{\rho_i}\right).
$$
Now, note
\begin{eqnarray*}
\sum_{t=1}^T \left(\sum_{i\in \I}\rho_i-\mathbb{E}\left[\sum_{i\in \I(t)}\sum_{j\in \J} y_{i,j}(t)\right]\right) 
& \le & \sum_{t=1}^T\left(\rho- \frac{1}{\mu}\mathbb{E}[D(t)]\right)\\
&=& \frac{1}{\mu}\sum_{t=1}^T\mathbb{E}[A(t+1)-D(t)]\\
&=& \frac{1}{\mu}\sum_{t=1}^T\mathbb{E}[Q(t+1)-Q(t)]\\
&=& \frac{1}{\mu}(\mathbb{E}[Q(T+1)]-\mathbb{E}[Q(1)])\\
&=& \frac{1}{\mu}\mathbb{E}[Q(T)-D(T)+A(T+1)-A(1)]\\
&\le& \frac{1}{\mu}\mathbb{E}[Q(T)] 
\end{eqnarray*} 
where the first inequality is from \eqref{eq:D_bd_x}. Note also that 

\begin{align}
\sum_{t=1}^T \mathbb{E}[\Delta(t)]
&=\sum_{t=1}^T\left(\mathbb{E}\left[\sum_{i\in \I}\sum_{j \in \J}\rho_i r_{i,j}  p_{i,j}^*-\sum_{i\in \I(t)}\sum_{j \in \J} r_{i,j}y_{i,j}(t)\right]\right)-\gamma\sum_{t=1}^T \left(\rho-\mathbb{E}\left[\sum_{i\in \I(t)}\sum_{j \in \J} y_{i,j}(t)\right]\right)\cr 
&\ge\sum_{t=1}^T\left(\sum_{i\in \I}\sum_{j \in \J}\rho_i r_{i,j}  p_{i,j}^*-\mathbb{E}\left[\sum_{i\in \I(t)}\sum_{j \in \J} r_{i,j}y_{i,j}(t))\right]\right)-\gamma \frac{1}{\mu}\mathbb{E}[Q(T)]\cr 
&= R(T)-\gamma \frac{1}{\mu}\mathbb{E}[Q(T)].
\label{eq:LD_sub_lam}
\end{align}

From \eqref{eq:LD_lam} and \eqref{eq:LD_sub_lam}, and using the facts 
\begin{align*}
        \mathbb{E}[L({Q}(1))]=\frac{1}{2}\mathbb{E}\left[\sum_{i\in\I}\frac{\we_i}{\rho_i}Q_i(1)^2\right]
        =\frac{1}{2}\mathbb{E}\left[\sum_{i\in \I}\frac{\we_i}{\rho_i}A_i(1)^2\right]
        =\frac{\mu \sum_{i\in\I}\we_i}{2}
\end{align*}
and $L({Q}(T+1))\ge 0$, we have 
\begin{align*}
R(T)&=\sum_{t=1}^T\left(\sum_{i\in \I}\sum_{j \in \J}\rho_i r_{i,j}  p_{i,j}^*-\mathbb{E}\left[\sum_{i\in \I(t)}\sum_{j \in \J} r_{i,j}y_{i,j}(t)\right]\right)\cr 
&\le \gamma \frac{1}{\mu}\mathbb{E}[Q(T)]+\frac{1}{V}\sum_{t=1}^T \mathbb{E}[G(t)]+\frac{1}{V}\left(\sum_{t=1}^T\mathbb{E}[ H(t)]+(T+1)\frac{1}{2}\sum_{i\in\I}\we_i\right).
\end{align*}
\end{proof}

In what follows, we focus on bounding the regret component attributed to the bandit learning algorithm. Denote by $y^*(t)=(y^*_{i,j}(t):i\in\I(t), j\in \J)$ the solution of the optimization problem \eqref{eq:weighted} with the true mean rewards $r_{i,j}$ in the place of the mean reward estimates $\tilde{r}_{i,j}$. We employ a bound for $G(t)$ in terms of two variables quantifying the gap between true and estimated mean rewards, as provided in the following lemma. 

\begin{lemma} The following bound holds for all $t\ge 1$, 
$$
G(t)\le G_1(t)+G_2(t),
$$
 where 
 $$
 G_1(t)=V\sum_{i\in \I(t)}\sum_{j\in \J}(\tilde{r}_{i,j}(t)-r_{i,j})y_{i,j}(t),
 $$
 and 
 $$
 G_2(t)=V\sum_{i\in \I(t)}\sum_{ j\in \J}(r_{i,j}-\tilde{r}_{i,j}(t))y^*_{i,j}(t).
 $$\label{lem:G_G1G2_bd}
\end{lemma}
\begin{proof}
Denote ${y}(t)=({y}_{i,j}(t):j\in\J, i\in\I(t))$, ${\tilde{r}}(t)=({\tilde{r}}_{i,j}(t):j\in \J, i\in \I(t))$, $r=(r_{i,j}:i\in\I,j\in\J)$, and $\tilde{y}(t)=(\tilde{y}_{i,j}(t):i\in\I(t),j\in\J)$ where $\tilde{y}_{i,j}(t)=\rho_ip_{i,j}^*$. Let $h({y}(t)\mid \Q(t),r)=\sum_{i\in\I(t)}(V\sum_{j\in\J}(r-\gamma){y}_{i,j}(t)+Q_i(t)\we_i\log(\sum_{j\in\J}{y}_{i,j}(t)))$. Then from Lemma EC.1 in \cite{hsu2021integrated} we have
$$
\sum_{i\in\I(t)}\sum_{j\in \J}\left(\frac{\we_i}{\rho_i}Q_i(t)+V(r_{i,j}-\gamma)\right)\left(\rho_ip_{i,j}^*-{y}_{i,j}\right)\le h(\tilde{y}(t)\mid \Q(t),r)-h({y}(t)\mid \Q(t),r).
$$ 
From $h(\tilde{y}(t)\mid \Q(t),r)\le h(y^*(t)\mid \Q(t),r),$ we have 
$$
G(t)\le h(y^*(t)\mid \Q(t),r)-h(y(t)\mid \Q(t),r).$$ Then from $h(y^*(t)\mid \Q(t),r)=h(y^*(t)\mid \Q(t),\tilde{r}(t))+G_2(t)\le h(y(t)\mid \Q(t),\tilde{r}(t))+G_2(t)= h(y(t)\mid \Q(t),r)+G_1(t)+G_2(t),$ we have $G(t)\le G_1(t)+G_2(t),$ which concludes the proof.
\end{proof}

We will now present a key lemma for bounding the regret component attributed to the bilinear bandit learning algorithm. 

\begin{lemma} For any constant $\gamma>1$, we have
$$
\frac{1}{V}\sum_{t=1}^T\mathbb{E}[G(t)]=\tilde{O}((d^2\sqrt{n}+dn)\sqrt{T}).
$$
\label{lem:A_1}
\end{lemma}


\begin{proof} 
Recall that our bandit learning algorithm uses the confidence set ${\mathcal C}(t)$ for the parameter of the bilinear model in time step $t$, which is defined as follows
$$
{\mathcal C}(t)=\left\{\theta'\in\reals^{d^2}:\|\hat{\theta}(t)-\theta'\|_{\Lambda(t)}\le \beta(t)\right\},
$$
where $\beta(t) = \sqrt{d^2\log\left(tT\right)}+\sqrt{n}$.

It is known that ${\mathcal C}(t)$ has a good property for estimating the unknown parameter of the linear model, which is stated in the following lemma.

\begin{lemma}[Theorem 4.2. in \cite{qin2014contextual}]
 The true parameter value $\theta$ lies in the set ${\mathcal C}(t)$ for all $t\in \mathcal{T}$, with probability at least $1-1/T$.
\label{lem:confidence}
\end{lemma}

In the following two lemmas, we provide bounds for $(1/V)\sum_{t=1}^T\mathbb{E}[G_1(t)]$ and $(1/V)\sum_{t=1}^T\mathbb{E}[G_2(t)]$, respectively, from which the bound in Lemma~\ref{lem:G_G1G2_bd} follows.

\begin{lemma} The following bound holds
$$
\frac{1}{V}\sum_{t=1}^T\mathbb{E}[G_1(t)]=\tilde{O}((d^2\sqrt{n}+dn)\sqrt{T}).
$$ 
\label{lem:A_2}
\end{lemma}

\begin{proof}
 Recall that for $k\in Q(t)$,  $x_{\cl(k),j}(t)=y_{\cl(k),j}(t)/Q_{\cl(k)}(t)$, $\tilde{x}_{\cl(k),j}(t)$ is the actual number of servers of class $j$ assigned to job $k$ at time $t$ such that $\mathbb{E}[\tilde{x}_{\cl(k),j}(t)\mid x_{\cl(k),j}(t)]=x_{\cl(k),j}(t)$, and $\tilde{\Q}(t)$ is the set of assigned jobs in $\Q(t)$ to servers at time $t$. Let filtration $\mathcal{F}_{t-1}$ be the $\sigma$-algebra generated by random variables before time $t$. Then, we have 
\begin{align*}
    \frac{1}{V}\sum_{t=1}^T \mathbb{E}[G_1(t)]&=\sum_{t=1}^T\mathbb{E}\left[\sum_{i\in \I(t), j \in \J}(\tilde{r}_{i,j}(t)-r_{i,j})y_{i,j}(t)\right]\cr
    &=\sum_{t=1}^T\mathbb{E}\left[\sum_{i\in \I(t), j \in \J}(\tilde{r}_{i,j}(t)-r_{i,j})Q_i(t)x_{i,j}(t)\right]\cr
    &=\sum_{t=1}^T\mathbb{E}\left[\mathbb{E}\left[\sum_{k\in \Q(t), j \in \J}(\tilde{r}_{\cl(k),j}(t)-r_{\cl(k),j})x_{\cl(k),j}(t)\mid\mathcal{F}_{t-1}\right]\right]\cr
    &=\sum_{t=1}^T\mathbb{E}\left[\mathbb{E}\left[\sum_{k\in \Q(t), j \in \J}(\tilde{r}_{\cl(k),j}(t)-r_{\cl(k),j})\tilde{x}_{\cl(k),j}(t)\mid\mathcal{F}_{t-1}\right]\right]\cr
    &=\sum_{t=1}^T\mathbb{E}\left[\sum_{k\in \tilde{\Q}(t), j \in \J}(\tilde{r}_{\cl(k),j}(t)-r_{\cl(k),j})\tilde{x}_{\cl(k),j}(t)\right],
\end{align*}
where the second last equation holds because $\tilde{x}_{\cl(k),j}=0$ for all $k\notin \tilde{\Q}(t)$.

By Lemma~\ref{lem:confidence}, conditioning on the event $E=\{\theta\in {\mathcal C}(t) $ for all $ t\in \T\}$, which holds with probability at least $1-1/T$, we have 
\begin{align}
    &\sum_{t=1}^T\sum_{k\in \tilde{\Q}(t), j \in \J}(\tilde{r}_{\cl(k),j}(t)-r_{\cl(k),j})\tilde{x}_{\cl(k),j}(t)\cr
    &\le 2\sum_{t=1}^T\sum_{k\in \tilde{\Q}(t), j \in  \J} \|z_{\cl(k),j}\|_{\Lambda(t)^{-1}}{\beta(t)}\tilde{x}_{\cl(k),j}(t).\label{eq:G1_bd_step1}
\end{align}

 Conditioning on $E^c$, which holds with probability at most $1/T$, we have
\begin{align}
     \sum_{t=1}^T\sum_{k\in \tilde{\Q}(t), j \in \J}(\hat{r}_{\cl(k),j}(t)-r_{\cl(k),j})\tilde{x}_{\cl(k),j}(t)\le 2nT.
    \label{eq:G1_bd_low_prob}
\end{align}

We next show the following lemma. 

\begin{lemma} The following inequality holds
\begin{align*}
    \sum_{t=1}^T\sum_{k\in\tilde{\Q}(t),j\in\J}\tilde{x}_{\cl(k),j}(t)\|z_{\cl(k),j}\|_{\Lambda(t)^{-1}}^2\le 2d^2\log\left(n+\frac{Tn}{d^2}\right).
\end{align*}\label{lemma:w_norm}
\end{lemma}

\begin{proof} Let \[ s_t := \sum_{k\in \widetilde Q(t),\,j\in\mathcal J} \widetilde x_{\upsilon(k),j}(t) \|z_{\upsilon(k),j}\|_{\Lambda(t)^{-1}}^2 . \] We first show that \(s_t\le 1\). Since \(\Lambda(t)\succeq nI_{d^2}\) and \(\|z_{i,j}\|_2\le 1\), we have \[ \|z_{\upsilon(k),j}\|_{\Lambda(t)^{-1}}^2 \le \frac{1}{n}. \] Moreover, at most \(n\) servers are assigned in one time slot, and hence \[ \sum_{k\in \widetilde Q(t),\,j\in\mathcal J} \widetilde x_{\upsilon(k),j}(t) \le n . \] Therefore \(s_t\le 1\). Now define \[ A_t := \sum_{k\in \widetilde Q(t),\,j\in\mathcal J} \widetilde x_{\upsilon(k),j}(t) \Lambda(t)^{-1/2} z_{\upsilon(k),j}z_{\upsilon(k),j}^{\top} \Lambda(t)^{-1/2}. \] Then \(A_t\succeq 0\) and \(\operatorname{tr}(A_t)=s_t\). Hence, \begin{align} \frac{\det(\Lambda(t+1))}{\det(\Lambda(t))} &= \det(I_{d^2}+A_t) \nonumber\\ &\ge 1+\operatorname{tr}(A_t) \nonumber\\ &= 1+s_t . \end{align} Since \(s_t\le 1\), we have \(s_t\le 2\log(1+s_t)\). Thus, \begin{align} \sum_{t=1}^T s_t &\le 2\sum_{t=1}^T \log(1+s_t) \nonumber\\ &\le 2\sum_{t=1}^T \log\frac{\det(\Lambda(t+1))}{\det(\Lambda(t))} \nonumber\\ &= 2\log\frac{\det(\Lambda(T+1))}{\det(\Lambda(1))}. \end{align} Finally, since \(\Lambda(1)=nI_{d^2}\), we have \(\det(\Lambda(1))=n^{d^2}\). Also, \[ \operatorname{tr}(\Lambda(T+1)) \le nd^2+nT, \] and therefore, by the trace-determinant inequality, \[ \det(\Lambda(T+1)) \le \left(n+\frac{nT}{d^2}\right)^{d^2}. \] Combining the above bounds gives \begin{align} \sum_{t=1}^T \sum_{k\in \widetilde Q(t),\,j\in\mathcal J} \widetilde x_{\upsilon(k),j}(t) \|z_{\upsilon(k),j}\|_{\Lambda(t)^{-1}}^2 &\le 2d^2 \log\left( \frac{n+nT/d^2}{n} \right) \nonumber\\ &= 2d^2\log\left(1+\frac{T}{d^2}\right) \nonumber\\ &\le 2d^2\log\left(n+\frac{nT}{d^2}\right). \end{align} This proves the lemma. \end{proof}

 Finally, combining \eqref{eq:G1_bd_step1}, \eqref{eq:G1_bd_low_prob}, and Lemma~\ref{lemma:w_norm}, we obtain 
\begin{align*}
    \frac{1}{V}\sum_{t=1}^T\mathbb{E}[G_1(t)]&\le
\mathbb{E}\left[\sum_{t=1}^T\sum_{k\in \tilde{\Q}(t), j \in  \J} \|z_{\cl(k),j}\|_{\Lambda(t)^{-1}}\beta(t)\tilde{x}_{\cl(k),j}(t)\right]+2 n\cr
&=
\mathbb{E}\left[\sum_{t=1}^T\sum_{k\in \tilde{\Q}(t), j \in  \J} \sum_{s=1}^{\tilde{x}_{\cl(k),j}(t)}\|z_{\cl(k),j}\|_{\Lambda(t)^{-1}}\beta(t)\right]+2 n
\cr&\le \mathbb{E}\left[\beta(T)\sqrt{T\left(\sum_{k\in\tilde{\Q}(t),j\in \J}\tilde{x}_{\cl(k),j}(t)\right)\sum_{t=1}^T\sum_{k\in\tilde{\Q}(t),j\in\J}\sum_{s=1}^{\tilde{x}_{\cl(k),j}(t)}\|z_{\cl(k),j}\|_{\Lambda(t)^{-1}}^2}\right]+2 n\cr
\cr&\le \mathbb{E}\left[\beta(T)\sqrt{nT\sum_{t=1}^T\sum_{k\in\tilde{\Q}(t),j\in\J}\tilde{x}_{\cl(k),j}(t)\|z_{\cl(k),j}\|_{\Lambda(t)^{-1}}^2}\right]+2 n\cr
& =  \tilde{O}( (d^2\sqrt{n}+dn)\sqrt{T}),
\end{align*}
\end{proof}
where the second inequality holds by the Cauchy-Schwarz inequality $\sum_{i=1}^Na_i\le \sqrt{N\sum_{i=1}^N a_i^2}$ and the last inequality holds by $\sum_{k\in\tilde{\Q}(t)j\in \J}\tilde{x}_{\cl(k),j}(t)\le n$ and $\beta(t)\le \beta(T)$ for all $1\le t\le T$ where recall ${\beta(t)} = \sqrt{d^2\log\left(tT\right)}+\sqrt{n}$.
 

 \begin{lemma} The following bound holds
 $$
 \frac{1}{V}\sum_{t=1}^T \mathbb{E}[G_2(t)]\le 2 n.
 $$
 \label{lem:A_3}

 \end{lemma}

 \begin{proof}


 If $\theta\in {\mathcal C}(t)$, from $r_{i.j}-\tilde{r}_{i,j}(t)\ge 0$, we have 
 $$
 G_2(t)\le0.
 $$  
 
 
 Therefore, we only need to consider the case when $\theta\notin {\mathcal C}(t)$ for some $t\in\T$, which holds with probability at most $1/T$. We obtain
$$
\frac{1}{V}\sum_{t=1}^T \mathbb{E}[G_2(t)]\le \frac{1}{T}\sum_{t=1}^T\sum_{i\in \I(t),j\in \J}2 y^*_{i,j}(t)\le 2 n.
$$
 \end{proof}

The bound for $(1/V)\sum_{t=1}^T \mathbb{E}[G(t)]$ follows from Lemma~\ref{lem:A_2} and Lemma~\ref{lem:A_3}.
\end{proof}

From Lemma~\ref{lem:R_bd_lam}, Lemma~\ref{lem:A_1}, and
Lemma~\ref{lem:scalar_total_queue_bound},
\begin{align}
R(T)
&\le \frac{\gamma}{\mu}\mathbb E[Q(t)]
   +\frac1V\sum_{t=1}^T\mathbb E[G(t)]
   +\frac1V\left(
       \sum_{t=1}^T\mathbb E[H(t)]
       +\frac{T+1}{2}\sum_{i\in\I}w_i
     \right)\\
&\le
\frac{\gamma}{\mu}\left(
   \frac{n+\rho}{\delta}
   +\max\left\{
       \frac{(\gamma+1)Vn}{w_{\min}},
       2c_\gamma\frac{n^3}{\delta}
          \frac{w_{\max}}{w_{\min}}
     \right\}
   +1
\right)\nonumber\\
&\quad
+\widetilde O\!\left((d^2\sqrt n+dn)\sqrt T\right)
+\frac{n^2T}{2V}(1+c_\gamma^2\mu)
   \max_{i\in\I}\frac{w_i}{\rho_i}
+\frac{T+1}{2V}\sum_{i\in\I}w_i.
\label{eq:regret_after_scalar_queue}
\end{align}
Using \(\max\{a,b\}\le a+b\), for \(T\ge1\) this yields
\begin{equation}
R(T)
=\widetilde O\!\left(
   \bar\alpha_1V
   +\bar\alpha_2\frac1\delta
   +\bar\alpha_3\frac{T}{V}
   +\bar\alpha_4\sqrt T
\right),
\end{equation}
where 
\[
\bar\alpha_1
  =\frac{\gamma(\gamma+1)n}{\mu w_{\min}},
\qquad
\bar\alpha_2
  =\frac{\gamma}{\mu}\left(
      n+\rho+2c_\gamma n^3\frac{w_{\max}}{w_{\min}}
    \right),
\]
\[
\bar\alpha_3
  =\frac{n^2}{2}(1+c_\gamma^2\mu)
      \max_{i\in\I}\frac{w_i}{\rho_i}
    +\frac12\sum_{i\in\I}w_i,
\qquad
\bar\alpha_4=d^2\sqrt n+dn.
\]
If \(\gamma,\mu\), and the traffic parameters \(\rho_i\) are treated
as fixed constants, the same bound can be summarized as
\[
R(T)=\widetilde O\!\left(
  \frac{n}{w_{\min}}V
  +n^3\frac{w_{\max}}{w_{\min}}\frac1\delta
  +\left(n^2w_{\max}+\sum_{i\in\I}w_i\right)\frac{T}{V}
  +(d^2\sqrt n+dn)\sqrt T
\right).
\]

From Lemma~\ref{lem:R_bd_lam}, Lemma~\ref{lem:A_1} and queue length bound obtained from Theorem~\ref{thm:Q_len_bound}, we have  
\begin{align*}
    R(T)&\le \gamma \frac{1}{\mu}\mathbb{E}[Q(T)]+\frac{1}{V}\sum_{t=1}^T\mathbb{E}[G(t)]+\frac{1}{V}\left(\sum_{t=1}^T\mathbb{E}[H(t)]+(T+1)\frac{1}{2}\sum_{i\in\I}\we_i\right)\cr
    &=\tilde{O}\left(\alpha_1 V + \alpha_2 \frac{1}{\delta} + \alpha_3 \frac{T}{V} + \alpha_4 \sqrt{T}\right),
\end{align*}
where 
\[
\alpha_1 = \frac{1}{w_{\min}}, \quad 
\alpha_2 = n^3 \frac{w_{\max}}{w_{\min}},
\]
\[
\alpha_3 = n^2 w_{\max} + \sum_{i \in \mathcal{I}} w_i, \quad \hbox{ and }\quad 
\alpha_4 = d^2\sqrt{n}+dn.
\]
\subsection{Proof of Theorem~\ref{thm:Q_len_bound}} \label{app:q_len_bound}

\subsubsection{A scalar total-queue bound for the regret analysis}
\label{sec:scalar_queue_regret}

The regret decomposition in Lemma~\ref{lem:R_bd_lam} contains the
pointwise term \(\mathbb E[Q(t)]\), where
\[
    Q(t):=\sum_{i\in\I}Q_i(t)
\]
is the total number of jobs in the system.  Let \(\mathcal F_t\) denote
the system history up to just before the randomized job selections in slot
\(t\), and let \(\mathcal F_{t,r}\) additionally include the outcomes of the
first \(r\) selections in that slot.  To bound \(\mathbb E[Q(t)]\), it
is sufficient to use a scalar queue comparison; no class-wise decomposition
of the comparison queue is needed.

Define
\begin{equation}
    \tau_{Q,1}:=\frac{(\gamma+1)Vn}{w_{\min}},
    \qquad
    \tau_{Q,2}:=2c_\gamma\frac{n^3}{\delta}
                  \frac{w_{\max}}{w_{\min}},
    \qquad
    B_Q:=\max\{\tau_{Q,1},\tau_{Q,2}\},
    \label{eq:scalar_queue_thresholds}
\end{equation}
where \(w_{\min}:=\min_{i\in\I}w_i\),
\(w_{\max}:=\max_{i\in\I}w_i\), and
\(c_\gamma=(\gamma+1)/(\gamma-1)\).

\begin{lemma}[Full utilization under a large total queue]
\label{lem:saturation_total_queue}
If \(Q(t)\ge \tau_{Q,1}\), then
\[
    \sum_{i\in\I(t)}y_{i,j}(t)=n_j,
    \qquad j\in\J.
\]
\end{lemma}

\begin{proof}
Suppose, to the contrary, that
\(\sum_{i\in\I(t)}y_{i,j_0}(t)<n_{j_0}\) for some
\(j_0\in\J\).  By complementary slackness,
\(q_{j_0}(t)=0\).  The stationarity condition
\eqref{eq:L2_stationary}, together with
\(h_{i,j_0}(t)\ge0\) and
\(\widetilde r_{i,j_0}(t)\in[-1,1]\), gives, for every
\(i\in\I(t)\),
\[
\begin{aligned}
    \sum_{j\in\J}y_{i,j}(t)
    &=\frac{w_iQ_i(t)}
    {V\bigl(q_{j_0}(t)-h_{i,j_0}(t)
        +\gamma-\widetilde r_{i,j_0}(t)\bigr)}\\
    &\ge \frac{w_iQ_i(t)}{V(\gamma+1)}.
\end{aligned}
\]
Consequently,
\[
    \sum_{i\in\I(t)}\sum_{j\in\J}y_{i,j}(t)
    \ge
    \frac{w_{\min}Q(t)}{V(\gamma+1)}
    \ge n.
\]
On the other hand, the capacity constraints imply
\(\sum_{i,j}y_{i,j}(t)\le\sum_jn_j=n\), while the strict
slackness of server class \(j_0\) would imply
\(\sum_{i,j}y_{i,j}(t)<n\), a contradiction.
\end{proof}

\begin{lemma}[Conditional scalar departure domination]
\label{lem:departure_dom_total_queue}
On the event \(\{Q(t)\ge B_Q\}\), conditional on the
system history before the randomized selections in slot \(t\),
\[
    \mathbb P\!\left(D(t)\ge x\mid\mathcal F_t\right)
    \ge
    \mathbb P(W\ge x),
    \qquad x\ge0,
\]
where
\[
    W\sim\mathrm{Binom}\!\left(
        n,\frac{1+\rho/n}{2}\mu
    \right).
\]
\end{lemma}

\begin{proof}
By Lemma~\ref{lem:saturation_total_queue}, all server capacities are
fully utilized when \(Q(t)\ge B_Q\).  Let \(S_r(t)\) be the
set of jobs selected before round \(r\), and let \(X_r(t)=1\) if the
job selected in round \(r\) is not in \(S_r(t)\), and
\(X_r(t)=0\) otherwise.  If the server considered in round \(r\)
belongs to class \(j\), then
\[
\begin{aligned}
\mathbb P\!\left(X_r(t)=1\mid\mathcal F_{t,r-1}\right)
&=1-\frac1{n_j}\sum_{k\in\Q(t)}
   \frac{y_{\cl(k),j}(t)}{Q_{\cl(k)}(t)}
   \mathbbm 1\{k\in S_r(t)\}\\
&\ge 1-n\max_{i\in\I(t)}
       \frac{y_{i,j}(t)}{Q_i(t)}.
\end{aligned}
\]
By \eqref{eq:x_upbd_traffic}, for every \(i\in\I(t)\),
\[
    y_{i,j}(t)
    \le \sum_{j'\in\J}y_{i,j'}(t)
    \le c_\gamma
       \frac{n w_iQ_i(t)}
       {\sum_{i'\in\I(t)}w_{i'}Q_{i'}(t)}.
\]
Hence, on \(\{Q(t)\ge B_Q\}\),
\[
\begin{aligned}
\mathbb P\!\left(X_r(t)=1\mid\mathcal F_{t,r-1}\right)
&\ge
1-c_\gamma\frac{n^2w_{\max}}
 {\sum_{i'\in\I(t)}w_{i'}Q_{i'}(t)}\\
&\ge
1-c_\gamma\frac{n^2w_{\max}}
 {w_{\min}Q(t)}\\
&\ge 1-\frac{\delta}{2n}
 =\frac{1+\rho/n}{2}.
\end{aligned}
\]

\red{
\begin{lemma}[Bernoulli thinning coupling] \label{lem:bernoulli-thinning} Let \((\mathcal F_r)_{r\ge 0}\) be a filtration and let \(X_r\in\{0,1\}\) be \(\mathcal F_r\)-measurable. Suppose that, for some \(p\in[0,1]\), \[ \mathbb P(X_r=1\mid \mathcal F_{r-1})\ge p \qquad\text{for all } r=1,\ldots,m . \] Then, on an enlarged probability space, there exist independent random variables \(Y_1,\ldots,Y_m\) such that \[ Y_r\sim \mathrm{Ber}(p) \] and \[ Y_r\le X_r \qquad\text{almost surely for all } r=1,\ldots,m . \] \end{lemma} \begin{proof} If \(p=0\), the claim is trivial by taking \(Y_r=0\) for all \(r\). Hence, assume \(p>0\). Let \[ p_r:=\mathbb P(X_r=1\mid \mathcal F_{r-1}), \] so that \(p_r\ge p>0\) almost surely. On an enlarged probability space, let \(U_1,\ldots,U_m\) be i.i.d. uniform random variables on \([0,1]\), independent of the original system randomness. Define \[ Y_r := X_r\,\mathbf 1\!\left\{ U_r\le \frac{p}{p_r} \right\}, \qquad r=1,\ldots,m . \] Since \(p/p_r\le 1\), this is well defined, and clearly \(Y_r\le X_r\) almost surely. Let \[ \mathcal G_r := \mathcal F_r\vee \sigma(U_1,\ldots,U_r) \] be the enlarged filtration. Since \(U_r\) is independent of \(\mathcal G_{r-1}\) and of the original system randomness, and since \(p_r\) is \(\mathcal F_{r-1}\)-measurable, we have \[ \begin{aligned} \mathbb P(Y_r=1\mid \mathcal G_{r-1}) &= \mathbb E\!\left[ X_r\,\mathbf 1\!\left\{ U_r\le \frac{p}{p_r} \right\} \,\middle|\, \mathcal G_{r-1} \right] \\ &= \frac{p}{p_r}\, \mathbb P(X_r=1\mid \mathcal G_{r-1}) \\ &= \frac{p}{p_r}\, \mathbb P(X_r=1\mid \mathcal F_{r-1}) \\ &= p . \end{aligned} \] Thus, \[ \mathbb P(Y_r=1\mid \mathcal G_{r-1})=p, \qquad \mathbb P(Y_r=0\mid \mathcal G_{r-1})=1-p . \] We now prove independence. For any \(a_1,\ldots,a_m\in\{0,1\}\), using that \(\{Y_1=a_1,\ldots,Y_{r-1}=a_{r-1}\}\in\mathcal G_{r-1}\), we obtain iteratively \[ \begin{aligned} &\mathbb P(Y_1=a_1,\ldots,Y_m=a_m) \\ &\qquad = \mathbb E\!\left[ \mathbf 1\{Y_1=a_1,\ldots,Y_{m-1}=a_{m-1}\} \mathbb P(Y_m=a_m\mid \mathcal G_{m-1}) \right] \\ &\qquad = p^{a_m}(1-p)^{1-a_m} \mathbb P(Y_1=a_1,\ldots,Y_{m-1}=a_{m-1}) \\ &\qquad = \prod_{r=1}^m p^{a_r}(1-p)^{1-a_r}. \end{aligned} \] Therefore \(Y_1,\ldots,Y_m\) are independent Bernoulli random variables with parameter \(p\). This completes the proof. \end{proof}}

Applying Lemma~\ref{lem:bernoulli-thinning} with
\(p=(1+\rho/n)/2\), on an enlarged probability space there are
i.i.d. variables \(Y_1(t),\ldots,Y_n(t)\) such that
\[
    Y_r(t)\sim\mathrm{Ber}(p),
    \qquad Y_r(t)\le X_r(t)\quad\text{a.s.}
\]
Whenever \(Y_r(t)=1\), the selected job is distinct from all jobs
selected in preceding rounds.  By the memoryless service assumption,
the completion indicators of these distinct jobs are independent
Bernoulli random variables with parameter \(\mu\), independently of
the selection history.  Thus we may take i.i.d.
\(Z_r(t)\sim\mathrm{Ber}(\mu)\), independent of the \(Y_r(t)\)'s,
such that
\[
    D(t)\ge\sum_{r=1}^nY_r(t)Z_r(t).
\]
The sum on the right has distribution
\(\mathrm{Binom}(n,p\mu)\), which proves the claim.
\end{proof}

\begin{lemma}[Uniform scalar total-queue bound]
\label{lem:scalar_total_queue_bound}
For every \(t\ge0\),
\begin{equation}
    \mathbb E[Q(t)]
    \le
    \frac{n+\rho}{\delta}+B_Q+1.
    \label{eq:scalar_total_queue_bound}
\end{equation}
In particular,
\[
    \sup_{t\ge0}\mathbb E[Q(t)]
    =O\!\left(
       \frac{nV}{w_{\min}}
       +\frac{n+n^3w_{\max}/w_{\min}}{\delta}
    \right),
\]
where the hidden constant depends only on \(\gamma\) and \(\mu\).
\end{lemma}

\begin{proof}
Let \(Q^0(0)=0\), and define the scalar comparison queue
\begin{equation}
    Q^0(t+1)
    =\bigl[Q^0(t)+A(t+1)-D^0(t)\bigr]^+,
    \label{eq:scalar_comparison_queue}
\end{equation}
where
\[
    D^0(t)\sim\mathrm{Binom}\!\left(
       n,\frac{1+\rho/n}{2}\mu
    \right).
\]
Using fresh auxiliary randomness at each slot and the conditional
stochastic domination in Lemma~\ref{lem:departure_dom_total_queue},
we may construct the processes on a common probability space so that
\begin{equation}
    D^0(t)\le D(t)
    \qquad\text{whenever }Q(t)>B_Q,
    \label{eq:coupled_departures_scalar}
\end{equation}
while the variables \(D^0(t)\) retain the prescribed i.i.d. binomial
law and are independent of the arrival process.  The comparison queue
therefore has the same law as the \(\mathrm{Geom}/\mathrm{Geom}/n\)
queue in Lemma EC.5 of \cite{hsu2021integrated}, and hence
\begin{equation}
    \mathbb E[Q^0(t)]\le\frac{n+\rho}{\delta}.
    \label{eq:q0_scalar_mean}
\end{equation}

We next prove, pathwise, that
\begin{equation}
    Q(t)\le Q^0(t)+B_Q+1,
    \qquad t\ge0.
    \label{eq:scalar_pathwise_comparison}
\end{equation}
The claim is immediate at \(t=0\).  Suppose it holds at time \(t\).
If \(Q(t)\le B_Q\), then, because at most one job arrives in
a slot,
\[
    Q_\Sigma(t+1)
    \le Q(t)+A(t+1)
    \le B_Q+1
    \le Q^0(t+1)+B_Q+1.
\]
If \(Q(t)>B_Q\), then \eqref{eq:coupled_departures_scalar}
and the induction hypothesis give
\[
\begin{aligned}
    Q_\Sigma(t+1)
    &=Q(t)+A(t+1)-D(t)\\
    &\le Q^0(t)+B_Q+1+A(t+1)-D^0(t)\\
    &\le Q^0(t+1)+B_Q+1,
\end{aligned}
\]
where the last inequality follows from
\([x]^+\ge x\).  This proves \eqref{eq:scalar_pathwise_comparison}.
Taking expectations and using \eqref{eq:q0_scalar_mean} proves
\eqref{eq:scalar_total_queue_bound}.
\end{proof}

\subsection{Proof of Theorem~\ref{thm:Q_len_bound_cost}}\label{app:holding_bound}

We separate the argument into two parts.  First, we give a scalar
comparison for the total queue length.  This comparison is used in the
finite-horizon regret proof and avoids introducing class-indexed
virtual queues.  Second, we prove the general weighted holding-cost
result by a weighted Lyapunov-drift argument.  Throughout, let
$\mathcal F_t$ denote the information available immediately before the
arrivals and service completions in slot $t$, including $Q(t)$ and the
allocation $y(t)$.

\subsubsection{A scalar comparison for the total queue length}

Let
\[
    Q(t):=\sum_{i\in\I}Q_i(t),
    \qquad
    w_{\min}:=\min_{i\in\I}w_i,
    \qquad
    w_{\max}:=\max_{i\in\I}w_i,
\]
and define
\begin{equation}
\label{eq:scalar-threshold-1}
    b_1:=\frac{(\gamma+1)Vn}{w_{\min}},
\end{equation}
\begin{equation}
\label{eq:scalar-threshold-2}
    b_2:=2c_\gamma\frac{n^3}{\delta}
          \frac{w_{\max}}{w_{\min}},
    \qquad
    B_Q:=\max\{b_1,b_2\}.
\end{equation}

\begin{lemma}
\label{lem:sat-total}
If $Q(t)\ge b_1$, then
\[
    \sum_{i\in\I(t)}y_{i,j}(t)=n_j,
    \qquad j\in\J.
\]
\end{lemma}

\begin{proof}
Suppose, to obtain a contradiction, that $Q(t)\ge b_1$ and
that there exists $j_0\in\J$ such that
$\sum_{i\in\I(t)}y_{i,j_0}(t)<n_{j_0}$.  By complementary slackness,
the multiplier $q_{j_0}(t)$ associated with the capacity constraint of
server class $j_0$ is zero.  The stationarity condition for the
optimization problem defining $y(t)$ gives, for every $i\in\I(t)$,
\[
\begin{split}
    \sum_{j\in\J}y_{i,j}(t)
    &=\frac{w_iQ_i(t)}
       {V\bigl(q_{j_0}(t)-h_{i,j_0}(t)+\gamma-
       \widetilde r_{i,j_0}(t)\bigr)} \\
    &\ge \frac{w_iQ_i(t)}{V(\gamma+1)},
\end{split}
\]
where we used $\widetilde r_{i,j}(t)\in[-1,1]$.  Therefore,
\[
\begin{split}
    \sum_{i\in\I(t)}\sum_{j\in\J}y_{i,j}(t)
    &\ge \frac{\sum_{i\in\I(t)}w_iQ_i(t)}{V(\gamma+1)} \\
    &\ge \frac{w_{\min}Q(t)}{V(\gamma+1)}
    \ge n.
\end{split}
\]
On the other hand, the server-capacity constraints imply
$\sum_{i\in\I(t)}\sum_{j\in\J}y_{i,j}(t)\le n$.  Hence equality must
hold for every server-class capacity constraint, contradicting the
strict inequality for $j_0$.  This proves the claim.
\end{proof}

\begin{lemma}
\label{lem:stoch-dom-total}
On the event $\{Q(t)\ge B_Q\}$, conditional on
$\mathcal F_t$,
\[
    \mathbb P(D(t)\ge x\mid\mathcal F_t)
    \ge
    \mathbb P(W\ge x),
    \qquad x\ge0,
\]
where
\[
    W\sim\mathrm{Binom}\!\left(
        n,\frac{1+\rho/n}{2}\mu
    \right).
\]
\end{lemma}

\begin{proof}
By Lemma~\ref{lem:sat-total}, if $Q(t)\ge b_1$, then
\begin{equation}
\label{eq:all-capacities-saturated-total}
    \sum_{i\in\I(t)}y_{i,j}(t)=n_j,
    \qquad j\in\J.
\end{equation}
At time $t$, enumerate the $n$ server selections in their sequential
order.  Let $S_r(t)$ be the set of jobs selected before round $r$, and
let $X_r(t)=1$ if the job selected in round $r$ does not belong to
$S_r(t)$, and $X_r(t)=0$ otherwise.  If round $r$ corresponds to a
server of class $j$, then
\[
\begin{split}
    \mathbb P(X_r(t)=1\mid S_r(t))
    &=\frac{1}{n_j}\sum_{k\in\Q(t)}
      \frac{y_{\cl(k),j}(t)}{Q_{\cl(k)}(t)}
      \mathbf 1\{k\notin S_r(t)\} \\
    &=1-\frac{1}{n_j}\sum_{k\in\Q(t)}
      \frac{y_{\cl(k),j}(t)}{Q_{\cl(k)}(t)}
      \mathbf 1\{k\in S_r(t)\} \\
    &\ge 1-n\max_{i\in\I(t)}
       \frac{y_{i,j}(t)}{Q_i(t)},
\end{split}
\]
where the second equality follows from
\eqref{eq:all-capacities-saturated-total}.

From the allocation-ratio bound established in
\eqref{eq:x_upbd_traffic},
\[
    y_{i,j}(t)
    \le c_\gamma
       \frac{nw_iQ_i(t)}
       {\sum_{i'\in\I(t)}w_{i'}Q_{i'}(t)},
    \qquad i\in\I(t).
\]
Consequently, if $Q(t)\ge b_2$, then
\[
\begin{split}
    \mathbb P(X_r(t)=1\mid S_r(t))
    &\ge 1-c_\gamma
       \frac{n^2w_{\max}}
       {\sum_{i'\in\I(t)}w_{i'}Q_{i'}(t)} \\
    &\ge 1-c_\gamma
       \frac{n^2w_{\max}}
       {w_{\min}Q(t)} \\
    &\ge \frac{1+\rho/n}{2}.
\end{split}
\]
Let $\mathcal F_{r-1}(t)$ denote the complete selection history before
round $r$.  Applying Lemma~\ref{lem:bernoulli-thinning} with
$p=(1+\rho/n)/2$ yields i.i.d. random variables
$Y_1(t),\ldots,Y_n(t)$ such that
\[
    Y_r(t)\sim\mathrm{Ber}\!\left(\frac{1+\rho/n}{2}\right),
    \qquad
    Y_r(t)\le X_r(t).
\]
Whenever $Y_r(t)=1$, the corresponding selected jobs are distinct.
Since service requirements are geometric with common completion
probability $\mu$ and are independent across jobs, their completion
indicators can be represented by i.i.d. random variables
$Z_r(t)\sim\mathrm{Ber}(\mu)$, independent of the thinning variables.
Therefore,
\[
    D(t)\ge\sum_{r=1}^nY_r(t)Z_r(t).
\]
The right-hand side has distribution
$\mathrm{Binom}(n,(1+\rho/n)\mu/2)$, proving the claim.
\end{proof}

\begin{lemma}[Uniform scalar total-queue bound]
\label{lem:scalar-total-queue}
For all $t\ge0$,
\begin{equation}
\label{eq:scalar-total-queue-bound}
    \mathbb E[Q(t)]
    \le
    \frac{n+\rho}{\delta}+B_Q+1.
\end{equation}
\end{lemma}

\begin{proof}
Let $Q^0$ be the occupancy of a $\mathrm{Geom}/\mathrm{Geom}/n$
queue satisfying
\[
    Q^0(t+1)
    =\bigl[Q^0(t)+A(t+1)-D^0(t)\bigr]^+,
    \qquad Q^0(0)=0,
\]
where
\[
    D^0(t)\sim\mathrm{Binom}\!\left(
       n,\frac{1+\rho/n}{2}\mu
    \right).
\]
By Lemma EC.5 in \cite{hsu2021integrated},
\begin{equation}
\label{eq:scalar-reference-mean}
    \mathbb E[Q^0(t)]\le\frac{n+\rho}{\delta}.
\end{equation}
At every slot for which $Q(t)>B_Q$, Lemma
\ref{lem:stoch-dom-total} and the usual quantile coupling allow us to
construct $D(t)$ and $D^0(t)$ so that
\[
    D(t)\ge D^0(t)
    \quad\text{a.s.}
\]
Fresh coupling randomness may be used in each slot, so the conditional
law of $D^0(t)$ remains the stated binomial law.

We claim that
\begin{equation}
\label{eq:pathwise-scalar-comparison}
    Q(t)\le Q^0(t)+B_Q+1,
    \qquad t\ge0.
\end{equation}
The claim holds at $t=0$.  Suppose it holds at time $t$.  If
$Q(t)\le B_Q$, then, because at most one job arrives in one
slot,
\[
    Q(t+1)
    \le Q(t)+A(t+1)
    \le B_Q+1
    \le Q^0(t+1)+B_Q+1.
\]
If $Q(t)>B_Q$, then the above coupling gives
$D(t)\ge D^0(t)$, and hence
\[
\begin{split}
    Q(t+1)
    &=Q(t)+A(t+1)-D(t) \\
    &\le Q^0(t)+B_Q+1+A(t+1)-D^0(t) \\
    &\le Q^0(t+1)+B_Q+1,
\end{split}
\]
where the last inequality follows from
$[x]^+\ge x$.  This proves \eqref{eq:pathwise-scalar-comparison} by
induction.  Taking expectations and using
\eqref{eq:scalar-reference-mean} proves
\eqref{eq:scalar-total-queue-bound}.
\end{proof}

\subsubsection{Weighted holding-cost analysis}

Define
\[
    s_i(t):=\sum_{j\in\J}y_{i,j}(t),
    \qquad
    W(t):=\sum_{i\in\I}w_iQ_i(t),
\]
and consider the Lyapunov function
\begin{equation}
\label{eq:weighted-Lyapunov}
    L(Q(t)):=\frac12\sum_{i\in\I}
       \frac{w_i}{\rho_i}Q_i(t)^2.
\end{equation}
We assume, as elsewhere in the paper, that $\rho_i>0$ for every class.

\begin{lemma}[One-step weighted drift]
\label{lem:one-step-weighted-drift}
For every $t\ge0$,
\begin{equation}
\label{eq:preliminary-weighted-drift}
\begin{split}
    \mathbb E[&L(Q(t+1))-L(Q(t))\mid\mathcal F_t] \\
    &\le
    \mu\sum_{i\in\I}
       \frac{w_iQ_i(t)}{\rho_i}\bigl(\rho_i-s_i(t)\bigr)
    +B_0,
\end{split}
\end{equation}
where
\begin{equation}
\label{eq:B0-definition}
    B_0
    :=\frac{\mu}{2}\sum_{i\in\I}w_i
      +\frac{\mu n^2}{2}
       \left(1+\mu c_\gamma^2\right)
       \max_{i\in\I}\frac{w_i}{\rho_i}.
\end{equation}
\end{lemma}

\begin{proof}
The queue recursion and $(a-b)^2\le a^2+b^2$ for $a,b\ge0$ give
\[
\begin{split}
    \mathbb E[&Q_i(t+1)^2-Q_i(t)^2\mid\mathcal F_t] \\
    &\le 2Q_i(t)
       \bigl(\lambda_i-\mathbb E[D_i(t)\mid\mathcal F_t]\bigr)
       +\lambda_i
       +\mathbb E[D_i(t)^2\mid\mathcal F_t].
\end{split}
\]
The departure estimates established earlier in Appendix A imply, for
$i\in\I(t)$,
\begin{equation}
\label{eq:departure-lower-recalled}
    \mathbb E[D_i(t)\mid\mathcal F_t]
    \ge
    \mu s_i(t)
    -\frac{\mu^2n^2c_\gamma^2}{2}
       \frac{w_i^2Q_i(t)}{W(t)^2},
\end{equation}
and
\begin{equation}
\label{eq:departure-second-recalled}
    \sum_{i\in\I}
       \mathbb E[D_i(t)^2\mid\mathcal F_t]
    \le n^2\mu.
\end{equation}
When $W(t)=0$, all queues are zero and the term containing $W(t)^{-2}$
is interpreted as zero.

Multiplying the preceding queue-drift inequality by
$w_i/(2\rho_i)$, summing over $i$, and using
$\lambda_i=\mu\rho_i$, we obtain
\[
\begin{split}
    \mathbb E[&L(Q(t+1))-L(Q(t))\mid\mathcal F_t] \\
    \le {}&
    \mu\sum_{i\in\I}
       \frac{w_iQ_i(t)}{\rho_i}(\rho_i-s_i(t)) \\
    &+\frac{\mu^2n^2c_\gamma^2}{2W(t)^2}
       \sum_{i\in\I(t)}
       \frac{w_i^3Q_i(t)^2}{\rho_i} \\
    &+\frac{\mu}{2}\sum_{i\in\I}w_i
     +\frac12\sum_{i\in\I}
       \frac{w_i}{\rho_i}
       \mathbb E[D_i(t)^2\mid\mathcal F_t].
\end{split}
\]
Since
\[
    \sum_{i\in\I(t)}
       \frac{w_i^3Q_i(t)^2}{\rho_i}
    \le
    \left(\max_{i\in\I}\frac{w_i}{\rho_i}\right)
    \sum_{i\in\I(t)}w_i^2Q_i(t)^2
    \le
    \left(\max_{i\in\I}\frac{w_i}{\rho_i}\right)W(t)^2,
\]
and, by \eqref{eq:departure-second-recalled},
\[
    \frac12\sum_{i\in\I}
       \frac{w_i}{\rho_i}
       \mathbb E[D_i(t)^2\mid\mathcal F_t]
    \le
    \frac{n^2\mu}{2}
       \max_{i\in\I}\frac{w_i}{\rho_i},
\]
we obtain \eqref{eq:preliminary-weighted-drift} with $B_0$ given by
\eqref{eq:B0-definition}.
\end{proof}

\begin{lemma}[Service slack induced by proportional fairness]
\label{lem:allocation-service-slack}
Let
\[
    \kappa:=\frac{n+\rho}{2\rho}>1.
\]
Then, for every $t\ge0$,
\begin{equation}
\label{eq:allocation-service-slack}
    \sum_{i\in\I(t)}
       \frac{w_iQ_i(t)}{\rho_i}\bigl(s_i(t)-\rho_i\bigr)
    \ge
    (\kappa-1)W(t)-\kappa n(\gamma+1)V.
\end{equation}
\end{lemma}

\begin{proof}
For $i\in\I(t)$ and $j\in\J$, define the comparison allocation
\[
    \overline y_{i,j}(t)
    :=\kappa\rho_i\frac{n_j}{n},
\]
and set $\overline y_{i,j}(t)=0$ for $i\notin\I(t)$.  For every
$j\in\J$,
\[
\begin{split}
    \sum_{i\in\I(t)}\overline y_{i,j}(t)
    &=\kappa\frac{n_j}{n}
      \sum_{i\in\I(t)}\rho_i \\
    &\le \kappa\rho\frac{n_j}{n}
     =\frac{n+\rho}{2n}n_j<n_j.
\end{split}
\]
Thus $\overline y(t)$ is feasible.  Moreover,
\[
    \overline s_i(t):=\sum_{j\in\J}\overline y_{i,j}(t)
    =\kappa\rho_i,
    \qquad i\in\I(t).
\]

Optimality of $y(t)$ in the allocation problem gives
\[
\begin{split}
    \sum_{i\in\I(t)}w_iQ_i(t)
       \log\frac{s_i(t)}{\kappa\rho_i}
    &\ge
    V\sum_{i\in\I(t)}\sum_{j\in\J}
      \bigl(\widetilde r_{i,j}(t)-\gamma\bigr)
      \bigl(\overline y_{i,j}(t)-y_{i,j}(t)\bigr) \\
    &=V\sum_{i\in\I(t)}\sum_{j\in\J}
      \bigl(\gamma-\widetilde r_{i,j}(t)\bigr)
      \bigl(y_{i,j}(t)-\overline y_{i,j}(t)\bigr).
\end{split}
\]
Because $\gamma-\widetilde r_{i,j}(t)\in[\gamma-1,\gamma+1]$
and $y_{i,j}(t)\ge0$,
\[
\begin{split}
    \sum_{i\in\I(t)}w_iQ_i(t)
       \log\frac{s_i(t)}{\kappa\rho_i}
    &\ge
    -V(\gamma+1)
      \sum_{i\in\I(t)}\sum_{j\in\J}
      \overline y_{i,j}(t) \\
    &\ge -V(\gamma+1)n.
\end{split}
\]
Using $\log x\le x-1$ for $x>0$, we conclude that
\[
\begin{split}
    -V(\gamma+1)n
    &\le
    \sum_{i\in\I(t)}w_iQ_i(t)
       \left(\frac{s_i(t)}{\kappa\rho_i}-1\right).
\end{split}
\]
Multiplying by $\kappa$ and rearranging yields
\[
    \sum_{i\in\I(t)}
       \frac{w_iQ_i(t)}{\rho_i}s_i(t)
    \ge
    \kappa W(t)-\kappa n(\gamma+1)V.
\]
Subtracting $W(t)$ from both sides proves
\eqref{eq:allocation-service-slack}.
\end{proof}
\begin{equation}
\label{eq:Bdr-definition}
\begin{split}
    B_{\mathrm{dr}}(V)
    := {}& \frac{\mu}{2}\sum_{i\in\I}w_i
    +\frac{\mu n^2}{2}
       \left(1+\mu c_\gamma^2\right)
       \max_{i\in\I}\frac{w_i}{\rho_i} \\
    &\quad +\mu\kappa n(\gamma+1)V .
\end{split}
\end{equation}

Combining Lemmas~\ref{lem:one-step-weighted-drift} and
\ref{lem:allocation-service-slack} gives
\begin{equation}
\label{eq:main-weighted-drift}
    \mathbb E[L(Q(t+1))-L(Q(t))\mid\mathcal F_t]
    \le -\epsilon W(t)+B_{\mathrm{dr}}(V),
\end{equation}
where
\[
    \epsilon=\mu(\kappa-1)=\frac{\mu\delta}{2\rho}
\]
and $B_{\mathrm{dr}}(V)$ is defined in
\eqref{eq:Bdr-definition}. Let \[\alpha_{c,w}=\max_{i\in \I}\frac{c_i}{w_i}.\]

We first derive the quantitative time-average bound.  Taking
expectations in \eqref{eq:main-weighted-drift}, summing from $t=0$ to
$N-1$, and using $L(Q(N))\ge0$, we obtain
\[
    \epsilon\sum_{t=0}^{N-1}\mathbb E[W(t)]
    \le
    NB_{\mathrm{dr}}(V)+\mathbb E[L(Q(0))].
\]
Since
\[
    \sum_{i\in\I}c_iQ_i(t)
    \le
    \alpha_{c,w}\sum_{i\in\I}w_iQ_i(t)
    =\alpha_{c,w}W(t),
\]
it follows that
\[
    \frac1N\sum_{t=0}^{N-1}H(t;c)
    \le
    \alpha_{c,w}
    \left(
       \frac{B_{\mathrm{dr}}(V)}{\epsilon}
       +\frac{\mathbb E[L(Q(0))]}{\epsilon N}
    \right)=  O\left( \max_{i\in \I}\frac{c_i}{w_i}\frac{1}{\delta}(V+w_{\max}) \right)
    .
\]

\subsection{Comparison with the unstructured-reward setting of
\cite{hsu2021integrated}.}\label{app:com}

The queue-control component of our algorithm is inspired by the
utility-guided dynamic assignment framework of \cite{hsu2021integrated}.
The main difference is in the learning component. In
\cite{hsu2021integrated}, the mean payoffs are unstructured and are learned
separately for each client--server pair using UCB-type estimates. In contrast,
our model assumes that the assignment rewards share a common bilinear
parameter, so that observations from one job--server pair can inform the
estimates of other pairs.

We now explain how the regret bound of \cite{hsu2021integrated} translates
into our cumulative-regret notation. Let \(N_{\rm H}\) denote the average
number of tasks per client in \cite{hsu2021integrated}, to distinguish it
from the horizon \(T\). Their Theorem~2 bounds the time-averaged payoff gap
as
\[
    \overline \Delta_{\rm H}(T)
    \le
    \frac{\beta_1}{V}
    +
    \beta_2
    \sqrt{\frac{\log N_{\rm H}}{N_{\rm H}}}
    +
    \beta_3
    \frac{N_{\rm H}(V+1)}{T},
\]
where \(\beta_1,\beta_2,\beta_3\) are constants depending on the system
parameters. Since our regret is cumulative over the horizon, the corresponding
cumulative regret is
\[
    R_{\rm UGDA\text{-}OL}(T)
    :=
    T\overline \Delta_{\rm H}(T).
\]
Thus,
\[
    R_{\rm UGDA\text{-}OL}(T)
    \le
    \frac{\beta_1 T}{V}
    +
    \beta_2 T
    \sqrt{\frac{\log N_{\rm H}}{N_{\rm H}}}
    +
    \beta_3 N_{\rm H}(V+1).
\]

For comparison with our setting, we treat \(N_{\rm H}\) and the basic service
parameters as fixed constants. Then the second term is linear in \(T\). More
precisely, the coefficient \(\beta_2\) contains a term that scales linearly
with the number \(J\) of server classes, and hence
\[
    \beta_2 T
    \sqrt{\frac{\log N_{\rm H}}{N_{\rm H}}}
    =
    \widetilde O(JT).
\]
This is the unstructured-learning cost: because payoffs are learned separately
for different client--server pairs, the learning loss does not vanish with the
horizon after converting the average payoff gap to cumulative regret.

The remaining terms correspond to the usual control/backlog tradeoff. Using
the backlog bound in \cite{hsu2021integrated}, the finite-horizon backlog term
can be written, with the slack \(\delta\) made explicit, as
\[
    O\!\left(V+\frac{1}{\delta}\right),
\]
after multiplying the time-averaged bound by \(T\). Therefore, under the same
scaling convention as in our regret bound, we obtain
\[
    R_{\rm UGDA\text{-}OL}(T)
    =
    \widetilde O\!\left(
        \frac{IT}{V}
        +
        JT
        +
        V
        +
        \frac{1}{\delta}
    \right),
\]
where the term \(IT/V\) denotes the control term arising from the
Lyapunov-drift analysis, expressed in our notation. Choosing
\[
    V_T=\Theta(\sqrt{IT})
\]
before the algorithm starts yields
\[
    R_{\rm UGDA\text{-}OL}(T)
    =
    \widetilde O\!\left(
        \sqrt{IT}
        +
        JT
        +
        \frac{1}{\delta}
    \right).
\]
Hence, compared with the unstructured-reward setting of
\cite{hsu2021integrated}, our bilinear model replaces the linear-in-\(T\)
unstructured learning term \(JT\) by the bilinear-bandit learning term
\(\widetilde O(d^2\sqrt T)\).

\subsection{Proof of Theorem~\ref{prop:cond}}
\label{sec:cond}

For the system of delay differential equations (\ref{equ:xdiff}), the result in the theorem follows from Theorem~2 in \cite{kv05}. The same proof steps can be followed to establish the result in the theorem for the system of delay differential equations (\ref{equ:xdiff2}), which we explain in this section. 

Let $y_{i,j}(r) = y_{i,j} + u_{i,j}(r)$, $y_i^\dag(r) = y_i^\dag + v_i(r)$, and $y_j^\S(r) = y_j^\S + w_j(r)$. Then, by linearizing the system (\ref{equ:xdiff2}) about $y$, we obtain
$$
\frac{d}{dr} u_{i,j}(r) = -\frac{\alpha_{i,j}y_{i,j}}{\lambda_{i,j}}\left((-u_i''(y_i^\dag)) v_i(r-\tau_{i,j}) + p_j'(y_j^\S) w_j(r)\right)
$$
with
$$
v_i(r) = \sum_{j\in \J} u_{i,j}(r-\tau_{j,i})
$$
and
$$
w_j(r) = \sum_{i\in \I_+} u_{i,j}(r-\tau_{(i,j)})
$$
where $\lambda_{i,j} := p_j(y_j^\S) + \gamma - \hat{r}_{i,j}$.

With a slight abuse of notation, let $u_{i,j}(\omega)$, $v_i(\omega)$, $w_j(\omega)$ denote the Laplace transforms of $u_{i,j}(r)$, $v_i(r)$, and $w_j(r)$, respectively. Then, we have
$$
\omega u_{i,j}(\omega) = -\frac{\alpha_{i,j}y_{i,j}}{\lambda_{i,j}}\left((-u_i''(y_i^\dag)) e^{-\omega \tau_{i,j}}v_i(\omega) + p_j'(y_j^\S) w_j(\omega)\right)
$$
$$
v_i(\omega) = \sum_{j\in \J} e^{-\omega \tau_{j,i}} u_{i,j}(\omega)
$$
and
$$
w_j(\omega) = \sum_{i\in \I_+} e^{-\omega \tau_{(i,j)}}u_{i,j}(\omega).
$$

From these equations, we have
$$
\left(\begin{array}{c} {v}(\omega)\\ {w}(\omega)\end{array}\right) = - P^{-1} R(-\omega)^\top X(\omega) R(\omega) P \left(\begin{array}{c} {v}(\omega)\\ {w}(\omega)\end{array}\right)
$$
where $P$ is the $(|\I_+|+J) \times (|\I_+|+J)$ diagonal matrix with diagonal elements $P_{i,i} = \sqrt{-u_i''(y_i^\dag)}$ and $P_{j,j} = \sqrt{p_j'}$, $X(\omega)$ is the $|\I_+|J \times |\I_+|J$ diagonal matrix with diagonal elements $X(\omega)_{(i,j),(i,j)} = e^{-\omega \tau_{(i,j)}}/(\omega \tau_{(i,j)})$, and $R(\omega)$ is the $|\I_+|J \times (|\I_+|+J)$ matrix with elements
$$
R_{(i,j),i}(\omega) = \sqrt{\frac{\alpha_{i,j}y_{i,j}}{\lambda_{i,j}}\tau_{(i,j)}(-u_i''(y_i))}
$$
and
$$
R_{(i,j),j}(\omega) = \sqrt{\frac{\alpha_{i,j}y_{i,j}}{\lambda_{i,j}}\tau_{(i,j)}p_j'(y_j^\S)}e^{\omega \tau_{i,j}}.
$$
The matrix $G(\omega)=P^{-1}R(-\omega)^\top X(\omega) R(\omega) P$ is called the return ratio for $({v},{w})$. By the generalized Nyquist stability criterion, it is sufficient to prove that the eigenvalues of $G(\omega)$ do not encircle the point $-1$ for $w= i\theta$, $\theta \in \reals$, in order for $({v}(r), {w}(r))$ to converge to $0$ exponentially fast as $r$ goes to infinity.

If $\lambda$ is an eigenevalue of $G(i\theta)$, then we can find a unit vector ${\nu}$ such that
$$
\lambda = {\nu}^* R(i\theta)^* X(i\theta) R(i\theta){\nu}.
$$
Since $X$ is diagonal, we have
$$
\lambda = \sum_{(i,j)} |(R(i\theta){\nu})_{(i,j)}|^2\frac{e^{-i\theta \tau_{(i,j)}}}{i\theta \tau_{(i,j)}}.
$$
Hence, it follows that $\lambda = K {\xi}$ where $K = ||R(i\theta)\nu||^2$ and ${\xi}$ lies in the convex hull of the points
$$
\left\{\frac{e^{-i\theta \tau_{(i,j)}}}{i\theta \tau_{(i,j)}}: i
\in \I_+, j\in \J\right\}.
$$
This convex hull includes the point $-2/\pi$ on its boundary but contains no point on the real axis to the left of $-2/\pi$. Hence, if $\lambda$ is real then $\lambda \geq (-2/\pi)K$. It remains to show that $K < \pi/2$. 

Let $\rho(\cdot)$ denote the spectral radius and $||\cdot ||_\infty$ denote the maximum row sum matrix norm. Let $Q$ be the $(|\I_+|+J)\times (|\I_+|+J)$ diagonal matrix with diagonal elements $Q_{i,i} = y_i^\dag \sqrt{-u_i''(y_i^\dag)}$ and $Q_{j,j} = y_j^\S \sqrt{p_j'(y_j^\S)}$. Then,
\begin{eqnarray*}
K &=& {\nu}^* R(i\theta)^* R(i\theta){\nu}\\
&\leq & \rho(R(i\theta)^* R(i\theta))\\
&=& \rho(Q^{-1}R(i\theta)^* R(i\theta)Q)\\
&\leq & ||Q^{-1}R(i\theta)^* R(i\theta)Q||_\infty.
\end{eqnarray*}

We first consider rows of $Q^{-1}R(i\theta)^* R(i\theta)Q$ corresponding to $i\in \I_+$. Let us fix an arbitrary $i\in \I_+$. Note that for all $j\in \J$,
\begin{eqnarray*}
(Q^{-1}R(i\theta)^* R(i\theta)Q)_{i,j} &=& \frac{Q_{j,j}}{Q_{i,i}}R(i\theta)_{(i,j),i}R(i\theta)_{(i,j),j}\\
&=& \frac{y_{i,j}}{y_i^\dag} \frac{\alpha_{i,j}}{\lambda_{i,j}}\tau_{(i,j)} p_j'(y_j^\S) y_j^\S e^{i\theta \tau_{(i,j)}},
\end{eqnarray*}
\begin{eqnarray*}
(Q^{-1}R(i\theta)^* R(i\theta)Q)_{i,i} &=& \sum_{j\in \J} R(i\theta)^*_{(i,j),i}R(i\theta)_{(i,j),i}\\
&=& \sum_{j\in \J} \frac{y_{i,j}}{y_i^\dag} \frac{\alpha_{i,j}}{\lambda_{i,j}}\tau_{(i,j)} (-u_i''(y_i^\dag)) y_i^\dag,
\end{eqnarray*}
and $(Q^{-1}R(i\theta)^* R(i\theta)Q)_{i,i'}$ for $i,i'\in \I_+$ such that $i\neq i'$. It follows
\begin{eqnarray*}
&& \sum_{j\in \J}|(Q^{-1}R(i\theta)^* R(i\theta)Q)_{i,j}| + \sum_{i'\in \I_+} |(Q^{-1}R(i\theta)^* R(i\theta)Q)_{i,i'}|\\
&=& \sum_{j\in \J} \frac{y_{i,j}}{y_i^\dag}\left\{\frac{\alpha_{i,j}}{\lambda_{i,j}}\tau_{(i,j)}((-u_i''(y_i^\dag))y_i^\dag + p_j'(y_j^\S)y_j^\S)\right\}\\
&=& \sum_{j\in \J} \frac{y_{i,j}}{y_i^\dag}\left\{\alpha_{i,j}\tau_{(i,j)}\left(1 + \frac{p_j'(y_j^\S)y_j^\S}{p_j(y_j^\S) + \gamma - \hat{r}_{i,j}}\right)\right\}\\
&<& \frac{\pi}{2}
\end{eqnarray*}
where the last equation holds because $u'_i(y_i^\dag) = \lambda_{i,j}$, $-u''_i(y_i^\dag) y_i^\dag = u'_i(y_i^\dag)$, and $\lambda_{i,j} = p_j(y_j^\S) + \gamma - \hat{r}_{i,j}$, and the last inequality is by condition (\ref{equ:cond}). 

It remains to consider rows of $Q^{-1}R(i\theta)^* R(i\theta)Q$ corresponding to $j\in \J$. By similar arguments, we can show that for every $j\in \J$,
\begin{eqnarray*}
&& \sum_{i\in \I_+}|(Q^{-1}R(i\theta)^* R(i\theta)Q)_{j,i}| + \sum_{j'\in \J} |(Q^{-1}R(i\theta)^* R(i\theta)Q)_{j,j'}|\\
&=& \sum_{i\in \I_+}  \frac{y_{i,j}}{y_j^\S}\left\{\alpha_{i,j}\tau_{(i,j)}\left(1 + \frac{p_j'(y_j^\S)y_j^\S}{p_j(y_j^\S) + \gamma - \hat{r}_{i,j}}\right)\right\}\\
&<& \frac{\pi}{2}.
\end{eqnarray*}

\subsection{Experiments using Real-World Data}\label{app:real}

In this section, we present numerical results evaluating our proposed algorithm for scheduling servers in cluster computing systems. We conduct this evaluation using the dataset \emph{cluster-data-2019}, which contains information about jobs and servers in the Google Borg cluster system. This dataset is publicly available, and details about it can be found at \url{https://github.com/google/cluster-data} and in references \cite{borg1, borg2}. The dataset includes information about various entities, such as \emph{machines}, \emph{collections}, and \emph{instances}. Machines are servers with different CPU characteristics and memory capacities, while collections refer to jobs submitted to the cluster, each consisting of one or more tasks known as instances.

For our experiments, we utilized data collected over a time interval from the beginning of the trace to 5,000 seconds into the trace. The dataset comprises 9,526 machines that were active before the start of the measurement interval, along with 71 enqueued collections. Each machine's data includes information about CPU and memory capacity, while each collection's data includes CPU and memory request sizes for each instance. We leveraged this information to construct feature vectors for collections and machines.

To represent each collection, we calculated the average CPU and memory request sizes of its instances. Using these averages, we employed the K-means clustering algorithm to cluster collections into five different classes, with each class represented by the average values of CPU and memory request sizes. For machines, we identified 12 different classes based on their CPU and memory capacities. Additionally, we included the inverse values of CPU and memory (request) capacities, resulting in feature vectors of dimension $d=4$.

The dataset also contains information about the average number of cycles per instruction (CPI) for each instance assigned to a machine. The inverse CPI for each instance-machine combination reflects the efficiency of instance execution on the machine, depending on the computing and machine characteristics. We used these inverse CPIs to define stochastic rewards for assignments. For reward sampling, we drew samples from a Gaussian distribution with computed mean and variance derived from observed rewards. More detailed explanations of our experiments are provided in Appendix~\ref{app:real_exp}.

\begin{figure}[t]
\begin{center}
      \includegraphics[width=0.35\linewidth]{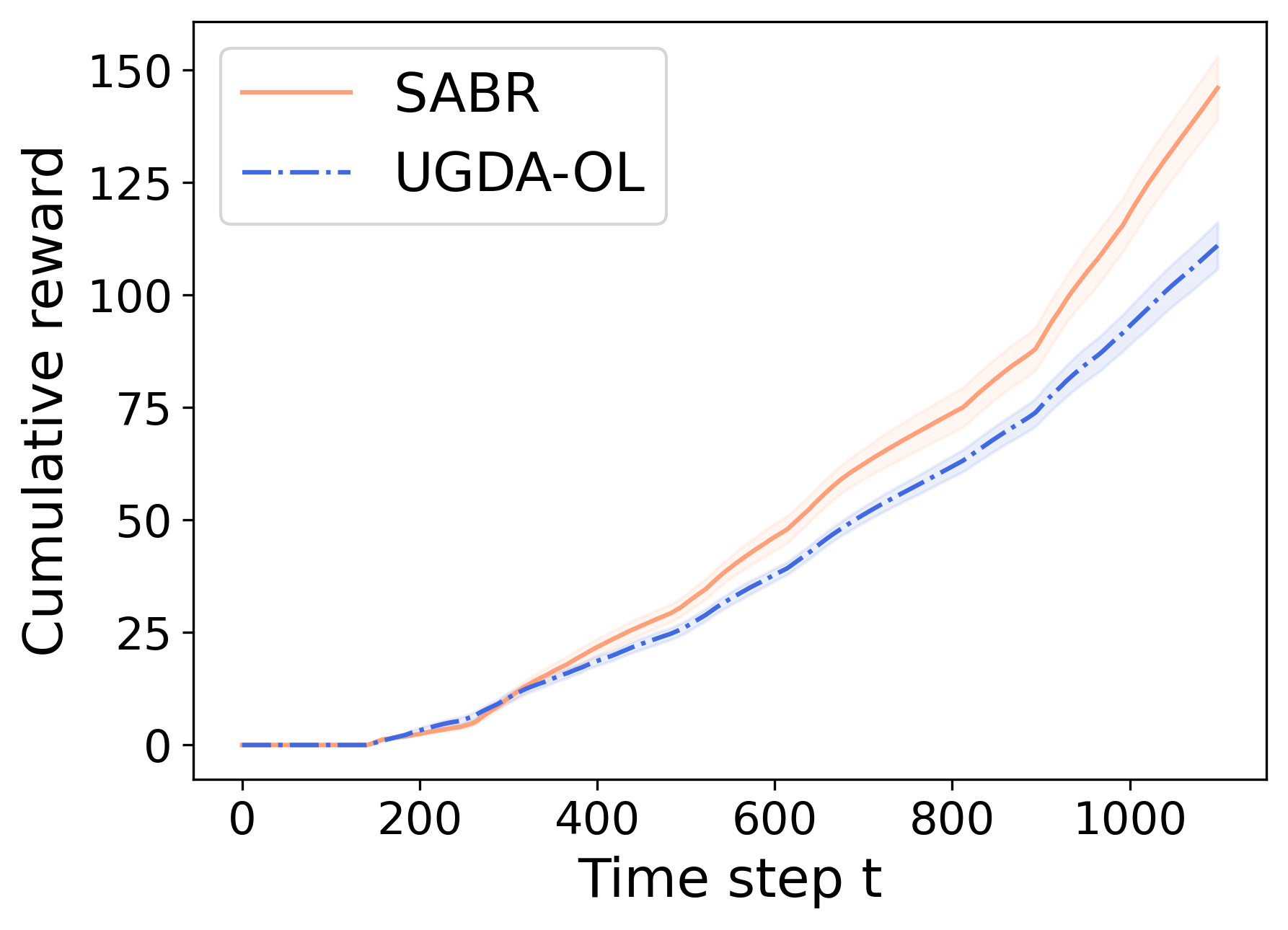}\hspace*{1.5cm}
    \includegraphics[width=0.35\linewidth]{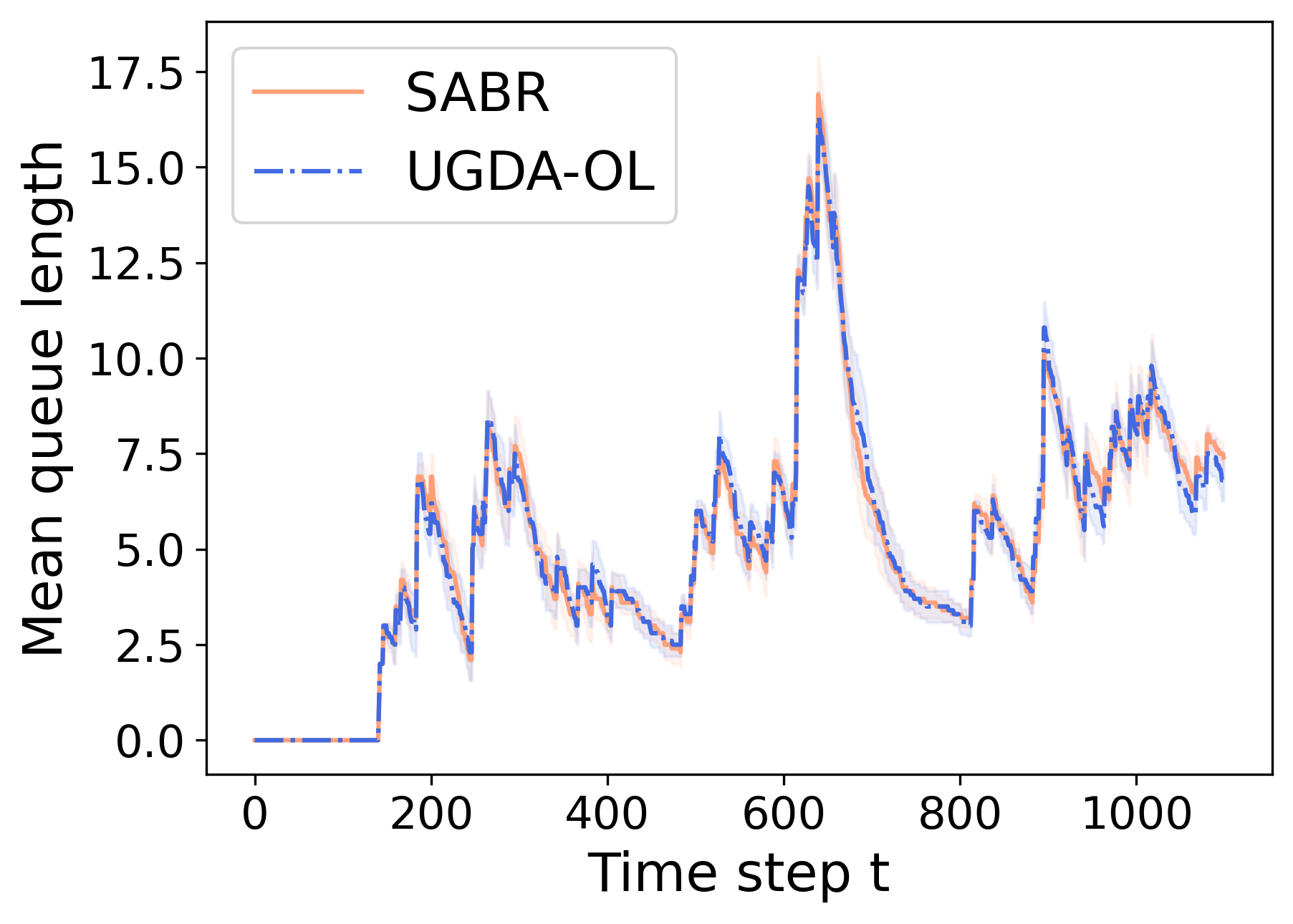}
\end{center}
\caption{Performance of \texttt{SABR} and \texttt{UGDA-OL} over time steps: (left) cumulative reward and (right) mean queue length. 
}
\label{fig:real_data}
\end{figure}
\begin{figure}[t]
\begin{center}
      \includegraphics[width=0.35\linewidth]{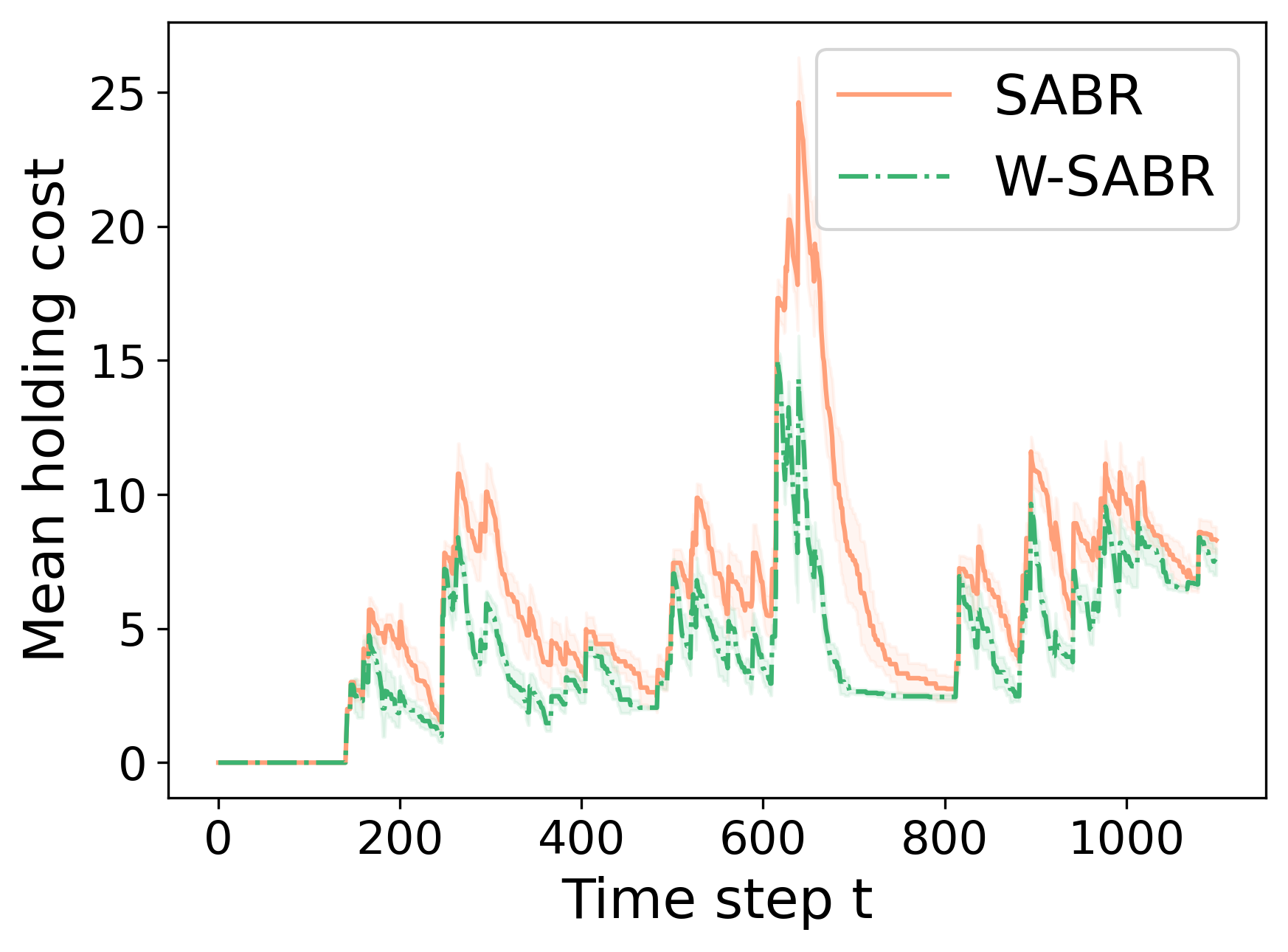}\hspace*{1.5cm}
      \includegraphics[width=0.35\linewidth]{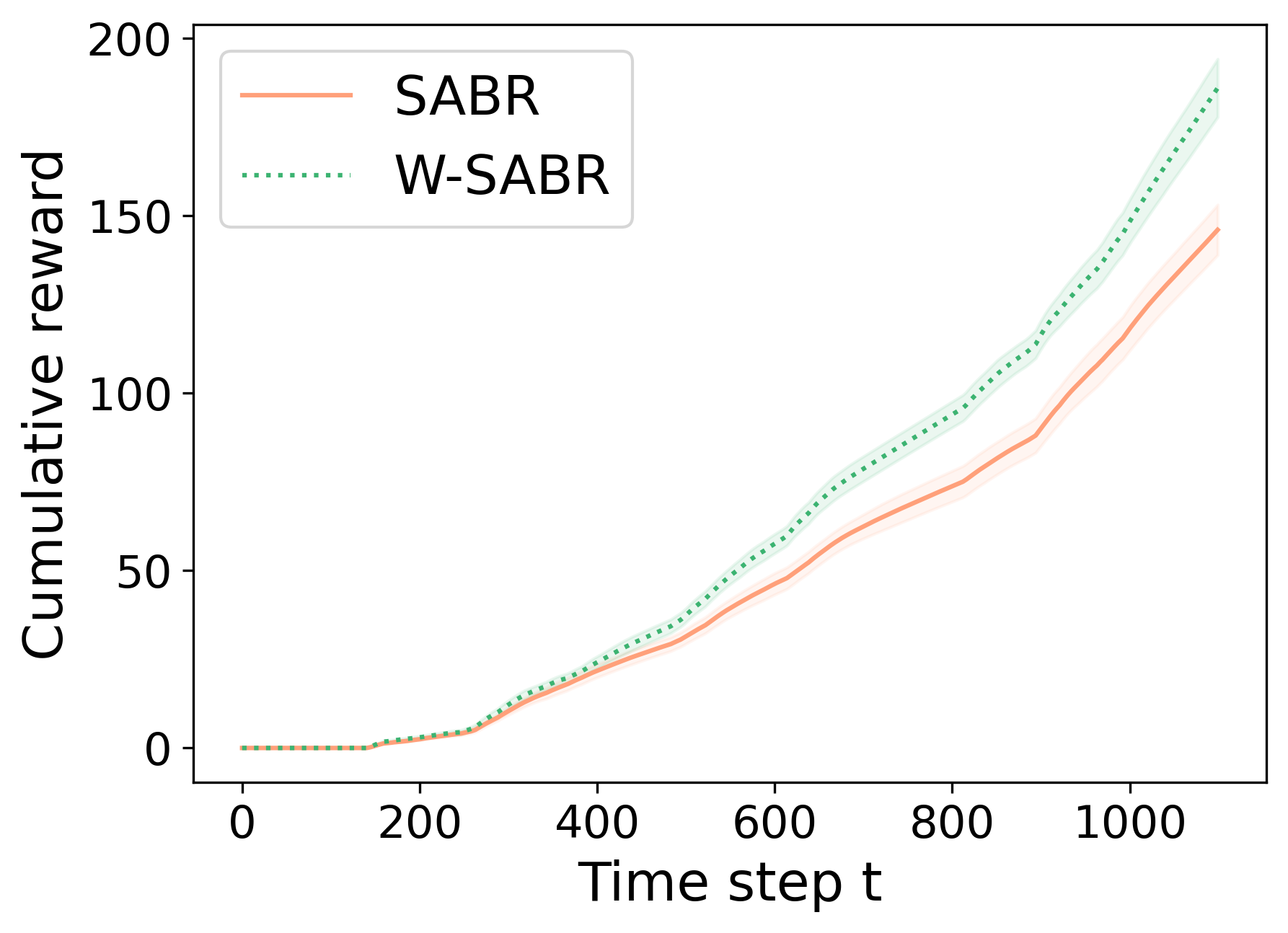}
\end{center}
\caption{ Mean holding cost and cumulative reward of \texttt{SABR} and \texttt{W-SABR} over time steps.
}
\label{fig:real_hold_cost}
\end{figure}

\red{\paragraph{Connection to the cluster trace}
In the Google Borg trace experiment, collection features are constructed from
CPU and memory requests, while machine features are constructed from CPU and
memory capacities and their inverse values. The observed inverse CPI depends
jointly on the collection and machine characteristics, making it a natural
example of a compatibility reward that can be approximated by bilinear
feature interactions.}

We execute scheduling algorithms in discrete time steps, each spanning a 5-second interval of real time, resulting in $T = 1,100$ time steps. All instances assigned to machines within a time step are assumed to be fully processed during that interval. Each machine can handle at most one instance per time step.

In Figure~\ref{fig:real_data}, we compare the performance of our algorithm \texttt{SABR} with \texttt{UGDA-OL} in terms of cumulative rewards and mean queue lengths at different time steps. We observe that \texttt{SABR} outperforms \texttt{UGDA-OL} in cumulative rewards, while both algorithms exhibit comparable mean queue lengths.

Next, we examine how the mean holding cost varies with time steps for \texttt{SABR} and \texttt{W-SABR}. For the job holding costs across five different job classes, we set $c_i=7/4$ for two high-priority job classes, $c_i=1$ for one medium-priority job class, and $c_i=1/4$ for the remaining low-priority job classes. In \texttt{W-SABR}, we set $w_i=c_i$, while in \texttt{SABR}, we set $w_i=1$. Figure~\ref{fig:real_hold_cost} shows that \texttt{W-SABR} exhibits better mean holding costs and cumulative rewards than \texttt{SABR} in most time steps.

\subsubsection{Details for the Experiments using Real-world Data}\label{app:real_exp}

\begin{figure}[t]
      \centering
      \includegraphics[width=0.35\linewidth]{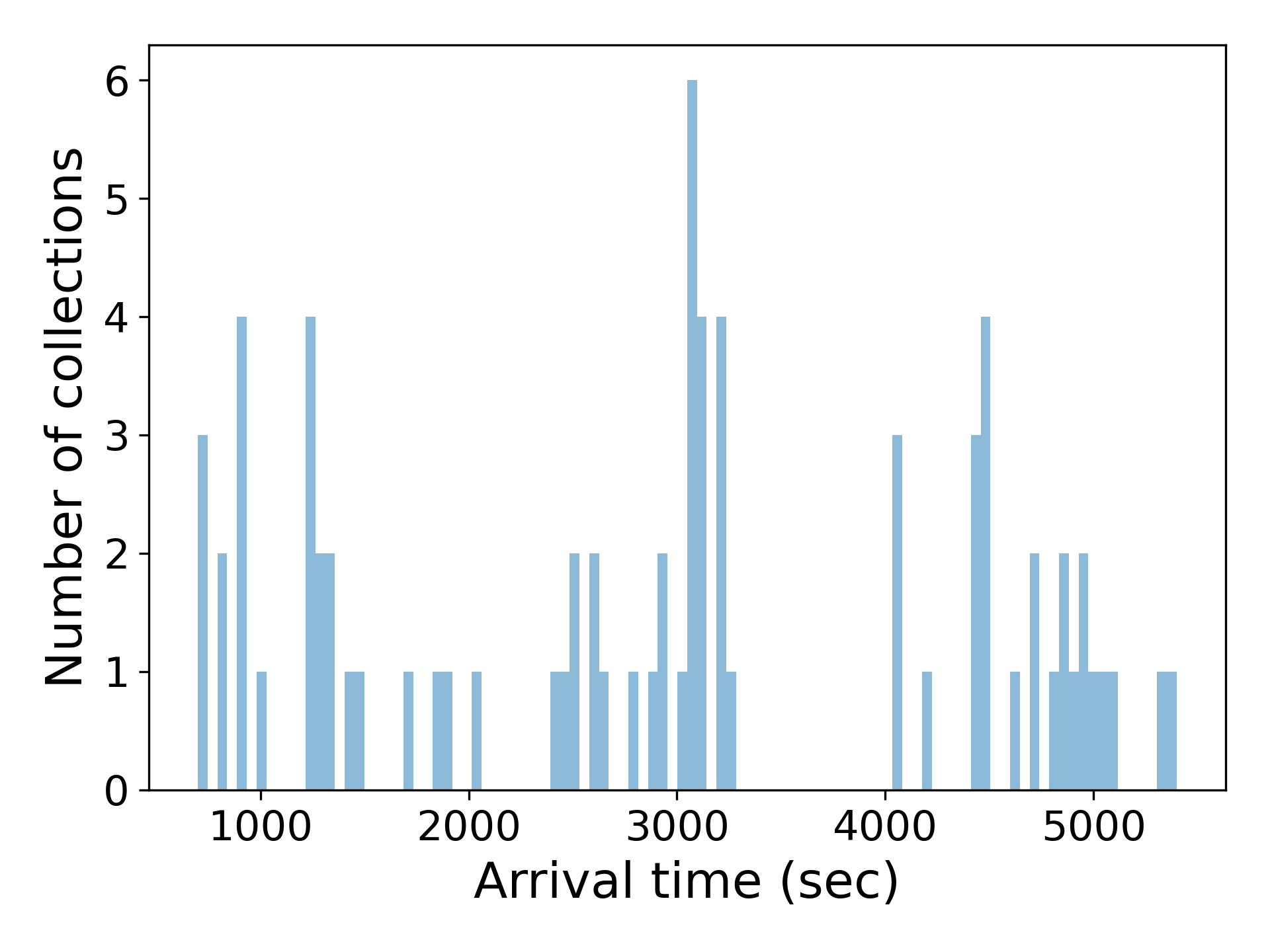}\hspace*{0.5cm}
      \includegraphics[width=0.35\linewidth]{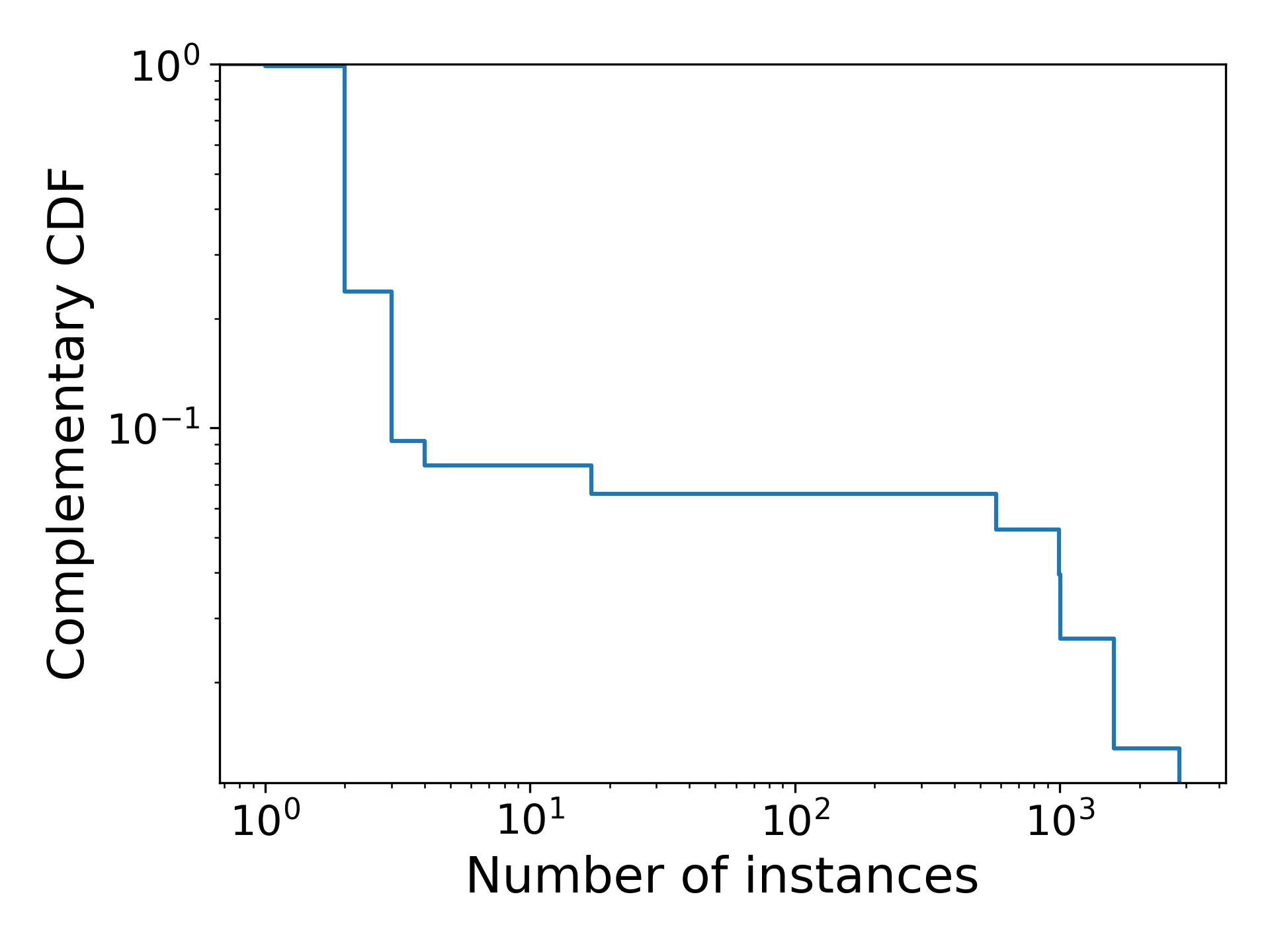}
\caption{Characteristics of the input workload: (left) arrival of collections over time, (right) complementary cumulative distribution function of the number of instances per collection.}
\label{fig:workload}
\end{figure}

\begin{figure}[t]
\begin{center}
      \includegraphics[width=0.35\linewidth]{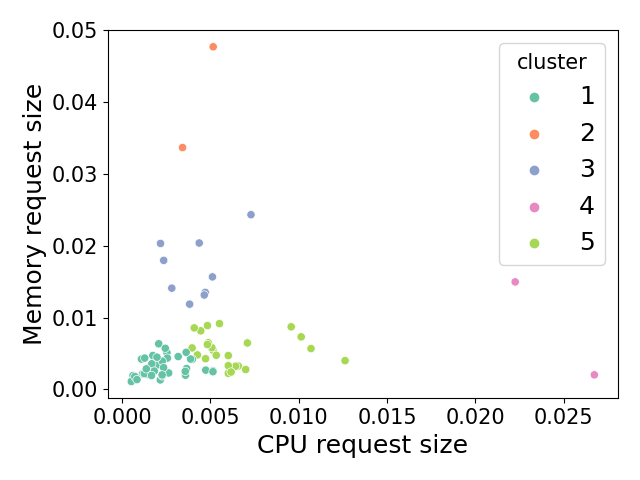}\hspace*{0.5cm}
    \includegraphics[width=0.35\linewidth]{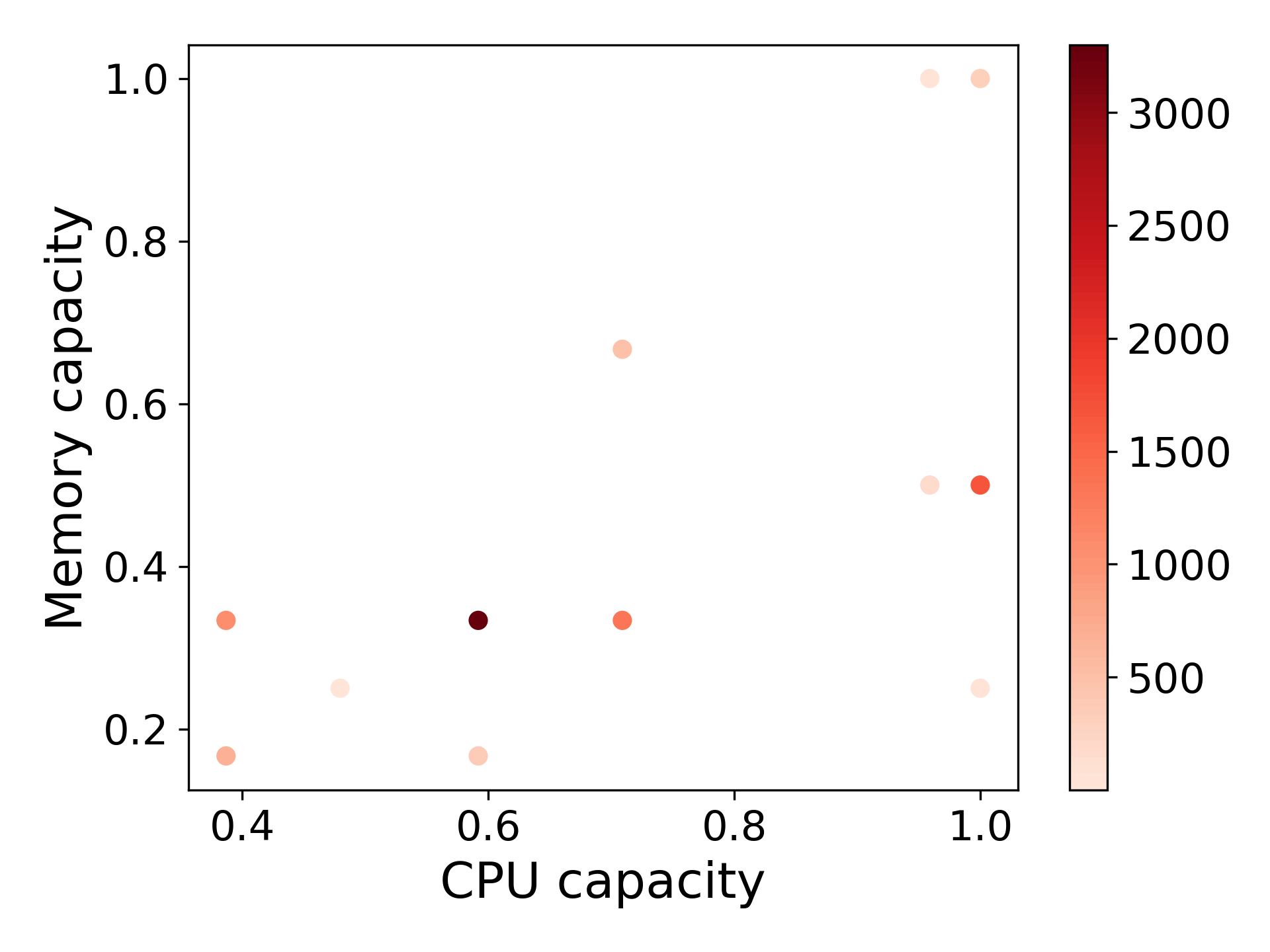}
\end{center}
\caption{Feature vectors for collections and machines: (left) individual collections classified by K-means clustering with $K=5$, and (right) machine types with counts. 
}
\label{fig:featureVec}
\end{figure}

In this section, we provide details about the experiments using the dataset \emph{cluster-data-2019}.

\paragraph{Basic Information about the Workload} For our experiments, we used the data collected over a time interval from the beginning of the trace to 5,000 seconds. The dataset comprises 9,526 machines that were active before the measurement interval commenced, with 71 collections enqueued during this period. The arrival pattern of collections over time is depicted in Figure~\ref{fig:workload} (left). Additionally, Figure~\ref{fig:workload} (right) illustrates the distribution of the number of instances per collection. It is notable that this distribution exhibits a heavy skew, with numerous collections comprising only a few instances, and a small fraction of collections comprising a significant number of instances.

\paragraph{Features of Jobs and Servers}
The dataset provides information about the CPU and memory capacity of each machine, as well as the CPU and memory request size for each instance. We utilize this information to construct feature vectors for both collections and machines.

For collections, we initially represent each collection by averaging the CPU and memory request sizes of its instances. We then employ the K-means clustering algorithm to group collections into five distinct classes based on these representations. The resulting representations of collections and their clustering into classes are depicted in Figure~\ref{fig:featureVec} (left). Each class of collections is characterized by the average CPU and memory request size of instances within that class.

Regarding machines, we identify 12 different classes based on their CPU and memory capacities, as illustrated in Figure~\ref{fig:featureVec} (right). In addition to considering CPU and memory capacity values, we also incorporate their inverse values, resulting in feature vectors of dimension $d=4$. This feature engineering approach is adopted to capture inverse relationships, which are crucial when utilizing a bilinear model.

\subsubsection{Rewards of Assignments}
The dataset provides information regarding the average number of cycles per instruction (CPI) for each assignment of an instance to a machine. The inverse CPI computed for an instance-machine pair indicates how efficiently the instance is executed by the machine. This performance metric relies on both the characteristics of the computing task and the machine itself.

We utilize the inverse CPIs to define the rewards of assignments. This is achieved by computing the mean and variance of observed rewards for each combination of a collection class and a machine class. For any combination where no assignments are observed in the data, we set the mean reward to zero.

In our simulations, we generate samples of rewards from a Gaussian distribution with means and variances set to the computed values.

\end{document}